\let\oldReturn\Return
\renewcommand{\Return}{\State\oldReturn}
\def\a{\mathbf{a}}
\def\b{\mathbf{b}}
\def\g{\mathbf{g}}
\def\\v{\boldsymbol{\upsilon}}
\def\v{\mathbf{v}}
\def\w{\mathbf{w}}
\def\x{\mathbf{x}}
\def\y{\mathbf{y}}
\def\A{\mathbf{A}}
\def\B{\mathbf{B}}
\def\I{\mathbf{I}}
\def\M{\mathbf{M}}
\def\X{\mathbf{X}}
\def\bupsilon{\boldsymbol{\upsilon}}
\long\def\comment#1{}
\newcommand{\ie}{\textit{i.e.}}
\newcommand{\eg}{\textit{e.g.}}
\newcommand{\argmin}{\mathop{ \arg\!\min}}
\newcommand{\orcid}[1]{\href{https://orcid.org/#1}{\textcolor[HTML]{A6CE39}{\aiOrcid}}}
\newcommand*\circled[1]{\tikz[baseline=(char.base)]{\node[shape=circle,draw,inner sep=0.6pt] (char) {#1};}}
\journalname{International Journal of Computer Vision}
\begin{document}

\title{MAP Inference via $\ell_2$-Sphere Linear Program Reformulation
\thanks{Baoyuan Wu was partially supported by Tencent AI Lab and King Abdullah University of Science
and Technology (KAUST). 
Li Shen was supported by Tencent AI Lab. 
Bernard Ghanem was supported by the King Abdullah University of Science and Technology (KAUST) Office of Sponsored Research (OSR).
Tong Zhang was supported by the Hong Kong University of Science and Technology (HKUST). 
Li Shen is the corresponding author.}
}

\author{\textbf{Baoyuan Wu}
\and
\textbf{Li Shen} \and 
\textbf{Tong Zhang} 
\and
\textbf{Bernard Ghanem} 
}

\institute{Baoyuan Wu \at
              Tencent AI Lab, Shenzhen 518000, China  \\
              \email{wubaoyuan1987@gmail.com}           
           \and
           Li Shen \at
           Tencent AI Lab, Shenzhen 518000, China \\
           \email{mathshenli@gmail.com}
           \and
           Tong Zhang \at Hong Kong University of Science and Technology, Hong Kong, China \\
           \email{tongzhang@tongzhang-ml.org}
           \and
           Bernard Ghanem \at King Abdullah University of Science and Technology, Thuwal 23955, Saudi Arabia \\
           \email{bernard.ghanem@kaust.edu.sa}
}

\maketitle

\begin{abstract}
Maximum a posteriori (MAP) inference is an important task for graphical models. Due to complex dependencies among variables in realistic models, finding an exact solution for MAP inference is often intractable. Thus, many approximation methods have been developed, among which the linear programming (LP) relaxation based methods show promising performance. 
However, one major drawback of LP relaxation is that it is possible to give fractional solutions.  
Instead of presenting a tighter relaxation, in this work we propose a continuous but equivalent reformulation of the original MAP inference problem, called LS-LP. We add the $\ell_2$-sphere constraint onto the original LP relaxation, leading to an intersected space with the local marginal polytope that is equivalent to the space of all valid integer label configurations. Thus, LS-LP is equivalent to the original MAP inference problem. 
We propose a perturbed alternating direction method of multipliers (ADMM) algorithm to optimize the LS-LP problem, by adding a sufficiently small perturbation $\epsilon$ onto the objective function and constraints. 
We prove that the perturbed ADMM algorithm globally converges to the $\epsilon$-Karush–Kuhn–Tucker ($\epsilon$-KKT) point of the LS-LP problem. 
The convergence rate will also be analyzed. 
Experiments on several benchmark datasets from Probabilistic Inference Challenge (PIC 2011) and OpenGM 2 show competitive performance of our proposed method against state-of-the-art MAP inference methods. 
\end{abstract}

\section{Introduction}
\label{sec: introduction}

Given the probability distribution of a graphical model, maximum a posteriori (MAP) inference aims to infer the most probable label configuration. MAP inference can be formulated as an integer linear program (ILP) \cite{variational-inference-book-2008}.
However, due to the integer constraint, the exact optimization of ILP 
is intractable in many realistic problems. 
To tackle it, a popular approach is relaxing ILP to a continuous linear program over a local marginal polytope, \ie, $\mathcal{L}_G$ (defined in Section \ref{sec: background}), called linear programming (LP) relaxation. 
The optimal solution to the LP relaxation will be obtained at the vertices of $\mathcal{L}_G$. 
It has been known \cite{variational-inference-book-2008} that all valid integer label configurations are at the vertices of $\mathcal{L}_G$, but not all vertices of $\mathcal{L}_G$ are integer, while some are fractional. Since LP relaxation is likely to give fractional solutions, the rounding method must be adopted to generate integer solutions. 
To alleviate this issue, intense efforts have been made to design tighter relaxations (\eg, high-order relaxation \cite{MAP-LP-thesis-2010}) based on LP relaxation, 
such that the proportion of fractional vertices of $\mathcal{L}_G$ can be reduced. 
However, the possibility of fractional solutions still exists. 
And, these tighter relaxations are often much more computationally expensive than the original LP relaxation. 
Moreover, there are also exact inference methods, such as branch-and-bound \cite{branch-and-bound-1960} and cutting-plane \cite{cutting-plane-1960}, which utilize LP relaxation as sub-routines, leading to much higher computational cost than approximate methods. 

Instead of proposing a new approximation with a tighter relaxation, we propose an exact reformulation of the original MAP inference problem. 
Specifically, we add a new constraint, called $\ell_2$-sphere \cite{WU-lpbox-ADMM-PAMI-2018}, onto the original LP relaxation problem. It enforces that the solution $\x \in \mathbb{R}^n$ should be on a $\ell_2$-sphere, \ie, $\parallel\x-\frac{1}{2}\parallel_2^2 = \frac{n}{4}$. 
We can prove that the intersection between the $\ell_2$-sphere constraint and the local polytope $\mathcal{L}_G$ is equivalent to the set of all possible label configurations of the original MAP inference problem, \ie, the constraint space of the ILP problem. 
Thus, the proposed formulation, dubbed LS-LP, is an equivalent but continuous reformulation of the ILP formulation for MAP inference. 
Furthermore, inspired by \cite{AD3-JMLR-2015} and \cite{WU-lpbox-ADMM-PAMI-2018}, we adopt the ADMM algorithm \cite{ADMM-boyd-2011}, to not only separate the different constraints, but also decompose variables to allow parallel inference by exploiting the factor graph structure. 
Although the $\ell_2$-sphere constraint is non-convex, we prove that the ADMM algorithm for the LS-LP problem with a sufficiently small perturbation $\epsilon$ will globally converges to the $\epsilon$-KKT \cite{KKT-1939,KKT-2014} point of the original LS-LP problem. 
The obvious advantages of the proposed LS-LP formulation and the corresponding ADMM algorithm include: \textbf{1)} compared to other LP relaxation based methods, our method directly gives the valid integer label configuration, without any rounding techniques as post-processing; 
\textbf{2)} compared to the exact methods like branch-and-bound \cite{branch-and-bound-1960} and cutting-plane \cite{cutting-plane-1960}, our method optimizes one single continuous problem once, rather than multiple times. 
Experiments on benchmarks from Probabilistic Inference Challenge (PIC 2011) \cite{PIC-2011} and OpenGM 2 \cite{opengm2-ijcv-2015} verify the competitive performance of LS-LP against state-of-the-art MAP inference methods. 

The main contributions of this work are three-fold. 
\textbf{1)} We propose a continuous but equivalent reformulation of the MAP inference problem. 
\textbf{2)} We present the ADMM algorithm for optimizing the perturbed LS-LP problem, which is proved to be globally convergent to the $\epsilon$-KKT point of the original LS-LP problem. The analysis of convergence rate is also presented. 
\textbf{3)} Experiments on benchmark datasets verify the competitive performance of our method compared to state-of-the-art MAP inference methods.

\section{Related Work}
\label{sec:2}

As our method is closely related to LP relaxation based MAP inference methods, here we mainly review MAP inference methods of this category. For other categories of methods, such as message passing and move making, we refer the readers to \cite{variational-inference-book-2008} and \cite{opengm2-ijcv-2015} for more details.
Although some off-the-shelf LP solvers can be used to optimize the LP relaxation problem, in many real-world applications the problem scale is too large to adopt these solvers. Hence, most methods focus on developing efficient algorithms to optimize the dual LP problem.
Block coordinate descent methods \cite{MPLP-NIPS-2007,TRWS-PAMI-2006} are fast, but they may converge to sub-optimal solutions. Sub-gradient based methods \cite{PSDD-ICCV-2007,DD-SG-2012} can converge to global solutions, but their convergence is slow. Their common drawback is the non-smoothness of the dual objective function. To handle this difficulty, some smoothing methods have been developed. The Lagrangian relaxation \cite{Lagrangian-relaxation-MAP-2007} method uses the smooth log-sum-exp function to approximate the non-smooth max function in the dual objective. A proximal regularization \cite{proximal-regularization-ICML-2010} or an $\ell_2$ regularization term \cite{smooth-strong-MAP-2015} is added to the dual objective. Moreover, the steepest $\epsilon$-descent method proposed in \cite{epsilon-descent-nips-2012} and \cite{FW-epsilon-descent-icml-2014} can accelerate the convergence of the standard sub-gradient based methods. Parallel MAP inference methods based on ADMM have also been developed to handle large-scale inference problems. For example, AD3 \cite{AD3-JMLR-2015,AD3-ICML-2011} and Bethe-ADMM \cite{Bethe-ADMM-UAI-2013} optimize the primal LP problem, while ADMM-dual \cite{dual-ADMM-ECML-2011} optimizes the dual LP problem.
The common drawback of these methods is that they are likely to produce fractional solutions, since the underlying problem is merely a relaxation to the MAP inference problem.

Another direction is pursuing tighter relaxations, such as high-order consistency \cite{MAP-LP-thesis-2010} and SDP relaxation \cite{SDP-relaxation-2002}.
But they are often more computationally expensive than LP relaxations.
In contrast, the formulation of the proposed LS-LP is an exact reformulation of the original MAP inference problem, and the adopted ADMM algorithm can explicitly produce valid integer label configurations, without any rounding operation. In comparison with other expensive exact MAP inference methods (\eg, Branch-and-Bound \cite{branch-and-bound-1960} and cutting plane \cite{cutting-plane-1960}), LS-LP is very efficient owing to the resulting parallel inference, similar to other ADMM based methods.

Another related work is $\ell_p$-Box ADMM \cite{WU-lpbox-ADMM-PAMI-2018}, which is a framework to optimize the general integer program. 
The proposed LS-LP is inspired by this framework, where the integer constraints are replaced by the intersection of two continuous constraints.
However, \textbf{1)} LS-LP is specifically designed for MAP inference, as it replaces the valid integer configuration space (\eg, $\{(0,1), (1,0)\}$ for the variable with binary states), rather than the whole binary space (\eg, $\{(0,0), $ $(0,1), (1,0), (1,1)\}$) as did in $\ell_p$-Box ADMM. 
\textbf{2)} LS-LP is tightly combined with LP relaxation, and the ADMM algorithm decomposes the problem into multiple simple sub-problems by utilizing the structure of the factor graph, which allows parallel inference for any type of inference problems (\eg, multiple variable states and high-order factors). In contrast, $\ell_p$-Box ADMM does not assume any special property for the objective function, and it optimizes all variable nodes in one sub-problem. Especially for large-scale models, the sub-problem involved in $\ell_p$-Box ADMM will be very cost. 
\textbf{3)} As LP relaxation is parameterized according to the factor graph, any type of graphical models (\eg, directed models, high-order potentials, asymmetric potentials) can be naturally handled by LS-LP. In contrast, $\ell_p$-Box ADMM needs to transform the inference objective based on MRF models to some simple forms (\eg, binary quadratic program (BQP)). However, the transformation is non-trivial in some cases. For example, if there are high-order potentials, the graphical model is difficult to input into a BQP problem.

\section{Background}
\label{sec: background}

\subsection{Factor Graph} 
\label{sec: subsec factor graph}

Denote $\mathbf{G} = \{\g_1, \g_2, \ldots, \g_N\}$ as a set of $N$ random variables in a discrete space $\mathcal{X} = \mathcal{X}_1 \times \ldots \times \mathcal{X}_N$, where $\mathcal{X}_i = \{0, \ldots, r_i -1\}$ with $r_i = |\mathcal{X}_i|$ being the number of possible states of $\g_i$.
The joint probability of $\mathbf{G}$ is formulated based on a factor graph $G$ \cite{koller-pgm-2009},
\begin{flalign}
P(\mathbf{G}) \propto \exp \big( \sum_{i\in V} \boldsymbol{\theta}_i(\g_i) + \sum_{\alpha \in F} \boldsymbol{\theta}_{\alpha}(\g_{\alpha}) \big),
\label{eq: P(X)}
\end{flalign}
where $G = (V, F, E)$ with $V = \{1, \ldots, N\}$ being the node set of variables, $F$ being the node set of factors, as well as the edge set $E \subseteq V \times F$ linking the variable and factor nodes. A simple MRF model and its factor graph are shown in Fig. \ref{fig: factor graph}(a,b).
We refer the readers to \cite{koller-pgm-2009} for the detailed definition of the factor graph.
$\g_{\alpha}$ indicates the label configuration of the factor $\alpha$, and its state will be determined according to the states of connected variable nodes, \ie, $\{\g_i | i \in \mathcal{N}_{\alpha} \}$, with $\mathcal{N}_{\alpha}$ being the set of neighborhood variable nodes of the factor $\alpha$. 
$\boldsymbol{\theta}_i(\cdot)$ denotes the unary log potential (logPot) function, while $\boldsymbol{\theta}_{\alpha}(\cdot)$ indicates the factor logPot function.

\subsection{MAP Inference as Linear Program}

Given $P(\mathbf{G})$, an important task is to find the most probable label configuration of $\mathbf{G}$, referred to as MAP inference,
\begin{flalign}
\text{MAP}(\boldsymbol{\theta}) = \max_{\mathbf{G} \in \mathcal{X}} \sum_{i\in V} \boldsymbol{\theta}_i(\g_i) + \sum_{\alpha \in F} \boldsymbol{\theta}_{\alpha}(\g_{\alpha}).
\label{eq: MAP inference}
\end{flalign}

Eq. (\ref{eq: MAP inference}) can be reformulated as the integer linear program (ILP) \cite{variational-inference-book-2008}, 
\begin{flalign}
&\text{ILP}(\boldsymbol{\theta})
  = \max_{\boldsymbol{\mu}} \sum_{i\in V} \boldsymbol{\theta}_i^\top \boldsymbol{\mu}_i +
 \sum_{\alpha \in F} \boldsymbol{\theta}_{\alpha}^\top \boldsymbol{\mu}_{\alpha}
 =
  \max_{\boldsymbol{\mu}}  \langle \boldsymbol{\theta}, \boldsymbol{\mu} \rangle,
  \label{eq: MAP inference ILP}
 \\
 & \text{s.t.} ~  \boldsymbol{\mu} \in \mathcal{L}_G \cap \{0,1\}^{|\boldsymbol{\mu}|},
   \nonumber
\end{flalign}
where $\boldsymbol{\theta} = (\ldots;\boldsymbol{\theta}_i; \ldots; \boldsymbol{\theta}_{\alpha}; \ldots), i\in V, \alpha \in F $ denotes the log potential (logPot) vector, derived from $\boldsymbol{\theta}_i(\g_i)$ and $\boldsymbol{\theta}_{\alpha}(\g_{\alpha})$.
$\boldsymbol{\mu} = [\boldsymbol{\mu}_V; \boldsymbol{\mu}_F]$, where $\boldsymbol{\mu}_V = [\boldsymbol{\mu}_1; \ldots; \boldsymbol{\mu}_{|V|}]$ and $\boldsymbol{\mu}_F = [\boldsymbol{\mu}_1; \ldots; \boldsymbol{\mu}_{|F|}]$.
$\boldsymbol{\mu}_i \in \{0,1\}^{|\mathcal{X}_i|}$ indicates the label vector corresponding to $\g_i$: if the state of $\g_i$ is $t$, then $\boldsymbol{\mu}_i(t)=1$, while all other entries are $0$.
Similarly, $\boldsymbol{\mu}_{\alpha} \in \{0,1\}^{|\mathcal{X}_{\alpha}|}$ indicates the label vector corresponding to $\g_{\alpha}$.
The local marginal polytope is defined as follows,
\begin{flalign}
\mathcal{L}_G = \big\{ \boldsymbol{\mu} \big| & \boldsymbol{\mu}_{\alpha} \in \Delta^{|\boldsymbol{\mu}_{\alpha}|}, \forall \alpha \in F; 
\\
&\boldsymbol{\mu}_i  = \M_{i\alpha} \boldsymbol{\mu}_{\alpha}, ~\forall (i, \alpha) \in E \big\}.
\nonumber
\end{flalign}
with $\Delta^{|\a|} = \{\a | \boldsymbol{1}^\top \a = 1, \a \geq \boldsymbol{0} \}$ being the probability simplex, 
and the second constraint ensures the local consistency between $\boldsymbol{\mu}_i$ and $\boldsymbol{\mu}_{\alpha}$. 
$\M_{i\alpha} \in \{0, 1\}^{|\mathcal{X}_i| \times |\mathcal{X}_{\alpha}|}$ of the local consistency constraint included in $\mathcal{L}_G$ is defined as: 
the entry of $\M_{i\alpha}$ is $1$ if $\g_{\alpha} \sim \g_i$, where $\g_{\alpha} \sim \g_i$ indicates the state of $\g_i$ and the state of the corresponding element in $\g_{\alpha}$ are the same; otherwise, the entry is $0$. 
For example, we consider a binary-state variable node $\boldsymbol{\mu}_i \in \{0,1\}^2$ and a pairwise factor node $\boldsymbol{\mu}_{\alpha} \in \{0,1\}^4$ connected to two variable nodes (the variable node $i$ is the first). 
The first entry of $\boldsymbol{\mu}_i$ indicates the score of choosing state $0$, while the second entry corresponds to that of choosing state $1$.
The four entries of $\boldsymbol{\mu}_{\alpha}$ indicate the scores of four label configurations of two connected variables, \ie, $(0,0), (0,1), (1,0), (1,1)$. 
In this case, $\M_{i \alpha} = [1, 1, 0, 0; 0, 0, 1, 1]$.

\begin{figure}[t]
\centering
\includegraphics[width=0.48\textwidth, height=1.35in]{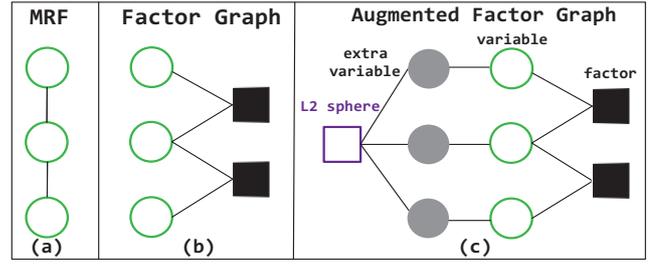}
\caption{ An example of (a) MRF, (b) factor graph corresponding to LP and (c) augmented factor graph corresponding to LS-LP. }
\label{fig: factor graph}
\vspace{-1em}
\end{figure}

Moreover, Eq. (\ref{eq: MAP inference}) can also be rewritten as 
\begin{flalign}
\text{MAP}(\boldsymbol{\theta})
 = \max_{\boldsymbol{\mu} \in \mathcal{M}_G}  \langle \boldsymbol{\theta}, \boldsymbol{\mu} \rangle, 
\label{eq: MAP inference with M_G}
\end{flalign}
where the marginal polytope is defined as follows,
\begin{flalign}
\mathcal{M}_G = \{ \boldsymbol{\mu} ~|~ \exists P(X), ~ \text{such that} ~ \boldsymbol{\mu}_i, \boldsymbol{\mu}_{\alpha} \in \mathcal{L}_G \}.
\end{flalign} 
Solving $\text{MAP}(\boldsymbol{\theta})$ is difficult (NP-hard in general), especially for large scale problems.
Instead, the approximation over $\mathcal{L}_G$ is widely adopted, as follows:
\begin{flalign}
\text{LP}(\boldsymbol{\theta})
 & =  \max_{\boldsymbol{\mu} \in \mathcal{L}_G}  \langle \boldsymbol{\theta}, \boldsymbol{\mu} \rangle
  \geq \text{ILP}(\boldsymbol{\theta}) = \text{MAP}(\boldsymbol{\theta}),
\label{eq: MAP inference over L(G)}
\end{flalign}
which is called LP relaxation. 
Note that here $\boldsymbol{\mu}_i$ and $\boldsymbol{\mu}_{\alpha}$ are continuous variables, and they are considered as local marginals of $\g_i$ and $\g_{\alpha}$, respectively.

According to \cite{variational-inference-book-2008}, the characteristics of $\text{LP}(\boldsymbol{\theta})$, $\text{MAP}(\boldsymbol{\theta})$ and their relationships are briefly summarized in Lemma \ref{lemma1: L(G) and M(G)}.

\begin{lemma} \cite{variational-inference-book-2008}
The relationship between $\mathcal{M}_G$ and $\mathcal{L}_G$, and that  between $\text{MAP}(\boldsymbol{\theta})$ and $\text{LP}(\boldsymbol{\theta})$ are as follows.
\begin{itemize}
\item $\mathcal{M}_G \subseteq \mathcal{L}_G$;
\item $\text{MAP}(\boldsymbol{\theta}) \leq \text{LP}(\boldsymbol{\theta})$;
\item All vertices of $\mathcal{M}_G$ are integer, while  $\mathcal{L}_G$ includes both integer and fractional vertices. And the set of integer vertices of  $\mathcal{L}_G$ is same with the set of the vertices of $\mathcal{M}_G$. All non-vertices in $\mathcal{M}_G$ and $\mathcal{L}_G$ are fractional points.
\item Since both $\mathcal{M}_G$ and $\mathcal{L}_G$ are convex polytopes, the global solutions of $\text{MAP}(\boldsymbol{\theta})$ and $\text{LP}(\boldsymbol{\theta})$ will be on the vertices of $\mathcal{M}_G$ and $\mathcal{L}_G$, respectively.
\item The global solution $\boldsymbol{\mu}^*$ of $\text{LP}(\boldsymbol{\theta})$ can be fractional or integer. If it is integer, then it is also the global solution of $\text{MAP}(\boldsymbol{\theta})$.
\end{itemize}
\label{lemma1: L(G) and M(G)}
\end{lemma}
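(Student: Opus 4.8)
The plan is to establish the five bullet points in a dependency order, deriving the easier set-relations and optimization facts first and reserving the combinatorial characterization of the vertices for last. First I would prove $\mathcal{M}_G \subseteq \mathcal{L}_G$ directly from the definitions: any $\boldsymbol{\mu} \in \mathcal{M}_G$ is by definition the collection of marginals of some distribution $P(X)$, so each $\boldsymbol{\mu}_\alpha$ is a genuine probability vector (hence lies in the simplex $\Delta^{|\boldsymbol{\mu}_\alpha|}$) and each $\boldsymbol{\mu}_i$ is obtained from $\boldsymbol{\mu}_\alpha$ by summing out the other variables in the factor, which is exactly the marginalization encoded by $\boldsymbol{\mu}_i = \M_{i\alpha}\boldsymbol{\mu}_\alpha$. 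Thus every realizable marginal satisfies the defining constraints of $\mathcal{L}_G$. The inequality $\text{MAP}(\boldsymbol{\theta}) \leq \text{LP}(\boldsymbol{\theta})$ then follows immediately, since both are maxima of the same linear functional $\langle \boldsymbol{\theta}, \boldsymbol{\mu}\rangle$ but over nested feasible sets $\mathcal{M}_G \subseteq \mathcal{L}_G$. The statement that the global optima sit on vertices is just the fundamental theorem of linear programming: a linear objective over a nonempty bounded polytope attains its maximum at an extreme point.

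The heart of the argument is the vertex characterization. Here I would exploit that both polytopes live inside the box $[0,1]^{|\boldsymbol{\mu}|}$. The key observation is that any $0/1$ vector belonging to a polytope contained in the hypercube is automatically a vertex of that polytope, because it is already an extreme point of the cube and cannot be written as a nontrivial convex combination of other cube points. Consequently no non-vertex can be integer, which proves that all non-vertices of $\mathcal{M}_G$ and $\mathcal{L}_G$ are fractional. For $\mathcal{M}_G$ itself, I would recall that it is precisely the convex hull of the mean-parameter vectors of the deterministic configurations, i.e. of the $0/1$ indicator vectors of valid label assignments; each such vector is extreme by the box argument, so the vertices of $\mathcal{M}_G$ are exactly the integer configurations.

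The delicate step is identifying the integer vertices of $\mathcal{L}_G$ with the vertices of $\mathcal{M}_G$, for which I would argue both inclusions. Any vertex of $\mathcal{M}_G$ is a valid configuration, hence lies in $\mathcal{L}_G$ (by the first bullet) and is integer, hence a vertex of $\mathcal{L}_G$. Conversely, given an integer point $\boldsymbol{\mu} \in \mathcal{L}_G$, the simplex constraint forces each block $\boldsymbol{\mu}_i$ and $\boldsymbol{\mu}_\alpha$ to be a single $0/1$ indicator, so each variable node selects exactly one state and each factor node selects exactly one joint configuration; the local consistency relations $\boldsymbol{\mu}_i = \M_{i\alpha}\boldsymbol{\mu}_\alpha$ then force the factor's chosen configuration to agree with the states chosen by its incident variables. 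This yields a globally consistent labeling, realized by the point mass on that configuration, placing $\boldsymbol{\mu}$ in $\mathcal{M}_G$. This coincidence is what makes $\mathcal{L}_G$ a genuine relaxation of $\mathcal{M}_G$ that retains all integer extreme points while possibly introducing extra fractional ones (the standard frustrated-cycle example shows these extra fractional vertices do occur for non-tree factor graphs).

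The last bullet is then a short consequence: if the LP optimizer $\boldsymbol{\mu}^*$ happens to be integer, then it is a vertex of $\mathcal{L}_G$ and therefore, by the coincidence just established, a vertex of $\mathcal{M}_G$, so it is feasible for the MAP problem. Combining $\langle\boldsymbol{\theta},\boldsymbol{\mu}^*\rangle = \text{LP}(\boldsymbol{\theta}) \geq \text{MAP}(\boldsymbol{\theta}) \geq \langle\boldsymbol{\theta},\boldsymbol{\mu}^*\rangle$ forces equality, so $\boldsymbol{\mu}^*$ also solves MAP. I expect the main obstacle to be the careful verification that integrality together with \emph{local} consistency already implies \emph{global} consistency --- i.e. that no integer point of $\mathcal{L}_G$ can be locally consistent yet fail to arise from an actual labeling --- since this is exactly where the gap between the relaxation and the true marginal polytope is confined to the fractional points.
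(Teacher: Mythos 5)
The paper offers no proof of this lemma: it is imported wholesale from the cited reference \cite{variational-inference-book-2008}, so there is no in-paper argument to compare against. Your reconstruction is correct and is essentially the canonical textbook derivation that the citation points to: containment of $\mathcal{M}_G$ in $\mathcal{L}_G$ from the definitions, the inequality from nesting of feasible sets, attainment at vertices from the fundamental theorem of linear programming, the ``any $0/1$ point of a polytope inside the cube is extreme'' observation for the vertex characterization, and the sandwich argument for the final bullet. You also correctly isolate and resolve the one step that genuinely needs care, namely that an integer point of $\mathcal{L}_G$ is forced by the simplex and local-consistency constraints to be the mean parameter of a single globally consistent labeling, hence lies in $\mathcal{M}_G$; this is exactly the argument the paper itself reuses later in Lemma~\ref{lemma2: equivalence of different constraint spaces} to show $\mathcal{C}_1 \equiv \mathcal{C}_3$.
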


\subsection{Kurdyka-Lojasiewicz Inequality}
\label{sec: subsec KL property}

The Kurdyka-Lojasiewicz inequality was firstly proposed in \cite{lojasiewicz1963propriete}, and it has been widely used in many recent works \cite{attouch2010proximal,wotao-yin-arxiv-2015,admm-nonconvex-siam-2015} for the convergence analysis of non-convex problems. Since it will also be used in the later convergence analysis of our algorithm, it is firstly produced here, as shown in Definition \ref{definition: KL property}.

\begin{definition}\cite{attouch2010proximal}
A function $f : \mathbb{R}^n \rightarrow \mathbb{R} \cup {+\infty}$ is said to have the Kurdyka-Lojasiewicz (KL) property at $x^* \in \text{dom} (\partial f)$ ($\text{dom}(\cdot)$ denotes the domain of function, $\partial$ indicates the sub-gradient operator), if the following two conditions hold
\begin{itemize}
    \item there exist a constant $\eta \in (0, +\infty]$, a neighborhood $\mathcal{V}_{x^*}$ of $x^*$, as well as a continuous concave function $\varphi: [0, \eta) \rightarrow \mathbb{R}_{+}$, with $\varphi(0)=0$ and $\varphi$ is differentiable on $(0,\eta)$ with positive derivatives.
    \item $\forall x \in \mathcal{V}_{x^*}$ satisfying $f(x^*) < f (x) < f(x^*) + \eta$, the Kurdyka-Lojasiewicz inequality holds
    \begin{flalign}
     \varphi'(f(x) - f(x^*)) \text{dist}(0, \partial f(x)) \geq 1.
    \end{flalign}
\end{itemize}
\label{definition: KL property}
\end{definition}

\noindent
{\bf Remark.} 
According to \cite{attouch2010proximal,bolte2007lojasiewicz,bolte2007clarke}, if $f$ is semi-algebraic, then it satisfies the KL property with $\varphi(s) = c s^{1-p}$, where $p \in [0,1)$ and $c>0$ are constants. This point will be used in later analysis of convergence.

\section{MAP Inference via $\ell_2$-sphere Linear Program Reformulation}

\subsection{Equivalent Reformulation}

Firstly, we introduce the $\ell_2$-sphere constraint \cite{WU-lpbox-ADMM-PAMI-2018},
\begin{flalign}
 \mathcal{S} = \big\{\x \in \mathbb{R}^n \mid ~ \parallel \x - \frac{1}{2} \boldsymbol{1}\parallel_2^2 = \frac{n}{4} \big\}.
 \label{eq: S}
\end{flalign}
Note that $\mathcal{S}$ is defined with respect to the vector $\x$, rather than individual scalars $x_i, i=1,\ldots,n$. 
We propose to add the $\ell_2$-sphere constraint onto the variable nodes $\boldsymbol{\mu}_V$. Combining this with LP relaxation (see Eq. (\ref{eq: MAP inference over L(G)}) ), we propose a new formulation for MAP inference,
\begin{flalign}
\text{LS-LP}(\boldsymbol{\theta})
 & =  \max_{\boldsymbol{\mu}}  \langle \boldsymbol{\theta}, \boldsymbol{\mu} \rangle , 
 ~~
 \text{s.t.} ~  \boldsymbol{\mu} \in \mathcal{L}_G,
 \boldsymbol{\mu}_V \in \mathcal{S}.
  \label{eq: MAP inference over L(G) and S}
\end{flalign}
Due to the non-convex constraint $\mathcal{S}$, it is no longer a linear program. However, to emphasize its relationship to LP relaxation, we still denote it as a $\ell_2$-sphere constrained linear program (LS-LP) reformulation. More importantly, as shown in Proposition \ref{proposition1: equivalence of LP-LS and MAP}, LS-LP is equivalent to the original MAP inference problem, rather than a relaxation as in LP. 
Inspired by the constraint separation in $\ell_p$-Box ADMM \cite{WU-lpbox-ADMM-PAMI-2018}, we introduce the extra variable $\boldsymbol{\upsilon}$ to reformulate (\ref{eq: MAP inference over L(G) and S}) as 
\begin{flalign}
& \text{LS-LP}(\boldsymbol{\theta})
  =  \max_{\boldsymbol{\mu}, \boldsymbol{\upsilon}}  \langle \boldsymbol{\theta}, \boldsymbol{\mu} \rangle = \min_{\boldsymbol{\mu}, \boldsymbol{\upsilon}}  \langle -\boldsymbol{\theta}, \boldsymbol{\mu} \rangle,
  \label{eq: MAP inference over L(G) and v over S}
\\
& \text{s.t.} ~   \boldsymbol{\mu} \in \mathcal{L}_G,
 \boldsymbol{\upsilon} \in \mathcal{S}, \boldsymbol{\mu}_{i} = \boldsymbol{\upsilon}_{i}, \forall i \in V,
\nonumber
\end{flalign}
where $\boldsymbol{\upsilon} = [\boldsymbol{\upsilon}_1; \ldots; \boldsymbol{\upsilon}_{i}; \ldots; \boldsymbol{\upsilon}_{|V|}], i \in V$ is the concatenated vector of all extra variable nodes.
The combination of the original factor graph and these extra variable nodes is referred to as {\it augmented factor graph} (AFG). 
An example of AFG corresponding to Problem (\ref{eq: MAP inference over L(G) and v over S}) is shown in Figure \ref{fig: factor graph}(c). The gray circles correspond to extra variables $\boldsymbol{\upsilon}$, and connections to the purple box indicate that $\boldsymbol{\upsilon} \in \mathcal{S}$.
Note that AFG does not satisfy the definition of the standard factor graph, where connections only exist between variables nodes and factor nodes. However, AFG provides a clear picture of the structure of LS-LP and the node relationships.
The proposed LS-LP problem is equivalent to the original MAP inference problem, as shown in Proposition \ref{proposition1: equivalence of LP-LS and MAP}. 
It means that the global solutions of this two problems are equivalent.

\begin{lemma}
 The following constraint spaces are equivalent,
\begin{flalign}
 & ~ \mathcal{C}_1 = \{ \boldsymbol{\mu} \mid \boldsymbol{\mu} \in \mathcal{L}_G \cap \{0,1\}^{|\boldsymbol{\mu}|} \}
\nonumber
 \\
  \equiv & ~
 \mathcal{C}_2 = \{ \boldsymbol{\mu} \mid \boldsymbol{\mu} \in \mathcal{L}_G ~\text{and}~ \boldsymbol{\mu}_V \in \mathcal{S} \}
 \nonumber
\\
\equiv &~
\mathcal{C}_3 = \{ \boldsymbol{\mu} \mid \boldsymbol{\mu} \in \mathcal{M}_G \cap \{0,1\}^{|\boldsymbol{\mu}|} \}.
\label{eq: equivalence in Proposition 1}
\end{flalign}
\label{lemma2: equivalence of different constraint spaces}
\end{lemma}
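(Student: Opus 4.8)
The plan is to prove the two equivalences $\mathcal{C}_1 \equiv \mathcal{C}_2$ and $\mathcal{C}_1 \equiv \mathcal{C}_3$ separately; transitivity then yields the full chain. The equivalence $\mathcal{C}_1 \equiv \mathcal{C}_3$ is essentially a repackaging of Lemma \ref{lemma1: L(G) and M(G)}. Since $\mathcal{M}_G \subseteq \mathcal{L}_G$, the inclusion $\mathcal{C}_3 \subseteq \mathcal{C}_1$ is immediate. For the reverse, I would take $\boldsymbol{\mu} \in \mathcal{C}_1$; being integer, it cannot be a non-vertex of $\mathcal{L}_G$ (all non-vertices are fractional), so it is an integer vertex of $\mathcal{L}_G$. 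By Lemma \ref{lemma1: L(G) and M(G)} the integer vertices of $\mathcal{L}_G$ coincide with the vertices of $\mathcal{M}_G$, which lie in $\mathcal{M}_G$ and are integer; hence $\boldsymbol{\mu} \in \mathcal{C}_3$, giving $\mathcal{C}_1 \subseteq \mathcal{C}_3$.

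The real content is $\mathcal{C}_1 \equiv \mathcal{C}_2$, which I would prove by double inclusion, resting on the algebraic observation that for $\x \in [0,1]^n$,
\begin{flalign}
\| \x - \tfrac{1}{2}\boldsymbol{1} \|_2^2 - \tfrac{n}{4} = \sum_{j=1}^n x_j(x_j - 1).
\nonumber
\end{flalign}
Each summand is $\leq 0$ on $[0,1]$, so the sphere equation $\| \x - \frac{1}{2}\boldsymbol{1} \|_2^2 = \frac{n}{4}$ holds if and only if every $x_j(x_j-1) = 0$, i.e. $\x \in \{0,1\}^n$. For $\mathcal{C}_1 \subseteq \mathcal{C}_2$, any $\boldsymbol{\mu} \in \mathcal{C}_1$ has binary $\boldsymbol{\mu}_V$, so the identity (applied with $n = |\boldsymbol{\mu}_V|$) gives $\boldsymbol{\mu}_V \in \mathcal{S}$, hence $\boldsymbol{\mu} \in \mathcal{C}_2$. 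For $\mathcal{C}_2 \subseteq \mathcal{C}_1$, membership in $\mathcal{L}_G$ forces each variable marginal into its probability simplex $\Delta^{|\boldsymbol{\mu}_i|}$, so $\boldsymbol{\mu}_V \in [0,1]^{|\boldsymbol{\mu}_V|}$; combining this with $\boldsymbol{\mu}_V \in \mathcal{S}$, the identity yields $\boldsymbol{\mu}_V \in \{0,1\}^{|\boldsymbol{\mu}_V|}$.

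What remains, and what I expect to be the main obstacle, is promoting binary variable marginals to binary factor marginals $\boldsymbol{\mu}_F$, so that the full vector lies in $\{0,1\}^{|\boldsymbol{\mu}|}$. Here I would invoke the local consistency constraints $\boldsymbol{\mu}_i = \M_{i\alpha}\boldsymbol{\mu}_\alpha$. Fixing a factor $\alpha$, for each neighbor $i \in \mathcal{N}_\alpha$ let $t_i$ be the unique state with $(\boldsymbol{\mu}_i)_{t_i} = 1$; the consistency relation then forces the total $\boldsymbol{\mu}_\alpha$-mass on configurations where variable $i$ takes any state other than $t_i$ to vanish. Since $\boldsymbol{\mu}_\alpha \in \Delta^{|\boldsymbol{\mu}_\alpha|}$ is nonnegative and normalized, all of its mass must concentrate on the single joint configuration $(t_i)_{i \in \mathcal{N}_\alpha}$, so $\boldsymbol{\mu}_\alpha$ is the corresponding indicator vector and hence binary. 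Ranging over all $\alpha \in F$ gives $\boldsymbol{\mu}_F \in \{0,1\}^{|\boldsymbol{\mu}_F|}$ and completes $\mathcal{C}_2 \subseteq \mathcal{C}_1$. The point to handle with care is that this propagation uses only nonnegativity and the normalization built into $\mathcal{L}_G$, so no extra structural assumption on the factor graph is needed.
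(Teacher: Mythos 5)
Your proof is correct and follows essentially the same route as the paper's: the $[0,1]$-box bound combined with the $\ell_2$-sphere equality forces the variable marginals to be binary, local consistency plus the simplex constraint then propagates integrality to the factor marginals, and $\mathcal{C}_1 \equiv \mathcal{C}_3$ is read off from Lemma \ref{lemma1: L(G) and M(G)}. The only difference is that you spell out the factor-marginal propagation step (mass concentration on the single consistent configuration) in more detail than the paper, which states that step in one line.
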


\begin{proof}
We start from $\mathcal{C}_2$, where we have
\begin{flalign}
\boldsymbol{\mu}_V \in \mathcal{S} \Longleftrightarrow \sum_{i \in V} \parallel \boldsymbol{\mu}_{i} - \frac{1}{2} \parallel _2^2 = \frac{\sum_{i \in V} |\mathcal{X}_{i}|}{4}.
\label{eq: constraint S to detailed formulation}
\end{flalign}
Besides, the following relations hold
\begin{flalign}
&\boldsymbol{\mu} \in \mathcal{L}_G 
\Longleftrightarrow 
\boldsymbol{\mu}_{\alpha} \in \Delta^{|\mathcal{X}_{\alpha}|} ~\text{and}~ \boldsymbol{\mu}_i = \M_{i \alpha} \boldsymbol{\mu}_{\alpha} 
\\
\Rightarrow &
\boldsymbol{\mu}_i \in [0,1]^{|\mathcal{X}_i|}
\Rightarrow 
\parallel \boldsymbol{\mu}_{i} - \frac{1}{2} \parallel _2^2 \leq
\frac{ |\mathcal{X}_{i}|}{4}, 
\nonumber
\end{flalign}
$\forall i \in V, \forall (i, \alpha) \in E$. 
The equation in the last relation holds if and only if $\boldsymbol{\mu}_{i} \in \{0, 1\}^{|\mathcal{X}_i|}$. 
Combining with (\ref{eq: constraint S to detailed formulation}), we conclude that 
$\boldsymbol{\mu}_{i} \in \{0, 1\}$ holds $\forall i \in V$. 
Consequently, utilizing the local consistency constraint 
$\boldsymbol{\mu}_i = \M_{i \alpha} \boldsymbol{\mu}_{\alpha}$, we obtain that 
$\boldsymbol{\mu}_{\alpha} \in \{0, 1\}$ also holds $\forall \alpha \in F$. 
Thus, we have $\boldsymbol{\mu} \in \{0,1\}^{|\boldsymbol{\mu}|}$. 
Then, the relation $\mathcal{C}_1 \equiv \mathcal{C}_2$ is proved.

Besides, as shown in Lemma \ref{lemma1: L(G) and M(G)}, the set of integer vertices of  $\mathcal{L}_G$ is same with the one of $\mathcal{M}_G$, and all non-vertices in $\mathcal{M}_G$ and $\mathcal{L}_G$ are fractional points. 
Thus, it is easy to know $\mathcal{C}_1 \equiv \mathcal{C}_3$. 
Hence the proof is finished.
\end{proof}

\begin{theorem}
Utilizing Lemma \ref{lemma2: equivalence of different constraint spaces}, the aforementioned MAP inference problems have the following relationships,
\begin{flalign}
\text{LS-LP}(\boldsymbol{\theta}) = \text{ILP}(\boldsymbol{\theta}) = \text{MAP}(\boldsymbol{\theta}) \leq \text{LP}(\boldsymbol{\theta}).
\label{eq: equivalence of LP-LS and MAP}
\end{flalign}
\label{proposition1: equivalence of LP-LS and MAP}
\end{theorem}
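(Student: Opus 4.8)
The plan is to read all four quantities as maxima of the \emph{same} linear objective $\langle \boldsymbol{\theta}, \boldsymbol{\mu} \rangle$ over four different feasible sets, so that the entire statement collapses to a comparison of those sets, with Lemma~\ref{lemma2: equivalence of different constraint spaces} doing essentially all of the work. Concretely, $\text{LS-LP}(\boldsymbol{\theta})$ in (\ref{eq: MAP inference over L(G) and S}) maximizes over $\mathcal{C}_2 = \{\boldsymbol{\mu} \mid \boldsymbol{\mu} \in \mathcal{L}_G, \boldsymbol{\mu}_V \in \mathcal{S}\}$, $\text{ILP}(\boldsymbol{\theta})$ in (\ref{eq: MAP inference ILP}) maximizes over $\mathcal{C}_1 = \mathcal{L}_G \cap \{0,1\}^{|\boldsymbol{\mu}|}$, while $\text{MAP}(\boldsymbol{\theta})$ and $\text{LP}(\boldsymbol{\theta})$ maximize over $\mathcal{M}_G$ and $\mathcal{L}_G$ respectively.

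First I would establish $\text{LS-LP}(\boldsymbol{\theta}) = \text{ILP}(\boldsymbol{\theta})$. These are the maxima of one and the same function over $\mathcal{C}_2$ and $\mathcal{C}_1$, and since Lemma~\ref{lemma2: equivalence of different constraint spaces} gives $\mathcal{C}_1 \equiv \mathcal{C}_2$, the two optimal values must coincide; no optimization argument beyond ``maximizing a fixed objective over identical sets yields identical optima'' is required. Next, for $\text{ILP}(\boldsymbol{\theta}) = \text{MAP}(\boldsymbol{\theta})$, I would invoke Lemma~\ref{lemma1: L(G) and M(G)}: the maximum of the linear objective over the convex polytope $\mathcal{M}_G$ is attained at a vertex, and every vertex of $\mathcal{M}_G$ is integer, hence lies in $\mathcal{C}_3 = \mathcal{M}_G \cap \{0,1\}^{|\boldsymbol{\mu}|}$. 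Therefore $\max_{\boldsymbol{\mu} \in \mathcal{M}_G} \langle \boldsymbol{\theta}, \boldsymbol{\mu} \rangle = \max_{\boldsymbol{\mu} \in \mathcal{C}_3} \langle \boldsymbol{\theta}, \boldsymbol{\mu} \rangle$, and combining with the relation $\mathcal{C}_1 \equiv \mathcal{C}_3$ from Lemma~\ref{lemma2: equivalence of different constraint spaces} closes the chain back to $\text{ILP}(\boldsymbol{\theta})$.

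Finally, $\text{MAP}(\boldsymbol{\theta}) \leq \text{LP}(\boldsymbol{\theta})$ is immediate from $\mathcal{M}_G \subseteq \mathcal{L}_G$ (Lemma~\ref{lemma1: L(G) and M(G)}), since maximizing the same objective over a larger feasible region cannot decrease the optimal value. I expect no genuine obstacle here: all the difficulty has been isolated into the set-equivalence of Lemma~\ref{lemma2: equivalence of different constraint spaces}, which is already proved. The one point meriting care is the passage from the continuous maximization over $\mathcal{M}_G$ to the integer set $\mathcal{C}_3$, which relies on vertex-attainment of a linear program together with the integrality of all vertices of $\mathcal{M}_G$; everything else is bookkeeping, and the result follows by concatenating the three established relations.
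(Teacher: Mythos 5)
Your proof is correct and follows essentially the same route as the paper: both arguments reduce all four quantities to maxima of the same linear objective over different feasible sets, use the vertex-attainment and integrality facts of Lemma \ref{lemma1: L(G) and M(G)} to pass from $\mathcal{M}_G$ to its integer points, and then invoke the set equivalences of Lemma \ref{lemma2: equivalence of different constraint spaces} to identify the optima. The only cosmetic difference is that you route the chain explicitly through $\mathcal{C}_1$ to re-derive $\text{ILP}(\boldsymbol{\theta}) = \text{MAP}(\boldsymbol{\theta})$, whereas the paper cites that equality together with $\text{MAP}(\boldsymbol{\theta}) \leq \text{LP}(\boldsymbol{\theta})$ directly from Eq. (\ref{eq: MAP inference over L(G)}).
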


\begin{proof}
According to Lemma \ref{lemma1: L(G) and M(G)}.3 and \ref{lemma1: L(G) and M(G)}.4, as well as $\mathcal{C}_2 \equiv \mathcal{C}_3$ in Lemma \ref{lemma2: equivalence of different constraint spaces} (see Eq. (\ref{eq: equivalence in Proposition 1})), we have
\begin{flalign}
&\max_{\boldsymbol{\mu} \in \mathcal{M}_G} \langle \boldsymbol{\theta}, \boldsymbol{\mu} \rangle
=
\max_{\boldsymbol{\mu} \in \mathcal{C}_3 } \langle \boldsymbol{\theta}, \boldsymbol{\mu} \rangle 
= 
\max_{\boldsymbol{\mu} \in \mathcal{C}_2 } \langle \boldsymbol{\theta}, \boldsymbol{\mu} \rangle
\\
&  \Longleftrightarrow 
  \text{MAP}(\boldsymbol{\theta}) = \text{LS-LP}(\boldsymbol{\theta}).
\end{flalign}
Combining with $\text{MAP}(\boldsymbol{\theta}) = \text{ILP}(\boldsymbol{\theta}) \leq \text{LP}(\boldsymbol{\theta})$ (see Eq. (\ref{eq: MAP inference over L(G)})), the proof is finished.
\end{proof}

\subsection{A General Form and KKT Conditions}

For clarity, we firstly simplify the notations and formulations in Eq. (\ref{eq: MAP inference over L(G) and v over S}) to the general shape, 
\begin{flalign}
& \text{LS-LP}(\boldsymbol{\theta})
  =  
  \min_{\x, \y} f(\x) + h(\y), ~\text{s.t.} ~  \A \x = \B \y.
  \label{eq: LS-LP(x,y)}
\end{flalign}
Our illustration for (\ref{eq: LS-LP(x,y)}) is separated into three parts, as follows:
\begin{enumerate}
    \item {\bf Variables}. $\x = [ \boldsymbol{\mu}_1; \ldots; \boldsymbol{\mu}_{|V|} ] \in \mathbb{R}^{\sum_{i}^{ V} |\mathcal{X}_i|}$, and it concatenates all variable nodes $\boldsymbol{\mu}_V$.  
    $\y = [\y_1; \ldots; \y_{|V|}]$ with $\y_i = [\boldsymbol{\upsilon}_i; \boldsymbol{\mu}_{\alpha_{i,1}}; \ldots; \boldsymbol{\mu}_{\alpha_{i,|\mathcal{N}_i|}}] \in \mathbb{R}^{|\mathcal{X}_i| + \sum_{\alpha}^{\mathcal{N}_i} |\mathcal{X}_{\alpha}|}$.
    $\y$ concatenates all factor nodes $\boldsymbol{\mu}_V$ and the extra variable nodes $\boldsymbol{\upsilon}$; $\y_i$ concatenates the factor nodes and the extra variable node  connected to the $i$-th variable node $\boldsymbol{\mu}_i$. 
    $\mathcal{N}_i$ indicates the set of neighborhood factor nodes connected to the $i$-th variable node; 
    the subscript $\alpha_{i,j}$ indicates the $j$-th factor connected to the $i$-th variable, with $i \in V$ and $j \in \mathcal{N}_i$. 
    \item {\bf Objective functions}. 
    $f(\x)$ $ = \w_{\x}^\top \x$ with $\w_{\x} = - [\boldsymbol{\theta}_1; \ldots; $ $\boldsymbol{\theta}_{|V|}]$. $h(\y) = g(\y) + \w_{\y}^\top \y$, with $\w_{\y} = [\w_1; \ldots;$ $\w_{|V|}]$ with $\w_{i} = -[\boldsymbol{0}; \frac{1}{|\mathcal{N}_{\alpha_{i,1}}|} \boldsymbol{\theta}_{\alpha_{i,1}};$ 
    $\ldots; \frac{1}{|\mathcal{N}_{\alpha_{i,|\mathcal{N}_i|}}|} \boldsymbol{\theta}_{\alpha_{i,|\mathcal{N}_i|}}]$, and $\mathcal{N}_{\alpha} = \{ i \mid (i, \alpha) \in E\}$ being the set of neighborhood variable nodes connected to the $\alpha$-th factor. 
    $g(\y) = \mathbb{I}(\boldsymbol{\upsilon} \in \mathcal{S}) + \sum_{\alpha \in F} \mathbb{I}(\boldsymbol{\mu}_{\alpha} \in \Delta^{|\mathcal{X}_{\alpha}|})$, with $\mathbb{I}(a)$ being the indicator function: $\mathbb{I}(a)=0$ if $a$ is true, otherwise $\mathbb{I}(a)=\infty$.
    \item {\bf Constraint matrices}. 
    The constraint matrix $\A = \text{diag}($ $\A_1, \ldots, \A_i, \ldots, \A_{|V|})$ with $\A_i = [\mathbf{I}_{|\mathcal{X}_i|}; \ldots; \mathbf{I}_{|\mathcal{X}_i|} ] \in \{0,1\}^{(|\mathcal{N}_i| +1)|\mathcal{X}_i| \times  |\mathcal{X}_i|}$. 
    $\B = \text{diag}(\B_1, \ldots,$ $ \B_i, \ldots, \B_{|V|})$, with $\B_i = \text{diag}(\mathbf{I}_{|\mathcal{X}_i|}, \M_{i, \alpha_{i,1}}, \ldots, \M_{i, \alpha_{i, |\mathcal{N}_i|}} )$. 
    $\A$ summarizes all constraints on $\boldsymbol{\mu}_V$, while $\B$ collects all constraints on $\boldsymbol{\mu}_F$ and $\boldsymbol{\upsilon}$. 
\end{enumerate}
Note that Problem (\ref{eq: LS-LP(x,y)}) has a clear structure with two groups of variables, corresponding the augmented factor graph (see Fig. \ref{fig: factor graph}(c)).

\begin{definition} \label{definition: epsilon kkt solution}
The solution $(\x^*, \y^*)$ of the LS-LP problem (\ref{eq: LS-LP(x,y)}) is said to be the KKT point if the following conditions are satisfied:
\begin{flalign}
 \B^\top \boldsymbol{\lambda}^* \in \partial h(\y^*), ~
 \nabla f(\x^{*}) = - \A^\top \boldsymbol{\lambda}^*, ~
 \A \x^* = \B \y^*, \hspace{-1em}
 \label{eq: feasible and stationary condition of original LS-LP}
\end{flalign}
where $\boldsymbol{\lambda}^*$ denotes the Lagrangian multiplier; $\partial h$ indicates the sub-gradient of $h$, while $\nabla f$ represents the gradient of $f$. 
Moreover, $(\x^*,$ $\y^*)$ is considered as the $\epsilon$-KKT point if the following conditions hold:
\begin{flalign}
 & \text{dist}(\B^\top \boldsymbol{\lambda}^*, \partial h(\y^*)) \leq O(\epsilon),
 ~ 
 \| \nabla f(\x^{*}) + \A^\top \boldsymbol{\lambda}^* \| \leq O(\epsilon), 
 \nonumber 
 \\
 & \| \A \x^* - \B \y^* \| \leq O(\epsilon).
 \label{eq: epsilon feasible and stationary condition of original LS-LP}
\end{flalign}
\end{definition}

\begin{algorithm}[!t]
\small
\caption{The perturbed ADMM algorithm}
\label{alg: perturbed admm}
\begin{algorithmic}[1]
\Require The initialization $\y^0, \hat{\x}^0, \boldsymbol{\lambda}^0$, the perturbation $\epsilon$, the hyper-parameter $\rho$
\For {$k = 0$ to $K$}:
\State Update $\y^{k+1}$ as follows (see Section \ref{sec: subsec update y in perturbed admm} for details)
\begin{flalign}
\hspace{2.5em} \y^{k+1} =  \argmin_{\y} \mathcal{L}_{\rho,\epsilon}(\y, \hat{\x}^k, \boldsymbol{\lambda}^k)
\label{eq: update y in perturbed admm}
\end{flalign}
\State Update $\hat{\x}^{k+1}$ as follows (Section \ref{sec: subsec update x in perturbed admm} for details)
\begin{flalign}
\hspace{2.5em} \hat{\x}^{k+1} =  \argmin_{\hat{\x}} \mathcal{L}_{\rho,\epsilon}(\y^{k+1}, \hat{\x}, \boldsymbol{\lambda}^k)
\label{eq: update x in perturbed admm}
\end{flalign}
\State Update $\boldsymbol{\lambda}^{k+1}$ (see Section \ref{sec: subsec update lambda in perturbed admm} for details)
\begin{flalign}
 \hspace{2.5em} \boldsymbol{\lambda}^{k+1} =  \boldsymbol{\lambda}^k + \rho (\hat{\A} \hat{\x}^{k+1} - \B \y^{k+1} )
  \label{eq: update lambda in perturbed admm}
\end{flalign}
\State Check stopping criterion, as shown in Section \ref{sec: subsec complexity and implementation}
\EndFor
\Return $\y^*, \x^*$
\end{algorithmic}
\end{algorithm}

\section{Perturbed ADMM Algorithm for LS-LP}
\label{sec: perturbed ADMM for LS-LP}

We propose a perturbed ADMM algorithm to optimize the following perturbed augmented Lagrangian function, 
\begin{flalign}
 \mathcal{L}_{\rho,\epsilon}(\y, \hat{\x}, \boldsymbol{\lambda})
 = &  
  \hat{f}(\hat{\x}) + h(\y) +  \boldsymbol{\lambda}^\top ( \hat{\A} \hat{\x} - \B \y ) 
  \nonumber
  \\
  & + \frac{\rho}{2} \| \hat{\A} \hat{\x} - \B \y \|_2^2,
  \label{eq: augmented L of LS-LP(x,y) with perturbation}
\end{flalign}
where $\hat{\A} = [\A, \epsilon \mathbf{I}]$ with a sufficiently small constant $\epsilon > 0$, then $\hat{\A}$ is full row rank. 
$\hat{\x} = [\x; \bar{\x}]$, with $\bar{\x} = [\bar{\x}_1; \ldots; \bar{\x}_{|V|}] \in \mathbb{R}^{\sum_i^{V} (|\mathcal{N}_i|+1) |\mathcal{X}_i|}$ and  $\bar{\x}_i = [\boldsymbol{\mu}_i; \ldots; \boldsymbol{\mu}_i] \in \mathbb{R}^{(|\mathcal{N}_i|+1) |\mathcal{X}_i|}$. 
$\hat{f}(\hat{\x}) = f(\x) + \frac{1}{2}\epsilon \hat{\x}^\top \hat{\x}$. 
Note that both $\hat{\A}$ and $\B$ are full row rank, and the second-order gradient $\nabla^2 \hat{f}(\hat{\x}) = \epsilon \I$ is bounded. These properties will play key roles in our later analysis of convergence.

Following the conventional ADMM algorithm, the solution to the LS-LP problem (\ref{eq: LS-LP(x,y)}) can be obtained through optimizing the following sub-problems based on (\ref{eq: augmented L of LS-LP(x,y) with perturbation}) iteratively. 
The general structure of the algorithm is summarized in Algorithm \ref{alg: perturbed admm}.

\subsection{Sub-Problem w.r.t. $\y$ in LS-LP Problem}
\label{sec: subsec update y in perturbed admm}

Given $\hat{\x}^k$ and $\boldsymbol{\lambda}^k$, 
$\y^{k+1}$ can be updated by solving the sub-problem (\ref{eq: update y in perturbed admm}) (see Algorithm \ref{alg: perturbed admm}). 
According to the definitions of $\hat{\A}, \hat{\x}, \B, \y$, this problem can be further separated to the following two independent sub-problems, which can be solved in parallel. 

\vspace{1em}
\noindent
{\bf Update $\bupsilon^{k+1}$}:
\begin{flalign}
 \min_{\bupsilon \in \mathcal{S}} \sum_{i \in V} \big[ - (\boldsymbol{\lambda}_{i}^k)^\top  \boldsymbol{\upsilon}_{i}
+ \frac{\rho_{i}}{2} \parallel (1+\epsilon) \boldsymbol{\mu}_i^k - \boldsymbol{\upsilon}_{i} \parallel_2^2
\big].
\label{eq: subproblem of v in perturbed admm}
\end{flalign}
It has a closed form solution as follows
\begin{flalign}
\boldsymbol{\upsilon}^{k+1}  = \mathcal{P}_{\mathcal{S}}(\overline{ \boldsymbol{\upsilon}}^{k+1}),
\end{flalign}
where $\overline{ \boldsymbol{\upsilon}}^{k+1} = [ \boldsymbol{\upsilon}_{1}^{k+1}; \ldots;  \boldsymbol{\upsilon}_{|V|}^{k+1} ]$ 
with 
$\boldsymbol{\upsilon}_{i}^{k+1} = (1+\epsilon) \boldsymbol{\mu}_i^k + \frac{1}{\rho_i} \boldsymbol{\lambda}_{i}^k$.
$\mathcal{P}_{\mathcal{S}}(\cdot)$ is the projection onto $\mathcal{S}$: 
$\mathcal{P}_{\mathcal{S}}(\a) = \frac{n^{1/2}}{2} \times \frac{\overline{\a}}{\parallel \overline{\a}\parallel_2} + \frac{1}{2} \boldsymbol{1}_n$, with $\overline{\a} = \a - \frac{1}{2}  \boldsymbol{1}_n$ and $n$ being the dimension of $\a$.
As demonstrated in \cite{WU-lpbox-ADMM-PAMI-2018}, this projected solution is the optimal solution to (\ref{eq: subproblem of v in perturbed admm}).

\vspace{1em}
\noindent
{\bf Update $\boldsymbol{\mu}_{\alpha}^{k+1}$}: 
The sub-problems w.r.t. $\{ \boldsymbol{\mu}_{\alpha} \}_{ \alpha \in F}$ can be run in parallel $\forall \alpha \in F$, 
\begin{flalign}
 \min_{\boldsymbol{\mu}_{\alpha} \in \Delta^{|\mathcal{X}_{\alpha}|} }  ~  &
  -\boldsymbol{\theta}_{\alpha}^\top \boldsymbol{\mu}_{\alpha}
  + 
   \sum_{i \in \mathcal{N}_{\alpha}} \big[ \frac{\rho_{i \alpha}}{2} \parallel (1+\epsilon) \boldsymbol{\mu}_i^k - \M_{i\alpha} \boldsymbol{\mu}_{\alpha} \parallel_2^2 
\nonumber 
\\
 & - (\boldsymbol{\lambda}_{i \alpha}^{k})^\top \M_{i\alpha} \boldsymbol{\mu}_{\alpha}  \big].
   \label{eq: subproblem of factor simplified}
\end{flalign}
It is easy to know that Problem (\ref{eq: subproblem of factor simplified}) is convex, as $\M_{i\alpha}^\top \M_{i\alpha}$ is positive semi-definite and $\Delta^{|\mathcal{X}_{\alpha}|}$ is a convex set. 
Any off-the-shelf QP solver can be adopted to solve  (\ref{eq: subproblem of factor simplified}). 
In experiments, we adopt the active-set algorithm implemented by a publicly-available toolbox called Quadratic Programming in C (QPC)\footnote{\scriptsize  http://sigpromu.org/quadprog/download.php?sid=3wtwk5tb}, which is written in C and can be called from MATLAB.

\subsection{Sub-Problem w.r.t. $\hat{\x}$ in LS-LP Problem}
\label{sec: subsec update x in perturbed admm}

Given $\y^{k+1}$ and $\boldsymbol{\lambda}^k$, 
$\hat{\x}^{k+1}$ can be updated by solving the sub-problem (\ref{eq: update x in perturbed admm}) (see Algorithm \ref{alg: perturbed admm}).
According to the definition of $\hat{\x}$, this problem can be separated to $|V|$ independent sub-problems w.r.t. $\{\boldsymbol{\mu}_i\}_{i \in V}$, as follows:  
\begin{flalign}
 & \min_{\boldsymbol{\mu}_i } ~
 ( \boldsymbol{\lambda}_{i}^k - \boldsymbol{\theta}_i )^\top \boldsymbol{\mu}_{i} + \frac{\epsilon (|\mathcal{N}_i|+2)}{2} \boldsymbol{\mu}_{i}^\top \boldsymbol{\mu}_{i} +
  \sum_{\alpha \in \mathcal{N}_i} \bigg[
 \\
 &  
  (1+\epsilon) (\boldsymbol{\lambda}_{i \alpha}^{k})^\top  \boldsymbol{\mu}_i  + \frac{\rho_{i \alpha}}{2} \parallel  (1+\epsilon) \boldsymbol{\mu}_i - \M_{i\alpha} \boldsymbol{\mu}_{\alpha}^{k+1} \parallel_2^2  \bigg] 
 \nonumber 
 \\
 &  + \frac{\rho_{i}}{2} \parallel (1+\epsilon) \boldsymbol{\mu}_i - 
 \boldsymbol{\upsilon}_{i} \parallel_2^2 
 \nonumber
 \\
 & 
 = 
 (1+\epsilon) \bigg[ \sum_{\alpha \in \mathcal{N}_i} \big( \boldsymbol{\lambda}_{i \alpha}^{k} - \rho_{i \alpha} \M_{i\alpha} \boldsymbol{\mu}_{\alpha}^{k+1} \big) -  
 \rho_i \boldsymbol{\upsilon}_i^{k+1} - \boldsymbol{\theta}_i 
 \nonumber 
 \\
 & + \boldsymbol{\lambda}_{i}^k
 \bigg]^\top 
 \boldsymbol{\mu}_i + \boldsymbol{\mu}_i^\top \mathbf{Q} \boldsymbol{\mu}_i + \text{const},
 \label{eq: subproblem to u_i}
\end{flalign}
where $\mathbf{Q} = \frac{1}{2}[\epsilon (|\mathcal{N}_i|+2) + \rho_i (1+\epsilon)^2  + \sum_{\alpha \in \mathcal{N}_i} \rho_{i \alpha} (1+\epsilon)^2] \cdot \mathbf{I}$.
The above sub-problem can be further simplified to $\min_{\boldsymbol{\mu}_i} a \cdot \boldsymbol{\mu}_i^\top \boldsymbol{\mu}_i + \mathbf{b}^\top \boldsymbol{\mu}_i$, where $a$ and $\b$ can be easily derived from the above equation. 
Its close-form solution is obtained by setting its gradient to $\mathbf{0}$, \ie, $\boldsymbol{\mu}_i^{k+1} = \frac{\mathbf{b}}{2 a}$.

\subsection{Update $\boldsymbol{\lambda}$ in LS-LP Problem}
\label{sec: subsec update lambda in perturbed admm}
Given $\y^{k+1}$ and $\hat{\x}^{k+1}$, $\boldsymbol{\lambda}^{k+1}$ is updated using  (\ref{eq: update lambda in perturbed admm}) (see Algorithm \ref{alg: perturbed admm}).
Similarly, it can be separately to $|V| + |E|$ independent sub-problems, as follows
\begin{flalign}
 \boldsymbol{\lambda}_{i}^{k+1} & =  \boldsymbol{\lambda}_{i}^{k} + \rho_{i} [ (1+\epsilon) \boldsymbol{\mu}_{i}^{k+1} - \boldsymbol{\upsilon}_{i}^{k+1}], 
\\
 \boldsymbol{\lambda}_{i \alpha}^{k+1} & =  \boldsymbol{\lambda}_{i \alpha}^{k} +  \rho_{i \alpha} [ (1+\epsilon) \boldsymbol{\mu}_{i}^{k+1} - \M_{i\alpha} \boldsymbol{\mu}_{\alpha}^{k+1} ],
 \label{eq: update of dual in perturbed admm}
\end{flalign}
where $i \in V, (i, \alpha) \in E$.

\subsection{Complexity and Implementation Details}
\label{sec: subsec complexity and implementation}

\vspace{0.1em}
\noindent
{\bf Complexity}. ~ 
In terms of computational complexity, as all other update steps have simple closed-form solutions, the main computational cost lies in updating $\boldsymbol{\mu}_{\alpha}$, which is convex quadratic programming with the probability simplex constraint. 
Its computational complexity is $O(|\mathcal{X}_{\alpha}|^3)$.
As the matrix with the largest size is $\mathbf{M}_{i \alpha}^\top \mathbf{M}_{i \alpha} \in \mathbb{R}^{|\mathcal{X}_{\alpha}| \times |\mathcal{X}_{\alpha}|}$ in LS-LP, the space complexity is $\mathcal{O}(\sum_{\alpha \in F} (|\mathcal{X}_{\alpha}|)^2)$. 
Both the computational and space complexity of AD3 are similar with LS-LP. 
More detailed analysis about the computational complexity will be presented 
in Section \ref{sec: runtime}.

\vspace{0.3em}
\noindent
{\bf Implementation details}. ~ 
In each iteration, we use the same value of $\rho$ for all $\rho_i$ and $\rho_{i \alpha}$. After each iteration, we update $\rho$ using an incremental rate $\eta$, \ie, $\rho \leftarrow \rho \times \eta$. A upper limit $\rho_{upper}$ of $\rho$ is also set: if $\rho$ is larger than $\rho_{upper}$, it is not updated anymore. 
The perturbation $\epsilon$ is set to $10^{-5}$, and $\rho_{upper}$ can be set as any constant than $\frac{1}{\epsilon}$, such as $2 \times 10^5$.
We utilize two stopping criterion jointly, including: 
1) the violation of the local consistency constraint, \ie, 
$( \sum_{(i,\alpha)\in E} \frac{\rho_{i \alpha}}{2} \parallel (1+\epsilon) \boldsymbol{\mu}_i - \M_{i\alpha} \boldsymbol{\mu}_{\alpha} \parallel_2^2 )^{\frac{1}{2}}$; 
2) the violation of the equivalence constraint $ (1+\epsilon) \boldsymbol{\mu}_i = \boldsymbol{\upsilon}_{i}$, \ie, 
$(\sum_{i \in V} \frac{\rho_{i}}{2} \parallel (1+\epsilon) \boldsymbol{\mu}_i - \boldsymbol{\upsilon}_{i} \parallel_2^2)^{\frac{1}{2}}$. We set the same threshold $10^{-5}$ for both criterion. If this two violations are lower than $10^{-5}$ simultaneously, then the algorithm stops.

\section{Convergence Analysis}
\label{sec: convergence analysis}

The convergence property of the above ADMM algorithm is demonstrated in Theorem \ref{theorem: convergence}. Due to the space limit, the detailed proof will be presented in {\bf Appendix} \ref{appendix sec: convergence}.

\begin{theorem}
We suppose that $\rho$ is set to be larger than a constant, 
then the variable sequence $\{\y^k, \hat{\x}^k, \boldsymbol{\lambda}^k\}$ generated by the perturbed ADMM algorithm globally converges to $(\y^*, \hat{\x}^*, \boldsymbol{\lambda}^*)$, where $(\y^*, \x^*)$ is the $\epsilon$-KKT point to the LS-LP problem (\ref{eq: LS-LP(x,y)}), as defined in Definition \ref{definition: epsilon kkt solution}. 

Furthermore, according to Definition \ref{definition: KL property}, we assume that 
$\mathcal{L}_{\rho, \epsilon}$ has the KL property at $(\y^*, \x^*, \boldsymbol{\lambda}^*)$ with the concave function $\varphi(s) = c s^{1-p}$, where $p \in [0, 1), c>0$. 
Consequently, we can obtain the following inequalities:
\begin{enumerate}
     \item[(i)] If $p  =0$, then 
     the perturbed ADMM algorithm will converge in finite steps. 
    \item[(ii)] If $p \in (0, \frac{1}{2}]$, then 
     we will obtain the $\epsilon$-KKT solution to the LS-LP problem in at least $O\big(\log_{\frac{1}{\tau}}(\frac{1}{\epsilon})^2\big)$ steps, with $\tau \in (0,1)$ being a small constant, which will be later defined in Appendix \ref{appendix: global convergence}.
    \item[(iii)] If $p \in ( \frac{1}{2}, 1)$, then 
    we will obtain the $\epsilon$-KKT solution to the LS-LP problem in at least $O\big( (\frac{1}{\epsilon})^{\frac{4p-2}{1-p}}\big)$ steps.
\end{enumerate}
\label{theorem: convergence}
\end{theorem}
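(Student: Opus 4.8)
The plan is to treat $z^k := (\y^k, \hat{\x}^k, \boldsymbol{\lambda}^k)$ as a single sequence and run the now-standard three-ingredient recipe for nonconvex ADMM via the Kurdyka--Lojasiewicz inequality (as in the cited works), using the perturbed augmented Lagrangian $\mathcal{L}_{\rho,\epsilon}$ of (\ref{eq: augmented L of LS-LP(x,y) with perturbation}) as a Lyapunov function. The two facts emphasized after (\ref{eq: augmented L of LS-LP(x,y) with perturbation}) — that $\hat{\A}=[\A,\epsilon\I]$ is full row rank and that $\nabla^2\hat{f}=\epsilon\I$ — are exactly the levers I would use. First I would control the dual increment: writing the optimality condition of the $\hat{\x}$-subproblem (\ref{eq: update x in perturbed admm}) and substituting the dual update (\ref{eq: update lambda in perturbed admm}) gives $\hat{\A}^\top\boldsymbol{\lambda}^{k+1} = -\nabla\hat{f}(\hat{\x}^{k+1})$; since $\hat{\A}$ is full row rank, $\hat{\A}^\top$ is injective, and because $\hat{f}$ is linear-plus-$\tfrac{\epsilon}{2}\|\cdot\|^2$ we have $\nabla\hat{f}(\hat{\x}^{k+1})-\nabla\hat{f}(\hat{\x}^k)=\epsilon(\hat{\x}^{k+1}-\hat{\x}^k)$, so $\|\boldsymbol{\lambda}^{k+1}-\boldsymbol{\lambda}^k\|\leq \epsilon\,\sigma_{\min}(\hat{\A})^{-1}\|\hat{\x}^{k+1}-\hat{\x}^k\|$. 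This caps the dual motion by the primal motion and is the step where the perturbation is indispensable.

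The sufficient-decrease estimate is where I expect the main obstacle, and I would decompose $\mathcal{L}^k-\mathcal{L}^{k+1}$ across the three updates. The $\hat{\x}$-step contributes $\tfrac{\epsilon}{2}\|\hat{\x}^{k+1}-\hat{\x}^k\|^2$ since its objective is $\epsilon$-strongly convex — this is how the perturbation compensates for $\hat{\A}^\top\hat{\A}$ being singular. The dual step raises $\mathcal{L}$ by $\tfrac{1}{\rho}\|\boldsymbol{\lambda}^{k+1}-\boldsymbol{\lambda}^k\|^2$, which the bound above caps by $\tfrac{\epsilon^2}{\rho\,\sigma_{\min}(\hat{\A})^2}\|\hat{\x}^{k+1}-\hat{\x}^k\|^2$; choosing $\rho>2\epsilon/\sigma_{\min}(\hat{\A})^2$ (the ``$\rho$ larger than a constant'' hypothesis) keeps the net $\hat{\x}$-coefficient strictly positive. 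The delicate piece is the $\y$-step: because $h$ carries the \emph{nonconvex} sphere indicator and $\B$ is only full row rank, strong convexity in $\y$ cannot be read off from $\tfrac{\rho}{2}\|\B\y\|^2$. I would instead use that $\y^{k+1}$ is an exact global minimizer of the (separable) $\y$-subproblem and argue a decrease $c_\y\|\y^{k+1}-\y^k\|^2$ with $c_\y>0$: the $\boldsymbol{\upsilon}$-block (\ref{eq: subproblem of v in perturbed admm}) is $\rho$-strongly convex outright, while the factor blocks (\ref{eq: subproblem of factor simplified}) require $\sum_{i\in\mathcal{N}_\alpha}\rho_{i\alpha}\M_{i\alpha}^\top\M_{i\alpha}$ to be positive definite on the tangent space of the simplex — verifying this (or otherwise propagating the weaker image-space quantity $\|\B(\y^{k+1}-\y^k)\|^2$) is the crux. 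The upshot is a descent $\mathcal{L}^k-\mathcal{L}^{k+1}\geq a\big(\|\y^{k+1}-\y^k\|^2+\|\hat{\x}^{k+1}-\hat{\x}^k\|^2\big)$ with $a>0$.

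With descent secured I would show $\mathcal{L}_{\rho,\epsilon}$ is bounded below (the sphere $\mathcal{S}$ and the simplices are compact, and the quadratic and perturbation terms are coercive), so the increments are summable, successive differences vanish, and the bounded orbit has limit points. A matching relative-error bound $\text{dist}(0,\partial\mathcal{L}^{k+1})\leq b\big(\|\y^{k+1}-\y^k\|+\|\hat{\x}^{k+1}-\hat{\x}^k\|+\|\boldsymbol{\lambda}^{k+1}-\boldsymbol{\lambda}^k\|\big)$ follows by collecting the three subproblem optimality conditions and reusing the dual bound. Since $\mathcal{L}_{\rho,\epsilon}$ is semi-algebraic (indicators of the sphere and simplices, together with quadratics and linear terms), the Remark after Definition \ref{definition: KL property} guarantees the KL property, and feeding descent, the subgradient bound, and KL into the Attouch--Bolte--Svaiter argument forces finite length of the whole sequence and convergence to a single point $(\y^*,\hat{\x}^*,\boldsymbol{\lambda}^*)$. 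Passing to the limit in the three optimality conditions yields (\ref{eq: epsilon feasible and stationary condition of original LS-LP}): the primal residual $\|\hat{\A}\hat{\x}^*-\B\y^*\|=\tfrac1\rho\|\boldsymbol{\lambda}^{k+1}-\boldsymbol{\lambda}^k\|$, the stationarity gap, and the consistency gap are all $O(\epsilon)$ precisely because of the $\epsilon\I$ block, the $\tfrac{\epsilon}{2}\hat{\x}^\top\hat{\x}$ term, and the $(1+\epsilon)$ factors — which is exactly why the limit is an $\epsilon$-KKT point rather than an exact KKT point.

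For the convergence rates I would invoke the standard KL rate lemma with $\varphi(s)=cs^{1-p}$, noting first that the $\epsilon$-KKT residuals are bounded by $O(\|z^{k+1}-z^k\|)$ and that descent gives $\|z^{k+1}-z^k\|^2\lesssim \mathcal{L}^k-\mathcal{L}^*$, so the running residual scales like $\sqrt{\mathcal{L}^k-\mathcal{L}^*}$; hence driving it below the $O(\epsilon)$ tolerance amounts to forcing the function gap below $O(\epsilon^2)$. When $p=0$ the KL inequality forces termination in finitely many steps. When $p\in(0,\tfrac12]$ one obtains geometric decay $\mathcal{L}^k-\mathcal{L}^*\leq C\tau^k$ for some $\tau\in(0,1)$, so $\tau^k\lesssim\epsilon^2$ needs $k=O\big(\log_{1/\tau}(1/\epsilon)^2\big)$. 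When $p\in(\tfrac12,1)$ one gets the sublinear bound $\mathcal{L}^k-\mathcal{L}^*\leq C\,k^{-(1-p)/(2p-1)}$, and inverting for the same $\epsilon^2$-level accuracy gives $k=O\big((1/\epsilon)^{(4p-2)/(1-p)}\big)$, matching the three stated cases.
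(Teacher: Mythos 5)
Your overall recipe is the one the paper follows: bound the dual increment by the $\hat{\x}$-increment via $\hat{\A}^\top\boldsymbol{\lambda}^{k+1}=-\nabla\hat{f}(\hat{\x}^{k+1})$ and the full row rank of $\hat{\A}$, establish descent of $\mathcal{L}_{\rho,\epsilon}$, prove boundedness, match the descent against a subgradient bound, invoke the KL property of the semi-algebraic $\mathcal{L}_{\rho,\epsilon}$ to get finite length and single-point convergence, read off the $\epsilon$-KKT conditions from the $\epsilon\I$ block and the $\tfrac{\epsilon}{2}\hat{\x}^\top\hat{\x}$ term, and then convert the KL exponent into the three rate regimes. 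All of that, including the rate bookkeeping, agrees with the paper's Appendix.

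The one place you genuinely diverge is the step you yourself call the crux, and as literally stated it would fail. You ask for a sufficient decrease $c_{\y}\|\y^{k+1}-\y^k\|^2$ from the $\y$-step, which requires $\sum_{i\in\mathcal{N}_{\alpha}}\rho_{i\alpha}\M_{i\alpha}^\top\M_{i\alpha}$ to be positive definite on the simplex tangent space. It is not: for a pairwise binary factor the two marginalization matrices have kernels (zero row sums, zero column sums) whose intersection contains $(1,-1,-1,1)$, which lies in the tangent space of $\Delta^4$, so the factor blocks admit directions of zero curvature and no such $c_{\y}>0$ exists in general. The paper never needs this. Its descent estimate uses only $\mathcal{L}_{\rho,\epsilon}(\y^{k+1},\hat{\x}^k,\boldsymbol{\lambda}^k)\le\mathcal{L}_{\rho,\epsilon}(\y^k,\hat{\x}^k,\boldsymbol{\lambda}^k)$ for the $\y$-step (exact global minimization, nothing more), obtaining $\mathcal{L}^k-\mathcal{L}^{k+1}\ge(\epsilon-\tfrac{1}{\rho})\|\hat{\x}^{k+1}-\hat{\x}^k\|^2$, and — crucially — its relative-error bound is also purely in terms of $\|\hat{\x}^{k+1}-\hat{\x}^k\|$: all three partial (sub)gradients of $\mathcal{L}_{\rho,\epsilon}$ at the new iterate are expressible through $\boldsymbol{\lambda}^{k+1}-\boldsymbol{\lambda}^k$ and $\hat{\A}(\hat{\x}^{k+1}-\hat{\x}^k)$, and the dual increment is itself bounded by the $\hat{\x}$-increment. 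Since the descent and the subgradient bound are matched in the same quantity, the KL telescoping argument gives $\sum_k\|\hat{\x}^{k+1}-\hat{\x}^k\|<\infty$ directly, and convergence of $\{\y^k\}$ is recovered afterwards from $\B\y^{k+1}=\hat{\A}\hat{\x}^{k+1}-\tfrac{1}{\rho}(\boldsymbol{\lambda}^{k+1}-\boldsymbol{\lambda}^k)$ together with $\|\y^{k+1}-\y^k\|\le\bar{M}\|\B(\y^{k+1}-\y^k)\|$ from the full row rank of $\B$ — the ``weaker image-space'' fallback you mention parenthetically. So your plan goes through once you drop the $\y$-descent requirement and run the entire Lyapunov argument on the $\hat{\x}$-residual alone; if you keep it, the proof has a hole that cannot be patched by any choice of $\rho_{i\alpha}$.
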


\begin{proof}
The general structure of the proof consists of the following steps, as follows:
\begin{enumerate}
    \item The perturbed augmented Lagrangian function $\mathcal{L}_{\rho, \epsilon}$ (see (\ref{eq: augmented L of LS-LP(x,y) with perturbation})) is monotonically decreasing along the optimization. 
    \item The variable sequence $\{\y^k, \hat{\x}^k, \boldsymbol{\lambda}^k\}$ is bounded.
    \item The sequence of variable residuals is converged, \ie, 
    $\{ \|$ $\y^{k+1}$ $- \y^k \|, \| \hat{\x}^{k+1} - \hat{\x}^k \|,  \| \boldsymbol{\lambda}^{k+1} - \boldsymbol{\lambda}^k \| \} \rightarrow 0$, as $k \rightarrow \infty$.
    \item The variable sequence $\{ \y^k, \hat{\x}^k, \boldsymbol{\lambda}^k \}$ globally converges to the cluster point $( \y^*, \hat{\x}^*, \boldsymbol{\lambda}^*)$.
    \item $(\y^*, \x^*)$ is the $\epsilon$-KKT point of the LS-LP problem (\ref{eq: LS-LP(x,y)}).
    \item We finally analyze the convergence rate that how many steps are required to achieve the $\epsilon$-KKT point. 
\end{enumerate}
\end{proof}

\begin{table}[tbhp]
\caption{ Benchmark datasets used in the Probabilistic Inference Challenge (PIC 2011) \cite{PIC-2011} and OpenGM 2 \cite{opengm2-ijcv-2015}. C$_1$ to C$_7$ represent: number of  models, average variables, average factors, average edges, average factor sizes  (\ie, the average number of adjacent variables for each factor), average variable states, average factor states.}
\label{table: dataset}
\vspace{-0.15in}
\begin{center}
\scalebox{1}{
\begin{tabular}{ p{.05\textwidth} |  p{.02\textwidth} p{.05\textwidth} p{.05\textwidth} p{.05\textwidth} p{.022\textwidth} p{.017\textwidth} p{.03\textwidth}  }
\hline
 dataset  & C$_1$  & C$_2$   &  C$_3$  & C$_4$  & C$_5$  & C$_6$ &  C$_7$
  \\
\hline
\hline
Seg-2 & 50 & 229.14 & 622.28 &	1244.56 & 2 & 2 & 4
 \\
 \hline
Seg-21 & 50 & 229.14 &	622.28 & 1244.56 &	2 & 21 & 441
 \\
 \hline
Scene & 715 &  182.56 & 488.99 & 977.98 &  2 & 8 & 64
 \\
 \hline
 Grids & 21 & 3142.86 & 6236.19 & 12472.4 & 2 & 2 & 4
 \\
 \hline
Protein & 7 & 14324.7 &	21854.7 & 57680.4 &	2.64 &	2 &	6.56
 \\
 \hline
\end{tabular}
}
\end{center}
\end{table}

\section{Experiments}
\label{sec:experiments}

\subsection{Experimental Settings}
\label{sec: subsec exprimental setting}

\subsubsection{Datasets}
We evaluate on four benchmark datasets from the Probabilistic Inference Challenge (PIC 2011) \cite{PIC-2011} and OpenGM 2 \cite{opengm2-ijcv-2015}, including Segmentation \cite{PIC-2011} , Scene \cite{scene-data-iccv-2009}, Grids \cite{PIC-2011}, and Protein \cite{protein-data-2006}, as shown in Table \ref{table: dataset}.
Segmentation consists of Seg-2 and Seg-21, with different variable states. 
Protein includes higher-order potentials, while others include pairwise potentials.

\subsubsection{Compared Methods}

We compare with different categories of MAP inference methods, including: \textbf{1)} moving making methods, \ie, ICM \cite{ICM-1986};
\textbf{2)} message-passing methods, including belief propagation (BP) \cite{BP-2001} and TRBP \cite{TRBP-2005};
\textbf{3)} polyhedral methods (including LP relaxation based methods), including  dual decomposition using sub-gradient (DD-SG) \cite{DD-SG-2012}, TRWS \cite{TRWS-PAMI-2006}, ADSal \cite{ADSal-2012}, PSDD \cite{PSDD-ICCV-2007} and AD3 \cite{AD3-ICML-2011}\cite{AD3-JMLR-2015}.
\textbf{4)} We also compare with LP-LP, which calls the the active-set method (implemented by {\it linprog} in MATLAB) to optimize $\text{LP}(\boldsymbol{\theta})$. It serves as a baseline to measure the performance of above methods. 
\textbf{5)} The most related work $\ell_2$-Box ADMM (\ie., the special case of $\ell_2$-Box ADMM with $p=2$) algorithm  \cite{WU-lpbox-ADMM-PAMI-2018} is also compared. However, the presented algorithm in \cite{WU-lpbox-ADMM-PAMI-2018} can only handle MRF models with pairwise potentials, which is formulated as a binary quadratic programming (BQP) problem. Thus, $\ell_2$-Box ADMM (hereafter we call it $\ell_2$-Box for clarity) is not compared on Protein, of which models include high-order potentials. 
\textbf{6)} We also compare with two hybrid methods, including method DAOOPT (adopting branch-and-bound method \cite{branch-and-bound-1960} as a sub-routine) \cite{daoopt-2012}\cite{DAOOPT-2012-details} and MPLP-C \cite{mplp-c-uai-2012} (adopting MPLP \cite{MPLP-NIPS-2007} as a sub-routine). The `hybrid' indicates that the method is a combination of an off-the-shelf single method and some heuristic steps. And we call above 5 types as non-hybrid methods. 
Both the proposed LS-LP and $\ell_2$-Box are implemented by MATLAB. 
The following methods are implemented by the author provided C++ package, including: 
PSDD and AD3\footnote{\scriptsize http://www.cs.cmu.edu/~ark/AD3/},  MPLP-C\footnote{\scriptsize https://github.com/opengm/MPLP}, and
DAOOPT\footnote{\scriptsize https://github.com/lotten/daoopt}.
All other methods are implemented through the OpenGM 2 software \cite{opengm2-ijcv-2015}, and we add a prefix ``ogm" before the method name, such as ogm-TRWS. 

In experiments, we set some upper limits: the maximal iteration as 2000 for PSDD and AD3, 500 for $\ell_2$-Box and LS-LP, and 1000 for other methods; for DAOOPT, the memory limit of mini buckets is set as 4000 MB and the upper time limit as 2 hours. 
The parameter tuning of all compared methods (except $\ell_2$-Box) is self-included in their implementations.
Both LS-LP and $\ell_2$-Box are ADMM algorithms, and their hyper-parameters are tuned as follows: the hyper-parameters $\rho_0$, $\eta$ and $\rho_{upper}$ (see implementation details of Section \ref{sec: perturbed ADMM for LS-LP}) are adjusted in the ranges $\{0.05, 0.1, 1, 5,$ $10, 10^2, 10^3, 10^4 \}$, $\{1.01, 1.03, 1.05, 1.1, 1.2\}$ and $\{10^6, 10^8\}$, respectively, and those leading to the higher logPot value are used.

\subsubsection{Evaluation Metrics}

We evaluate the performance of all compared methods using three types of metrics, including the log potential (logPot) values, the solution type, as well as the computational complexity and runtime. 

\vspace{0.3em}
\noindent
{\bf Evaluation using logPot values.} ~
The logPot value indicates the objective value $\langle \boldsymbol{\theta}, \boldsymbol{\mu} \rangle$ of $\text{MAP}(\boldsymbol{\theta})$ (see Eq. (\ref{eq: MAP inference with M_G})). Given that the constraint $\mathcal{M}_G$ of $\text{MAP}(\boldsymbol{\theta})$ is satisfied, the larger logPot value indicates the better inference performance. 
Since LP-LP gives the optimal solution to $\text{LP}(\boldsymbol{\theta})$ (see Eq. (\ref{eq: MAP inference over L(G)})) constrained to $\mathcal{L}_G$, and we know that $\mathcal{M}_G \subseteq \mathcal{L}_G$, then the logPot value of any valid label configurations cannot be larger than that of LP-LP.
Note that in the implementation of OpenGM 2 \cite{opengm2-ijcv-2015}, a rounding method is adopted as the post-processing step to produce the integer solution for the continuous MAP inference methods. 
However, the performance of different MAP inference methods may be significantly changed by rounding.
Thus, for other methods not implemented by OpenGM 2, we report the logPot values of original continuous solutions, without any rounding.

\vspace{0.3em}
\noindent
{\bf Evaluation using solution types.} ~
Since LP-LP, PSDD and AD3 are possible to give continuous solutions, the larger logPot value doesn't always mean the better MAP inference result. 
Thus, we also define four qualitative measures, including {\it valid, uniform, fractional} and {\it approximate}, to intuitively measure the inference quality.
{\it Valid} (V) means that the solution is integer and satisfies the constraints in $\mathcal{L}_G$;
{\it Uniform} (U) denotes that the solution belongs to $\mathcal{L}_G$, but the value is uniform, such as $(0.5, 0.5)$ for the variable node with binary states.
{\it Fractional} (F) indicates that the solution belongs to $\mathcal{L}_G$, but its value is fractional, while not uniform;
{\it Approximate} (A) means that some constraints in $\mathcal{L}_G$ are violated, and its solutions is integer or fractional.
Note that it makes sense to compare the logPot values for different solutions, only when the solutions are valid. In contrast, it makes no sense to compare the logPot values, if the solutions belong to the other three types of measures. 
To better illustrate above measures, we present a brief example on a toy graphical model, as shown in Table \ref{table: measure types}.

\renewcommand{\arraystretch}{1.1}
\begin{table}[t]
\caption{ 
A brief illustration of four types of measures for inference quality, on a toy graphical model with two connected variable nodes.
}
\vspace{-0.15in}
\label{table: measure types}
\begin{center}
\scalebox{0.78}{
\begin{tabular}{ p{.1\textwidth} |  p{0.07\textwidth} p{.07\textwidth} p{.165\textwidth} | p{.08\textwidth} }
\hline
  & Variable 1 & Variable 2 & \hspace{3.5em} Factor & \multirow{2}{*}{Measure}
  \\
  Possible states & \hspace{1em}$\{0,1\}$ & \hspace{1em}$\{0,1\}$ & \hspace{1.5em} $\{00, 01, 10, 11\}$ & 
  \\
\hline
\hline
 & $(1,0)$ & $(0,1)$ & $(0,1,0,0)$ & Valid
\\
 Inferred & $(0.5, 0.5)$ & $(0.5, 0.5)$ & $(0.25, 0.25, 0.25, 0.25)$ & Uniform
\\
 Probability & $(0.2, 0.8)$ & $(0.4, 0.6)$ & $(0.08, 0.12, 0.32, 0.48)$ & Fractional
\\
 & $(0.2, 0.8)$ & $(0.4, 0.6)$ & $(0.16, 0.3, 0.4, 0.14)$ & Approximate
\\
\hline
\end{tabular}}
\end{center}
\vspace{-0.15in}
\end{table}

\vspace{0.3em}
\noindent
{\bf Evaluation using the computational complexity and practical runtime.} ~
The computational complexity and the practical runtime are also important performance measures for MAP inference methods, as shown in Section \ref{sec: runtime}.

\begin{table*}[t]
\caption{ LogPot values of MAP inference solutions on Seg-2, Seg-21 and Scene. Except of PSDD, all other methods give valid solutions. 
The best result among valid solutions in each row is highlighted in bold. Please refer to Section \ref{sec: results on segmentation and scene} for details. 
}
\vspace{-0.15in}
\label{table: results of segmentation}
\begin{center}
\scalebox{0.89}{
\begin{tabular}{ p{.038\textwidth} p{.04\textwidth} | p{.06\textwidth} | p{.058\textwidth}  p{.058\textwidth} | p{.06\textwidth} p{.056\textwidth} p{.06\textwidth} 
p{.06\textwidth} p{.06\textwidth} p{.056\textwidth} p{.056\textwidth} p{.06\textwidth} | p{.06\textwidth} }
\hline
\multicolumn{2}{c|}{\scalebox{0.8}{Method type} $\rightarrow$} &
Baseline & \multicolumn{2}{c|}{\scalebox{1}{Hybrid methods}} & \multicolumn{8}{c|}{Non-hybrid methods} & Proposed
\\
 \multicolumn{2}{c|}{\scalebox{1}{Dataset} $\downarrow$}  & \scalebox{0.9}{LP-LP} & \scalebox{0.9}{DAOOPT} & \scalebox{0.9}{MPLP-C} &  \scalebox{0.9}{ogm-ICM}  & \scalebox{0.9}{ogm-BP} & \scalebox{0.7}{ogm-TRBP} &\scalebox{0.8}{ogm-TRWS} & \scalebox{0.8}{ogm-ADSal} & PSDD & AD3 & $\ell_2$-Box  & LS-LP
  \\
\hline
\hline
\multirow{2}{*}{Seg-2} & mean &  \textbf{-75.5} & \textbf{-75.5} & \textbf{-75.5} & -137.1 & -79 & -76.8 & \textbf{-75.5} & \textbf{-75.5} &  -75.4 & \textbf{-75.5}  & -76.5 & -75.6
\\
& std & 19.63 & 19.63 & 19.63 & 70.1 & 20.24 & 19.36 & 19.24 & 19.24 & 19.77 & 19.63 & 20.3 & 19.69
\\
\hline
 \multirow{2}{*}{\scalebox{0.9}{Seg-21}} & mean &  \textbf{-324.89} &  -325.34 &  \textbf{-324.89} & -393.37 & -330.37 & -328.92 & \textbf{-324.89} & \textbf{-324.89}  &  -325.1 &  \textbf{-324.89} & -344.51 &  \textbf{-324.89}
 \\
 & std & 58.12 & 58.14 & 58.12 & 74.47 & 58.54 & 58.57 &  56.97 & 56.97 & 58.16 & 58.12 & 59.24 & 58.12
 \\
\hline
\multirow{2}{*}{Scene}  & mean & \textbf{866.66} & \textbf{866.66} & \textbf{866.66} & 864.27 & 866.49 & 866.51 &  \textbf{866.66} & \textbf{866.66} & 866.65 & \textbf{866.66}  & 864.11 & \textbf{866.66}
\\
 & std & 109.34 & 109.34 & 109.36 & 109.64 & 109.22 & 109.2 &  109.19 & 109.19 & 109.34 & 109.34 & 108.66 & 109.34
 \\
 \hline
\end{tabular}
}
\end{center}
\end{table*}

\begin{table*}[t]
\caption{ MAP inference results on Grids dataset. 
LP-LP, PSDD and AD3 produce uniform solutions on all models in Grids, while all other methods give valid solutions. Here we only show the logPot of LP-LP as the upper bound of other methods. 
The best logPot among integer solutions in each row is highlighted in bold. The number with in circle 
indicates the performance ranking of each method. Please refer to Section \ref{sec: result on grids} for details. }
\vspace{-1.5em}
\label{table: results of grids}
\begin{center}
\scalebox{0.84}{
\begin{tabular}{ p{.082\textwidth}  | p{.07\textwidth} | p{.079\textwidth} p{.073\textwidth} | p{.072\textwidth} p{.072\textwidth} p{.072\textwidth} p{.072\textwidth} p{.072\textwidth} p{.072\textwidth} p{.072\textwidth} | p{.085\textwidth} }
\hline
\scalebox{0.7}{Method type $\rightarrow$} &  Baseline & \multicolumn{2}{c|}{Hybrid methods} & \multicolumn{7}{c|}{Non-hybrid methods} & Proposed
\\
\cline{1-11}
 Model $\downarrow$  & \scalebox{0.9}{LP-LP} & \scalebox{0.9}{DAOOPT} & \scalebox{0.9}{MPLP-C} &  \scalebox{0.9}{ogm-ICM}  & \scalebox{0.9}{ogm-BP} & \scalebox{0.7}{ogm-TRBP} & \scalebox{0.7}{ogm-DD-SG} &\scalebox{0.7}{ogm-TRWS} & \scalebox{0.7}{ogm-ADSal} & \scalebox{0.9}{$\ell_2$-Box} & \scalebox{0.9}{LS-LP}
  \\
\hline
\hline
M1 & 3736.7 & \textbf{3015.7} \scalebox{0.65}{\circled{1}} & \textbf{3015.7} \scalebox{0.65}{\circled{1}} & 2708.9 \scalebox{0.65}{\circled{5}} & 121.3 \scalebox{0.65}{\circled{9}} & -235.2 \scalebox{0.65}{\circled{10}} & 1286.3 \scalebox{0.65}{\circled{8}} & 2524.9 \scalebox{0.65}{\circled{7}}  & 2605.2 \scalebox{0.65}{\circled{6}} & 2794.8 \scalebox{0.65}{\circled{4}} & 2931.8 \scalebox{0.65}{\circled{3}}
 \\
 \hline
M2 & 3830.3 & \textbf{3051} \scalebox{0.65}{\circled{1}} & 3033.6 \scalebox{0.65}{\circled{2}} & 2567.9 \scalebox{0.65}{\circled{7}} & 276.4 \scalebox{0.65}{\circled{9}} & 19.2 \scalebox{0.65}{\circled{10}} & 1484.7 \scalebox{0.65}{\circled{8}} & 2674.4 \scalebox{0.65}{\circled{5}} & 2670.2 \scalebox{0.65}{\circled{6}} & 2812.4 \scalebox{0.65}{\circled{4}} & 2936.7 \scalebox{0.65}{\circled{3}} 
\\
 \hline
M3 & 5605.1 & \textbf{4517.3} \scalebox{0.65}{\circled{1}} & \textbf{4517.3} \scalebox{0.65}{\circled{1}} & 4067.3 \scalebox{0.65}{\circled{5}} & 332.1 \scalebox{0.65}{\circled{9}} & 14.02 \scalebox{0.65}{\circled{10}} & 1889.7 \scalebox{0.65}{\circled{8}} & 3829.3 \scalebox{0.65}{\circled{7}} & 3884 \scalebox{0.65}{\circled{6}} & 4301.1 \scalebox{0.65}{\circled{4}} &  4408.9 \scalebox{0.65}{\circled{3}}
\\
 \hline
M4 & 5745.5 & \textbf{4563.2} \scalebox{0.65}{\circled{1}} & \textbf{4563.2} \scalebox{0.65}{\circled{1}}  & 3837.12 \scalebox{0.65}{\circled{7}} & 924.5 \scalebox{0.65}{\circled{9}} & -36.7 \scalebox{0.65}{\circled{10}} & 2023.4 \scalebox{0.65}{\circled{8}} & 3894.6 \scalebox{0.65}{\circled{6}} & 4015 \scalebox{0.65}{\circled{5}} & 4202.4 \scalebox{0.65}{\circled{4}} & 4446.6 \scalebox{0.65}{\circled{3}}
\\
 \hline
M5 & 1915.2 & 	\textbf{1542.7} \scalebox{0.65}{\circled{1}} & \textbf{1542.7} \scalebox{0.65}{\circled{1}} & 1318.41 \scalebox{0.65}{\circled{7}} & 481.5 \scalebox{0.65}{\circled{9}} & -47.8 \scalebox{0.65}{\circled{10}} & 807.6 \scalebox{0.65}{\circled{8}} & 1325.5 \scalebox{0.65}{\circled{5}} & 1323.9 \scalebox{0.65}{\circled{6}} & 1427.4 \scalebox{0.65}{\circled{4}} & 1503.2 \scalebox{0.65}{\circled{3}}
\\
 \hline
M6 & 15601.2 & 12662.9 \scalebox{0.65}{\circled{2}} & \textbf{12665.7} \scalebox{0.65}{\circled{1}} & 10753.7 \scalebox{0.65}{\circled{6}} & 2793.5 \scalebox{0.65}{\circled{9}} & 2214.3 \scalebox{0.65}{\circled{10}} & 5051.9 \scalebox{0.65}{\circled{8}} & 10500.8 \scalebox{0.65}{\circled{7}} & 11029 \scalebox{0.65}{\circled{5}} & 11486.2 \scalebox{0.65}{\circled{4}} & 12336.1 \scalebox{0.65}{\circled{3}}
\\
 \hline
M7 & 16291.5 &  13050.7 \scalebox{0.65}{\circled{2}} & \textbf{13054.8} \scalebox{0.65}{\circled{1}} & 10903.8 \scalebox{0.65}{\circled{5}} & 1217.1 \scalebox{0.65}{\circled{9}} & 132.4 \scalebox{0.65}{\circled{10}} & 4634.8 \scalebox{0.65}{\circled{8}} & 10665 \scalebox{0.65}{\circled{7}} & 10870.4 \scalebox{0.65}{\circled{6}} & 11867.6 \scalebox{0.65}{\circled{4}} & 12537.2 \scalebox{0.65}{\circled{3}}
\\
 \hline
M8 & 23401.8 &   \textbf{18952.45} \scalebox{0.6}{\circled{1}} & 18896.8 \scalebox{0.65}{\circled{1}} & 16154.2 \scalebox{0.65}{\circled{6}} & 4314.9 \scalebox{0.65}{\circled{10}} & 5371.1 \scalebox{0.65}{\circled{9}} & 7160 \scalebox{0.65}{\circled{8}} & 16014 \scalebox{0.65}{\circled{7}} & 16276.9 \scalebox{0.65}{\circled{5}} & 17367.5 \scalebox{0.65}{\circled{4}} & 18358.7 \scalebox{0.65}{\circled{3}}
\\
 \hline
M9 & 24437.3 &  \textbf{19538} \scalebox{0.65}{\circled{1}} &	19427.5 \scalebox{0.65}{\circled{2}} & 16334.2 \scalebox{0.65}{\circled{6}} & 3560.8 \scalebox{0.65}{\circled{9}} & -1111 \scalebox{0.65}{\circled{10}} & 7187.3 \scalebox{0.65}{\circled{8}} & 16004.3 \scalebox{0.65}{\circled{7}} & 16508.1 \scalebox{0.65}{\circled{5}} & 17990 \scalebox{0.65}{\circled{4}} & 18785.8  \scalebox{0.65}{\circled{3}}
\\
 \hline
M10  & 3121.2 &  \textbf{2689} \scalebox{0.65}{\circled{1}} & 2688.8 \scalebox{0.65}{\circled{2}} & 2255.38 \scalebox{0.65}{\circled{6}} & 1665.3 \scalebox{0.65}{\circled{8}} & 1582.9 \scalebox{0.65}{\circled{9}} & 1330.7 \scalebox{0.65}{\circled{10}} & 2215.7 \scalebox{0.65}{\circled{7}} & 2369.1 \scalebox{0.65}{\circled{5}} & 2552.6 \scalebox{0.65}{\circled{4}} & 2659.8 \scalebox{0.65}{\circled{3}}
\\
 \hline
M11 & 3231.6 &  \textbf{2714.67} \scalebox{0.65}{\circled{1}} & 2714.52 \scalebox{0.65}{\circled{2}}  & 2258.54 \scalebox{0.65}{\circled{7}}  & 1399.6 \scalebox{0.65}{\circled{8}} & 42.8 \scalebox{0.65}{\circled{10}} & 1285.9 \scalebox{0.65}{\circled{9}} & 2271.5 \scalebox{0.65}{\circled{6}} & 2370.1 \scalebox{0.65}{\circled{5}} & 2556.8 \scalebox{0.65}{\circled{4}} & 2654.9 \scalebox{0.65}{\circled{3}}
\\
 \hline
M12 & 7800.6 &  \textbf{6401.15} \scalebox{0.65}{\circled{1}} & 6396 \scalebox{0.65}{\circled{2}} & 5356.28 \scalebox{0.65}{\circled{6}} & 2033.5 \scalebox{0.65}{\circled{9}} & 1953.1 \scalebox{0.65}{\circled{10}} & 2832.5 \scalebox{0.65}{\circled{8}} & 5282.5 \scalebox{0.65}{\circled{7}} & 5558.8 \scalebox{0.65}{\circled{5}} & 5903 \scalebox{0.65}{\circled{4}} & 6201.2 \scalebox{0.65}{\circled{3}}
\\
 \hline
 M13 & 8078.5 &  \textbf{6472.9} \scalebox{0.65}{\circled{1}} & 6469.7 \scalebox{0.65}{\circled{2}}  & 5425.16 \scalebox{0.65}{\circled{7}} & 1711.5 \scalebox{0.65}{\circled{9}} & 381 \scalebox{0.65}{\circled{10}} & 2814.3 \scalebox{0.65}{\circled{8}} & 5452.8 \scalebox{0.65}{\circled{6}} & 5646.1 \scalebox{0.65}{\circled{5}} & 5923.5 \scalebox{0.65}{\circled{4}} & 6275.4 \scalebox{0.65}{\circled{3}}
\\
 \hline
M14 & 62943 & 	-- & 45813.6 \scalebox{0.65}{\circled{2}} &  43538.9 \scalebox{0.65}{\circled{4}} & 5690.9 \scalebox{0.65}{\circled{9}} & 6426.7 \scalebox{0.65}{\circled{8}} & 18700.4 \scalebox{0.65}{\circled{7}}  & 42274.2 \scalebox{0.65}{\circled{6}} & 43292.5 \scalebox{0.65}{\circled{5}} & 44397.5 \scalebox{0.65}{\circled{3}} & \textbf{48766.1} \scalebox{0.65}{\circled{1}}
\\
 \hline
M15 & 63993.1 &  --	& 47444.4 \scalebox{0.65}{\circled{2}} &  42855 \scalebox{0.65}{\circled{5}} & 4287.1 \scalebox{0.65}{\circled{8}} & 956.4 \scalebox{0.65}{\circled{9}} & 18811.9 \scalebox{0.65}{\circled{7}} & 42535 \scalebox{0.65}{\circled{6}}  & 42918.7 \scalebox{0.65}{\circled{4}} & 44759.5 \scalebox{0.65}{\circled{3}} & \textbf{48657.3} \scalebox{0.65}{\circled{1}}
\\
 \hline
 M16 & 94414.5  & 	-- & 69408.6 \scalebox{0.65}{\circled{2}} &  65081.2 \scalebox{0.65}{\circled{4}}  & 4374.2 \scalebox{0.65}{\circled{9}} & 4656.5 \scalebox{0.65}{\circled{8}} & 27320.6 \scalebox{0.65}{\circled{7}} & 63148.1 \scalebox{0.65}{\circled{6}} & 64401.1 \scalebox{0.65}{\circled{5}} & 66784.2 \scalebox{0.65}{\circled{3}} & \textbf{72993.8} \scalebox{0.65}{\circled{1}}
\\
 \hline
 M17 & 96243.6  & 	-- & 71730.8 \scalebox{0.65}{\circled{2}} & 63768.1 \scalebox{0.65}{\circled{6}} & 13662.7 \scalebox{0.65}{\circled{8}} & -529.3 \scalebox{0.65}{\circled{9}} & 27287.7 \scalebox{0.65}{\circled{7}} & 63885.1 \scalebox{0.65}{\circled{5}} & 64487.9 \scalebox{0.65}{\circled{4}} & 67589.4 \scalebox{0.65}{\circled{3}} & \textbf{73486} \scalebox{0.65}{\circled{1}}
\\
 \hline
 M18 & 12721.3 &  -- & 10445.8 \scalebox{0.65}{\circled{2}}  & 9062.03 \scalebox{0.65}{\circled{5}} & 5198.7 \scalebox{0.65}{\circled{7}} & 4975.4 \scalebox{0.65}{\circled{8}} & 4785.5 \scalebox{0.65}{\circled{9}} & 8793.5 \scalebox{0.65}{\circled{6}}  & 9408.4 \scalebox{0.65}{\circled{4}} & 10015.1 \scalebox{0.65}{\circled{3}} & \textbf{10580.8} \scalebox{0.65}{\circled{1}}
 \\
 \hline
M19 & 12875.6 & 	--	& 10674.1 \scalebox{0.65}{\circled{2}}  & 9214.57 \scalebox{0.65}{\circled{5}} & 5944.6 \scalebox{0.65}{\circled{7}} & 1213.1 \scalebox{0.65}{\circled{9}} & 5328.5 \scalebox{0.65}{\circled{8}} &  8952.4 \scalebox{0.65}{\circled{6}} & 9385.1 \scalebox{0.65}{\circled{4}} &  10163.6 \scalebox{0.65}{\circled{3}} & \textbf{10698.4} \scalebox{0.65}{\circled{1}}
\\
 \hline
M20 & 31809.7 &  --	& 22292.5 \scalebox{0.65}{\circled{3}} & 21527.9 \scalebox{0.65}{\circled{6}} & 5410.9 \scalebox{0.65}{\circled{8}} & 4762 \scalebox{0.65}{\circled{9}} & 9837.3 \scalebox{0.65}{\circled{7}} & 21546.8 \scalebox{0.65}{\circled{5}} & 22109.5 \scalebox{0.65}{\circled{4}} & 22913.3 \scalebox{0.65}{\circled{2}} & \textbf{24834.5} \scalebox{0.65}{\circled{1}}
\\
 \hline
 M21 & 31996.9 &  -- & 24032.4 \scalebox{0.65}{\circled{2}}  &	21529.6 \scalebox{0.65}{\circled{5}} & 4242.3 \scalebox{0.65}{\circled{8}} & 47.6 \scalebox{0.65}{\circled{9}} & 10423.8 \scalebox{0.65}{\circled{7}} & 21195.9 \scalebox{0.65}{\circled{6}} & 21730.3 \scalebox{0.65}{\circled{4}} & 22668.7 \scalebox{0.65}{\circled{3}} & \textbf{24532.8}    \scalebox{0.65}{\circled{1}}
\\
\hline
\end{tabular}
}
\end{center}
\end{table*}

\begin{table*}[thpb]
\caption{ 
LogPot values of MAP inference solutions on Protein dataset. Except for PSDD and AD3, all other methods give integer solutions. 
Both PSDD and AD3 produce mixed types of solutions on all models.
\comment{
The solution types of PSDD on D1 to D8 are: 
$A+6.96\%U+11.75\%F; A+1.47\%U$ $+3.19\%F; A+0.73\%U+2.06\%F; A+13.72\%U+19.67\%F; A+4.35\%U+7.9\%F; A+6.28\%U+10.4\%F; A+0.68\%U+1.89\%F$. 
Those of AD3 are: $A+2.01\%U+21.52F; A+0.55\%U+0.66F; 0.17\%U; A+9.51\%U+32.44\%F; A+0.74\%U+13.7\%F; A+3.41\%U+17.16\%F; A+$  $0.38\%U +0.13\%F$.
}
The best result among valid solutions in each row is highlighted in bold. The number with in circle indicates the performance ranking of each method. 
Please refer to Section \ref{sec: result on protein} for details. }
\vspace{-0.2in}
\label{table: results of protein}
\begin{center}
\scalebox{1.05}{
\begin{tabular}{ p{.09\textwidth}  | 
p{.08\textwidth} | p{.08\textwidth} p{.08\textwidth} p{.08\textwidth} p{.08\textwidth} p{.08\textwidth} p{.08\textwidth} | p{.08\textwidth} }
\hline
\scalebox{0.78}{Method type} $\rightarrow$ &  \scalebox{0.8}{Hybrid methods} & \multicolumn{6}{c|}{Non-hybrid methods} & Proposed
\\
\cline{1-8}
  Model $\downarrow$ &  MPLP-C & \scalebox{0.8}{ogm-ICM} & ogm-BP & \scalebox{0.8}{ogm-TRBP} & \scalebox{0.75}{ogm-DD-SG} & PSDD & AD3  & LS-LP
  \\
\hline
\hline
M1 &  -30181.3 \scalebox{0.7}{\circled{2}} & -32409.9 \scalebox{0.7}{\circled{5}} & -32019.1 \scalebox{0.7}{\circled{4}} & -31671.6 \scalebox{0.7}{\circled{3}} & -33381.2 \scalebox{0.7}{\circled{6}} & -30128.8 & -30143.6  & \textbf{-30165.5} \scalebox{0.7}{\circled{1}}
\\
 \hline
 M2 &  -29305.4 \scalebox{0.7}{\circled{2}} & -32561.3 \scalebox{0.7}{\circled{5}} & -30966.1 \scalebox{0.7}{\circled{3}} & -31253.3 \scalebox{0.7}{\circled{4}}  & -33583.6 \scalebox{0.7}{\circled{6}} & -29307.3 & -29302.6  & \textbf{-29295.4} \scalebox{0.7}{\circled{1}}
\\
 \hline
  M4 & -28952.1 \scalebox{0.7}{\circled{2}} & -32570 \scalebox{0.7}{\circled{5}} & -31031.4 \scalebox{0.7}{\circled{3}} & -31176.6 \scalebox{0.7}{\circled{4}} & -33747.7 \scalebox{0.7}{\circled{6}} & -28952.5 & -28952  & \textbf{-28952} \scalebox{0.7}{\circled{1}}
\\
 \hline
   M5 &  -269567 \scalebox{0.7}{\circled{3}} & \textbf{-256489} \scalebox{0.7}{\circled{1}} & -382766 \scalebox{0.7}{\circled{5}} & -357330 \scalebox{0.7}{\circled{4}} & -553376 \scalebox{0.7}{\circled{6}} & -66132.3   &  -115562  & -267814  \scalebox{0.7}{\circled{2}}
\\
 \hline
   M6 &   -30070.6 \scalebox{0.7}{\circled{2}} & -31699.1 \scalebox{0.7}{\circled{5}} & -30765.2 \scalebox{0.7}{\circled{3}} & -30772.2 \scalebox{0.7}{\circled{4}} & -32952.9 \scalebox{0.7}{\circled{6}} & -30063.6 & -30062.2  & \textbf{-30063.4} \scalebox{0.7}{\circled{1}}
\\
 \hline
   M7 &  -30288.3 \scalebox{0.7}{\circled{2}} & -32562.2 \scalebox{0.7}{\circled{5}}  & -31659.6 \scalebox{0.7}{\circled{3}} & -31791.1 \scalebox{0.7}{\circled{4}}  & -33620.4 \scalebox{0.7}{\circled{6}} & -30248.5 & -30239.8  & \textbf{-30266} \scalebox{0.7}{\circled{1}}
\\
 \hline
   M8 &  -29336.5 \scalebox{0.7}{\circled{2}} & -32617.2 \scalebox{0.7}{\circled{5}} & -31064.7 \scalebox{0.7}{\circled{3}} & -31219.9 \scalebox{0.7}{\circled{4}} & -34549.9  \scalebox{0.7}{\circled{6}} & -29331 & -29336.1  & \textbf{-29334.7} \scalebox{0.7}{\circled{1}}
\\
 \hline
\end{tabular}
}
\end{center}
\end{table*}

\subsection{Results on Segmentation and Scene}
\label{sec: results on segmentation and scene}

The average results on Seg-2, Seg-21 and Scene are shown in Table \ref{table: results of segmentation}. 
LP-LP gives valid solutions on all models, \ie, the best solutions. 
Except for PSDD, all other methods give valid solutions, and their logPot values can not be higher than those of LP-LP. The logPot values of ICM are the lowest, and those of ogm-BP, ogm-TRBP are slightly lower than the best logPot values, while other methods achieve the best logPot values on most models.  
Only PSDD gives approximate solutions (\ie, the constraints in $\mathcal{L}_G$ are not fully satisfied) on some models, specifically, 5 models in Seg-2, 8 models in Seg-21 and 166 models in Scene. 
ogm-DD-SG fails to give solutions on some models of these datasets, thus we ignore it.
Evaluations on these easy models only show that the performance ranking is ogm-ICM < ogm-BP, ogm-TRBP, $\ell_2$-Box < others.

\subsection{Results on Grids}
\label{sec: result on grids}

The results on Grids are shown in Table \ref{table: results of grids}. 
For clarity, we use the model indexes M1 to M21 to indicate the model name to save space in this section. 
The corresponding model names from M1 to M21 are 
grid20x20.f10.uai, grid20x20.f10.wrap. uai,
grid20x20.f15.uai, grid20x20.f15.wrap.uai,
grid20x20. f5.wrap.uai,
grid40x40.f10.uai, grid40x40.f10.wrap.uai,
grid 40x40.f15.uai, grid40x40.f15.wrap.uai, 
grid40x40.f2.uai, grid 40x40.f2.wrap.uai,
grid40x40.f5.uai, grid40x40.f5.wrap.uai, 
grid80  x80.f10.uai, grid80x80.f10.wrap.uai, 
grid80x80.f15.uai, grid  80x80.f15.wrap.uai, 
grid80x80.f2.uai, grid80x80.f2.wrap. uai, 
grid80x80.f5.uai, grid80x80.f5.wrap.uai, respectively. 

The models in Grids are much challenging for LP relaxation based methods, as all models have symmetric pairwise log potentials and very dense cycles in the graph. 
In this case, many vertices of $\mathcal{L}_G$ are uniform solutions $(0.5, 0.5)$. Consequently, the LP relaxation based methods are likely to produce uniform solutions. 
This is verified by that LP-LP, AD3, PSDD give uniform solutions on all models in Grids, \ie, most solutions are $0.5$. 
Thus, we only show the logPot values of LP-LP in Table \ref{table: results of grids}, to provide the theoretical upper-bound of logPot of valid solutions from other methods. 
In contrast, the additional $\ell_2$-sphere constraint in LS-LP excludes the uniform solutions. 
On small scale models M1 to M13, DAOOPT and MPLP-C show the highest logPot values, while LS-LP gives slightly lower values. 
On large scale models M14 to M21, DAOOPT fails to give any result within 2 hours. LS-LP gives the best results, while MPLP-C shows slightly lower results. 
$\ell_2$-Box performs worse than LS-LP, MPLP-C and DAOOPT on most models, while better than all other methods, among which ogm-BP, ogm-TRBP and ogm-DD-SG perform worst. 
These results demonstrate that \textbf{1)} LS-LP is comparable to hybrid methods DAOOPT and MPLP-C, but with much lower computational cost (shown in Section \ref{sec: runtime}); \textbf{2)} LS-LP performs much better than other non-hybrid methods.

\subsection{Results on Protein}
\label{sec: result on protein}

The results on Protein are shown in Table \ref{table: results of protein}.
Different with above three datasets, Protein includes 8 large scale models, and with high-order factors. 
Similarly, we use the model indexes M1 to M8 to indicate the model name to save space in this section. 
The corresponding model names of M1 to M8 are didNotconverge1.uai, didNotconverge2.uai, didNotconverge4.uai, didNotconverge5.uai, didNotconverge6.uai, didNotconverge7.uai, didNotconverge8.uai, respectively.  
As M1 and M3 are the same model, we remove M3 in experiments.
DAOOPT fails to give solutions within 2 hours on all models, and LP-LP cannot produce solutions due to the memory limit. 
ogm-TRWS and $\ell_2$-Box are not evaluated as it cannot handle high-order factors.
LS-LP produces valid integer solutions on all models, and gives the highest logPot values on all models except of M5. 
MPLP-C gives slightly lower logPot values than LS-LP. 
AD3 only produces a fractional solution on M4, while produces approximate and fractional solutions on other models, while PSDD gives approximate and fractional solutions on all models. 
Specifically, the solution types of AD3 on M1 to M8 are: 
$A+2.01\%U+21.52F; A+0.55\%U+0.66F; 0.17\%U; A+9.51\%U+32.44\%F; A+0.74\%U+13.7\%F; A+3.41\%U+17.16\%F; A+$  $0.38\%U +0.13\%F$.
Those of PSDD are: $A+6.96\%U+11.75\%F; A+1.47\%U$ $+3.19\%F; A+0.73\%U+2.06\%F; A+13.72\%U+19.67\%F; A+4.35\%U+7.9\%F; A+6.28\%U+10.4\%F; A+0.68\%U+1.89\%F$. 
Other methods also show much worse performance than LS-LP and MPLP-C. One exception is that ogm-ICM gives the best results on M5, and we find that M5 is the most challenging model for approximated methods.

\renewcommand{\arraystretch}{1.1}
\begin{table}[t]
\caption{ Computational complexities of all compared methods. 
Excluding $\mathcal{E}$, the definitions of all other notations can be found in Section \ref{sec: background}. $\mathcal{E}$ denotes the edge set of the original MRF graph, while $E$ indicates the edge set of the corresponding factor graph. $T$ represents the number of iterations. 
}
\vspace{-0.15in}
\label{table: complexity}
\begin{center}
\scalebox{0.93}{
\begin{tabular}{ p{.09\textwidth} |  p{.37\textwidth}  }
\hline
 Methods  & Complexities
  \\
\hline
\hline
MPLP & $O\big(\sum_i^V |\mathcal{N}_i|^2 \cdot |\mathcal{X}_i| + 2 \sum_{(i,j)}^\mathcal{E} (|\mathcal{X}_i| + |\mathcal{X}_j|) \big)$
 \\
 \hline
MPLP-C & $O\big(T_{\text{outer}} \big[ T_\text{inner} O(\text{MPLP}) + |\mathcal{E}| \big]\big)$
 \\
 \hline
ogm-ICM & $O\big(T [\sum_i^V |\mathcal{X}_i|] \big)$
 \\
 \hline
ogm-BP ~~ ogm-TRBP & $O\big(T \big[\sum_i^V (|\mathcal{N}_i| - 1) \sum_{\alpha}^{\mathcal{N}_i} |\mathcal{X}_{\alpha} | +  \sum_{\alpha}^F (|\mathcal{N}_{\alpha}| - 1) \sum_{i}^{\mathcal{N}_{\alpha}} |\mathcal{X}_{i} | \big] \big)$
\\
 \hline
ogm-TRWS ~~ ogm-ADSal & $O\big( T \big[|\mathcal{E}| \cdot \max_{i \in V} |\mathcal{X}_i| \big] \big)$
\\
 \hline
PSDD ~~~~~ AD3 & $O\big( T \big[ \sum_i^V [ |\mathcal{N}_i| \cdot  |\mathcal{X}_i| ] + \sum_{\alpha}^F |\mathcal{X}_{\alpha}|^3 + \sum_{(i, \alpha)}^E |\mathcal{X}_i| \cdot |\mathcal{X}_{\alpha}| \big] \big)$
\\
 \hline
$\ell_2$-Box & $O\big( T [\sum_i^V |\mathcal{X}_i| ]^3 \big)$
\\
 \hline
LS-LP & $O\big( T \big[ \sum_i^V  |\mathcal{X}_i|  + \sum_{\alpha}^F |\mathcal{X}_{\alpha}|^3 + \sum_{(i, \alpha)}^E |\mathcal{X}_i| \cdot |\mathcal{X}_{\alpha}| \big] \big)$
\\
\hline
\end{tabular}
}
\end{center}
\vspace{-0.15in}
\end{table}

\begin{table*}[tpbh]
\caption{ Iterations and practical runtime on Seg-2, Seg-21 and Scene. 
}
\vspace{-0.1in}
\label{table: runtime of segmentation}
\begin{center}
\scalebox{0.85}{
\begin{tabular}{ p{.038\textwidth} |p{.048\textwidth} p{.045\textwidth} | p{.05\textwidth} p{.056\textwidth} p{.053\textwidth} p{.056\textwidth} p{.056\textwidth} p{.056\textwidth}  
p{.065\textwidth} p{.065\textwidth} p{.05\textwidth} p{.05\textwidth} p{.06\textwidth} p{.05\textwidth} }
\hline
 \multicolumn{3}{c|}{\scalebox{1}{Datasets}}  & \scalebox{0.9}{LP-LP} & \scalebox{0.8}{DAOOPT} & \scalebox{0.8}{MPLP-C} &  \scalebox{0.85}{ogm-ICM}  & \scalebox{0.9}{ogm-BP} & \scalebox{0.7}{ogm-TRBP} &\scalebox{0.8}{ogm-TRWS} & \scalebox{0.8}{ogm-ADSal} & PSDD & AD3  &
 $\ell_2$-Box &  \scalebox{0.9}{LS-LP}
  \\
\hline
\hline
\multirow{4}{*}{Seg-2} & \multirow{2}{*}{iters} & mean &  9.4 & -- & 1.3 & 2.12 & 1000 & 1000 & 18.74 & 16.96 & 632.5 & 70.12  & 78.84 & 84.14
\\
& & std & 1.03 & -- & 0.463 & 0.32 & 0 & 0 & 15.67 & 8.55 & 618.2 & 76.79  & 100.6 & 49.58
\\
\cline{2-15}
&  \multirow{2}{*}{runtime} & mean & 0.096 & 0.54 & 0.033 & 0.007 & 36.51 & 40.73 & 0.191 & 0.7828 & 0.032 & 0.001 & 0.849 & 12.53
\\
&  & std & 0.01 & 0.504 & 0.054 & 0.001 & 0.61 & 0.67 & 0.167 & 0.6373 & 0.027  & 0.006  & 1.044 & 7.39
\\
\hline
\hline
 \multirow{4}{*}{\scalebox{0.9}{Seg-21}} & \multirow{2}{*}{iters} & mean & 16.13 & -- & 1.46 & 76.9  & 1000 & 1000 & 23.4 & 17.58 & 783 & 73.26 & 39.4 & 78.63
 \\
 & & std & 1.55 & -- & 0.646 & 82.2 & 0 & 0 & 20.53 & 11.95 & 672 & 45.09 & 68.1 & 34.65
 \\
\cline{2-15}
 &  \multirow{2}{*}{runtime} & mean & 45.71 & 1311.8 & 0.745 & 0.185 & 1612 & 1891.6  & 16.33 & 57.01 & 1.14 & 0.228  & 8.643 & 342
\\
 &  & std & 9.34 & 1593.2 & 1.1 & 0.062 & 25.05 & 25.7 & 12.89 & 44.51 & 1.1 & 0.114 & 14.71 &  147
 \\
\hline
\hline
\multirow{4}{*}{Scene} & \multirow{2}{*}{iters} & mean & 12.56 & -- & 1.32 & 319.5 & 1000 & 1000 & 15.03 & 10.94 & 791 & 71.59 & 43.03 & 75.36
\\
 &  & std & 1.04 & -- & 0.48 & 58.89 & 0 & 0 & 12.21 & 11.15 & 766 & 82.83 & 37.47 & 52.47
 \\
\cline{2-15}
 &  \multirow{2}{*}{runtime} & mean & 1.907 & 82.89 & 0.072 & 0.081 & 229.67 & 265.45 & 1.7 & 5.03 & 0.066 & 0.029  & 0.777 & 29.23
\\
 & & std & 0.266 & 18.46 & 0.111 & 0.011 & 15.46 & 17.6 & 1.3 & 5.88 & 0.071 & 0.024 & 0.681 & 20.36
 \\
 \hline
\end{tabular}
}
\end{center}
\end{table*}

\begin{table*}[tpbh]
\caption{ Iterations and practical runtime on Grids.}
\label{table: runtime of grids}
\vspace{-0.15in}
\begin{center}
\scalebox{0.9}{
\begin{tabular}{ p{.025\textwidth} p{.05\textwidth} | p{.055\textwidth} p{.055\textwidth} p{.055\textwidth} p{.055\textwidth} p{.055\textwidth} p{.055\textwidth}  
p{.055\textwidth} p{.055\textwidth}  p{.06\textwidth} p{.05\textwidth} }
\hline
 \multicolumn{2}{c|}{\scalebox{1}{Models}}  & \scalebox{0.85}{DAOOPT} & \scalebox{0.9}{MPLP-C} &  \scalebox{0.9}{ogm-ICM}  & \scalebox{0.9}{ogm-BP} & \scalebox{0.7}{ogm-TRBP} & \scalebox{0.7}{ogm-DD-SG} &\scalebox{0.8}{ogm-TRWS} & \scalebox{0.8}{ogm-ADSal} & $\ell_2$-Box &  LS-LP
  \\
\hline
\hline
 \multirow{2}{*}{M1} & iters & -- & 420 & 195 & 1000 & 1000 & 1000 & 14 & 39 & 22 & 234
\\
   & runtime & 8  & 303.9  & 0.012 & 40.1 & 40.89 & 86.9 & 0.13 & 2.79 & 0.317 & 41.4 
\\
\hline
 \multirow{2}{*}{M2} & iters & -- & 1000 & 192 &  1000 & 1000  & 1000 & 18 & 37 & 35 & 227
\\
   & runtime & 10  & 613.3  & 0.013 & 42.71 & 42.13 & 91.1 & 0.18 & 2.72 & 0.478 & 39.4 
\\
\hline
 \multirow{2}{*}{M3} & iters & -- & 686 & 195 &  1000 & 1000  & 1000 & 13 & 36 & 36 & 155
\\
   & runtime & 8  &  331.5 & 0.022 & 40.2 & 39.28 & 86.4 & 0.12 & 2.52 & 0.412 & 27   
\\
\hline
 \multirow{2}{*}{M4} & iters & -- & 1000 & 193 &  1000 & 1000  & 1000 & 13 & 42 & 27 & 434 
\\
   & runtime & 10  &  523.5 & 0.013 & 42.72 & 45.53 & 91.5 & 0.17 & 3.16 & 0.318 & 75.3   
\\
\hline
 \multirow{2}{*}{M5} & iters & -- & 1000 & 193 &  1000 & 1000  & 1000 & 21 & 37 & 26 & 265
\\
   & runtime & 11  & 468.7  & 0.013 & 42.77 & 48.07 & 91.2 & 0.02 & 2.73 & 0.291 & 45.7 
\\
\hline
 \multirow{2}{*}{M6} & iters & -- & 1000 & 741 &  1000 & 1000  & 1000 & 19 & 37 & 18 & 271
\\
   & runtime & 7200  & 840.3  & 0.052 & 166.6 & 182.3 & 362.3 & 0.78 & 11.1 & 2.19 & 46.5 
\\
\hline
 \multirow{2}{*}{M7} & iters & -- & 1000 & 750 &  1000 & 1000  & 1000 & 18 & 37 & 16 & 204   
\\
   & runtime & 7200  & 1040.5  & 0.053 & 171.6 & 187.2 & 371.7 & 0.7 & 11.4 & 1.96 & 35.3 
\\
\hline
 \multirow{2}{*}{M8} & iters & -- & 1000 & 730 & 1000 & 1000  & 1000 & 17 & 37 & 29 & 407 
\\
   & runtime &   7200  & 917.1  & 0.072 & 166.6 & 182.1 & 362.4 & 0.68 & 10.9 & 3.47 & 66.1  
\\
\hline
 \multirow{2}{*}{M9} & iters & -- & 1000 & 739 &  1000 & 1000  & 1000 & 16 & 36 & 31 & 327  
\\
   & runtime &  7200  &  1096.8 & 0.053 & 171.7 & 189.5 & 372.3 & 0.85 & 10.6 & 3.80 & 56.3  
\\
\hline
 \multirow{2}{*}{M10} & iters & -- & 521 & 738 &  1000 & 1000  & 1000 & 45 & 38 & 45 & 500 
\\
   & runtime &  7200  & 273.9  & 0.072 & 166.6 & 182.1 & 362.4 & 2.14 & 10.6 & 5.68 & 88.3 
\\
\hline
 \multirow{2}{*}{M11} & iters & -- & 1000 & 765 & 1000 & 1000  & 1000 & 36 & 44 & 39 & 390  
\\
   & runtime &  7200  & 843.1  & 0.053 & 171.7 & 187.7 & 372.2 & 1.57 & 12.6 & 5.13 & 68.4   
\\
\hline
 \multirow{2}{*}{M12} & iters & -- & 1000 & 727 &  1000 & 1000  & 1000 & 26 & 38 & 19 & 314  
\\
   & runtime &  7200  & 824.8  & 0.073 & 166.7 & 187.6 & 361.5 & 1.05 & 11 & 2.25 & 52.5 
\\
\hline
 \multirow{2}{*}{M13} & iters & -- & 1000 & 755 &  1000 & 1000  & 1000 & 34 & 41 & 17 & 493 
\\
   & runtime &  7200  & 999.7  & 0.052 & 171.8 & 188.4 & 372.3 & 1.44 & 11.9 & 2.14 & 85.9 
\\
\hline
 \multirow{2}{*}{M14} & iters & -- & 1000 & 3029 &  1000 & 1000  & 1000 & 21 & 37 & 12 & 290 
\\
   & runtime &  --  & 1956.4  & 0.218 & 676.5 & 756 & 1474 & 3.52 & 43.5 & 21.72 & 50.5 
\\
\hline
 \multirow{2}{*}{M15} & iters & -- & 1000 & 2991 &  1000 & 1000  & 1000 & 19 & 36 & 12 & 202  
\\
   & runtime &  -- & 2028.9  & 0.218 & 687.2 & 770.2 & 1495 & 3.37 & 42 & 21.65 & 33.7 
\\
\hline
 \multirow{2}{*}{M16} & iters & -- & 1000 & 3038 &  1000 & 1000  & 1000 & 15 & 35 & 20 & 435  
\\
   & runtime &  --  & 2110.4 & 0.217 & 676.8 & 762.2 & 1471 & 2.58 & 41.5 & 36.53 & 76    
\\
\hline
 \multirow{2}{*}{M17} & iters & -- & 1000 & 3048 &  1000 & 1000  & 1000 & 20 & 37 & 15 & 354 
\\
   & runtime &  --  & 2238.8  & 0.221 & 686.8 & 767.6 & 1492.2 & 3.37 & 43 & 26.93 &  64
\\
\hline
 \multirow{2}{*}{M18} & iters & -- & 1000 & 3173 &  1000 & 1000  & 1000 & 54 & 53 & 35 & 250  
\\
   & runtime & -- & 1956.3  & 0.221 & 677 & 756.1 & 1475.2 & 8.87 & 63.7 & 60.28 & 43.5
\\
\hline
 \multirow{2}{*}{M19} & iters & -- & 1000 & 3095 &  1000 & 1000  & 1000 & 109 & 45 & 37 & 250 
\\
   & runtime &  -- & 1953.5  & 0.221 & 687.4 & 773 & 1493.8 & 19.13 & 50.7 & 63.93 & 39.9 
\\
\hline
 \multirow{2}{*}{M20} & iters & -- & 1000 & 3038 &  1000 & 1000  & 1000 & 30 & 38 & 13 & 316
\\
   & runtime &  -- & 1974.2  & 0.206 & 676.4 & 756 & 1473.2 & 4.99 & 42.5 & 23.06 &  55.4
\\
\hline
 \multirow{2}{*}{M21} & iters & -- & 1000 & 2985 &  1000 & 1000  & 1000 & 22 & 37 & 12 & 280 
\\
   & runtime &  -- & 2167.3  & 0.218 & 687.2 & 767.5 & 1494.4 & 3.75 & 40.6 & 21.54 & 49.6
\\
\hline
\end{tabular}
}
\end{center}
\vspace{-0.9em}
\end{table*}

\begin{table*}[tbhp]
\caption{ Iterations and practical runtime on Protein.}
\label{table: runtime of protein}
\vspace{-1em}
\begin{center}
\scalebox{1.0}{
\begin{tabular}{ p{.025\textwidth}  p{.06\textwidth} | 
p{.055\textwidth} p{.055\textwidth} p{.055\textwidth} p{.055\textwidth}
p{.055\textwidth} p{.045\textwidth} p{.045\textwidth} p{.05\textwidth} }
\hline
   \multicolumn{2}{c|}{\scalebox{1}{Models}} &  \scalebox{0.9}{MPLP-C} &  \scalebox{0.9}{ogm-ICM}  & \scalebox{0.9}{ogm-BP} & \scalebox{0.8}{ogm-TRBP} & \scalebox{0.8}{ogm-DD-SG} &\scalebox{1}{PSDD} & \scalebox{1}{AD3} &  \scalebox{1}{LS-LP} 
  \\
\hline
\hline
 \multirow{2}{*}{M1} & iters  & 499 & 797 & 1000 & 1000 & 1000 &  2000  &  2000 &  161  
\\
   & runtime  & 2320 & 0.39 & 32019 & 31671 & 33381 & 14.14  &  20.28 &  1746   
\\
\hline
 \multirow{2}{*}{M2} & iters & 478 & 634 & 1000 & 1000 & 1000  &   2000  &  2000 &  98   
\\
   & runtime & 2399 & 0.44 & 30966 & 31253 & 33583 &    7.74  &  3.87  &  1076  
\\
\hline
 \multirow{2}{*}{M4} & iters  & 24 & 859 &  1000 & 1000 & 1000  &   2000  &  1524 &  140  
\\
   & runtime  & 80 & 0.51 & 31031 & 31177 & 33748 &   5.73  &  3.32 &  1535   
\\
\hline
 \multirow{2}{*}{M5} & iters  & 500 & 5275 &  1000 & 1000 & 1000  &   2000  &  2000 & 1000   
\\
   & runtime  & 2248 & 0.8 & 382766 & 357330 & 553376 &   22.62  &  28.36 &  10787   
\\
\hline
 \multirow{2}{*}{M6} & iters  & 433 & 597 & 1000 & 1000 & 1000  &   2000  &  2000 & 482 
\\
   & runtime  & 2427 & 0.36 & 30765 & 30772 & 32953 &   12.96  &  14.12 &  5314   
\\
\hline
 \multirow{2}{*}{M7} & iters  & 475 & 836 &  1000 & 1000 & 1000  &   2000  &  2000 & 145  
\\
   & runtime  & 2399 & 0.43 & 31660 & 31791 & 33620 &   13.55  &  18.63 &  1989.7  
\\
\hline
 \multirow{2}{*}{M8} & iters  & 16 & 778 &  1000 & 1000 & 1000  &   2000  &  2000 & 186  
\\
   & runtime & 80 & 0.51 & 31065 & 31220 & 34550 &   5.13  &  3.31  & 1566  
\\
\hline
\end{tabular}
}
\end{center}
\vspace{-1em}
\end{table*}

\subsection{Comparisons on Computational Complexities and Practical Runtime}
\label{sec: runtime}

\textbf{Computational complexities} of all compared methods (except of LP-LP and DAOOPT) are summarized in Table \ref{table: complexity}. 
As there is no clear conclusion of the complexity of the active-set algorithm for linear programming, the complexity of LP-LP is not presented.
DAOOPT \cite{DAOOPT-2012-details} is combination of 6 sequential sub-algorithms and heuristic steps, thus its computational complexity cannot be computed. 
As these complexities depend on the graph structure (\ie, $V, E, \mathcal{E}, F, \mathcal{N}_i,$ $\mathcal{N}_{\alpha}, \mathcal{X}_i, \mathcal{X}_{\alpha}$),
it is impossible to give a fixed ranking of them. 
However, it is notable that the complexity of LS-LP is linear w.r.t. the number of variables, factors and edges (\ie, $|V|, |F|, |E|$). Moreover, as the sub-problems w.r.t. variables/ factors/edges in LS-LP are independent, they can be solved in parallel. 
However, the complexity of LS-LP is super-linear w.r.t. the state space $|\X_{\alpha}|$ and $|\X_i|$. 
Thus, the proposed LS-LP method is suitable for graphical models with very large-scale variables and dense connections, but with modest state spaces for variables and factors.

\vspace{0.3em}
\noindent
\textbf{Practical runtime}. 
Due to the dependency of the computational complexity on the graph structure, the practical runtime of these methods will vary significantly on different graphs. 
In the following, we present the practical runtime of all compared methods on above evaluated datasets. 
To test the runtime fairly, we run all methods at the same machine, and only run one experiment at the same time. 
The iterations and practical runtime on different datasets are 
shown respectively in Table \ref{table: runtime of segmentation} for Segmentation and Scene, Table \ref{table: runtime of grids} for Grids and Table \ref{table: runtime of protein} for Protein.

As shown in Table \ref{table: runtime of segmentation}, on the small and easy models, the runtime of LP-LP, MPLP-C, ogm-ICM, PSDD and AD3 are very small, and the runtime of ogm-BP, ogm-TRBP, ogm-TRWS, ogm-ADSal, LS-LP and $\ell_2$-Box are larger, while the runtime of DAOOPT are the largest.  
Besides, the iterations of AD3 and LS-LP are much smaller than the one of PSDD, given the fact that their similar computational complexities per iteration. 
The iterations of LP-LP and MPLP-C are also provided, but they are incomparable with PSDD, AD3 and LS-LP. The complexity of each iteration in LP-LP depends on the problem size $|\boldsymbol{\mu}|$ (see Eq. (\ref{eq: MAP inference over L(G)})). 
In MPLP-C, each outer iteration includes 100 iterations of MPLP and adding violated constraints. The complexity of each MPLP iteration is stable, but the complexity of adding violated constraints varies significantly in different outer iterations. 

In terms of the comparison on Grids (see Table \ref{table: runtime of grids}), the iterations and runtime of LP-LP are smaller than those of other methods, but it gives uniform solutions on all models. As demonstrated Section \ref{sec: results on segmentation and scene}, LP-LP, PSDD and AD3 produce uniform solutions on all models, thus we also don't present their iterations and runtime. 
The runtime of DAOOPT on small models (\ie, M1 to M5) are small, and it achieves 7200 seconds (the upper limit) on M6 to M13, while it cannot give any solution in 7200 seconds on M14 to M21.  
For MPLP-C, both iterations and runtime are large on all models. Note that the runtime per iteration of MPLP-C becomes larger along with the model scale. 
For ICM, the iteration is large on multiple models, but with very small runtime, as its complexity per iteration is low. 
Both message passing methods, including ogm-BP and ogm-TRBP, achieve the upper limit of iterations. It demonstrate that their convergence is very slow. 
The convergence of ogm-DD-SG is also slow, and its runtime per iteration is even higher than above two message passing methods. 
Two LP relaxation based methods, including ogm-TRWS and ogm-ADSal, converge in a few iterations, and are of small runtime. 
In contrast, the iterations of LS-LP are only larger than those of ogm-TRWS, ogm-ADSal and $\ell_2$-Box, while smaller than other methods. 
And, the runtime per iteration of LS-LP is similar with that of ogm-BP, while smaller than those of other methods except of ogm-ICM and ogm-TRWS. Considering that C++ is much more efficient than MATLAB, if LS-LP is also implemented by C++, its runtime should be much lower than those of most compared methods. In other words, the computational complexity of LS-LP is much smaller than most compared methods.
Note that the runtime per iteration of $\ell_2$-Box increases along with the model size. For example, its runtime per iteration on M1 is $0.014$ seconds, while that on M21 is $1.795$ seconds. In contrast, the runtime per iteration of LS-LP are rather stable. As shown in Table \ref{table: complexity}, the complexity per iteration of $\ell_2$-Box is $O\big( [\sum_i^V |\mathcal{X}_i|]^3 \big)$. Obviously, $\ell_2$-Box is difficult to apply to large-scale models.  
In contrast, LS-LP is conducted based on the decomposition of the factor graph to independent factors and variables, due to which the parallel computation is allowed. Thus, the scalability of LS-LP is much better than $\ell_2$-Box for MAP inference.

In terms of the comparison on Protein (see Table \ref{table: runtime of protein}), 
the ascending ranking of runtime is ogm-ICM, PSDD, AD3, MPLP-C, LS-LP, ogm-BP, ogm-TRBP, ogm-DD-SG. 
Although the runtime of PSDD and AD3 are very small, but they only give {\it approximate} solutions in 2000 iterations on all models, except for AD3 on M4. 
For MPLP-C, the iterations and runtime are small on M4 and M8, but very large on all other 5 models. 
In contrast, although the runtime of LS-LP is much larger than those of PSDD and AD3, its iterations are much smaller on all models. If LS-LP is also implemented by C++, its practical runtime will be much smaller than those of PSDD and AD3.

In summary, the above comparisons on iterations and practical runtime demonstrate:
\begin{enumerate}
\item LS-LP converges much faster than most compared methods, except of ogm-TRWS, ogm-ADSal and $\ell_2$-Box. On large-scale models (see Table \ref{table: results of protein}), the complexities per iteration of LS-LP, PSDD and AD3 are similar, and are much smaller than those of other methods (except of ICM).
\item The complexity of MPLP-C is larger than PSDD, AD3 and LS-LP, while smaller than DAOOPT. But its practical iterations and runtime vary significantly on different models. 
Both the complexity and practical runtime of DAOOPT are much larger than other methods. 
\item Considering the performance of the MAP inference results presented in Section \ref{sec: results on segmentation and scene}, \ref{sec: result on grids} and \ref{sec: result on protein}, we conclude that LS-LP shows very competitive performance compared to state-of-the-art MAP inference methods.
\end{enumerate}

\vspace{-1em}
\subsection{Discussions} 
We obtain three conclusions from above experiments evaluated on different types of models. \textbf{1)} Compared with the hybrid methods including DAOOPT and MPLP-C, the performance of LS-LP is comparable. However, the computational cost of LS-LP is much lower, as the hybrid methods adopt LP relaxation based methods as sub-routines.  
\textbf{2)} Compared with LP relaxation based methods, especially PSDD and AD3, LS-LP always give valid solutions, without rounding; and, the logPot values of LS-LP are much higher, with similar computational cost. 
\textbf{3)} Compared to $\ell_2$-Box, which can be only applied to models with pairwise potentials, LS-LP is applied to any type of models. Besides, the decomposition of the factor graph allows for the parallel computations with respect to each factor and each variable, while $\ell_2$-Box solves a QP problem over the whole MRF model. Thus, LS-LP is a much better choice than $\ell_2$-Box for MAP inference.
\textbf{4)} Compared to other approximated methods, LS-LP always shows much better performance in difficult models (\eg, Grids and Protein).

\section{Conclusions}
\label{sec: conclusion}

In this work, we proposed an novel formulation of MAP inference, called $\ell_2$-sphere linear program (LS-LP). Starting from the standard linear programming (LP) relaxation, we added the $\ell_2$-sphere constraint onto variable nodes. The intersection between the $\ell_2$-sphere constraint and the local marginal polytope $\mathcal{L}_G$ in LP relaxation is proved to be the exact set of all valid integer label configurations. Thus, the proposed LS-LP problem is equivalent to the original MAP inference problem. 
By adding a sufficiently small perturbation $\epsilon$ onto the objective function and constraints, we proposed a perturbed ADMM algorithm for optimizing the LS-LP problem. 
Although the $\ell_2$-sphere constraint is non-convex, we proved that the ADMM algorithm will globally converge to the $\epsilon$-KKT point of the LS-LP problem. 
The analysis of convergence rate is also presented. 
Experiments on three benchmark datasets show the competitive performance of LS-LP compared to state-of-the-art MAP inference methods.

\comment{
\vspace{1em}
\noindent 
{\bf Acknowledgement} 
Baoyuan Wu was partially supported by Tencent AI Lab and King Abdullah University of Science
and Technology (KAUST). 
Li Shen was supported by Tencent AI Lab. 
Bernard Ghanem was supported by the King Abdullah University of Science
and Technology (KAUST) Office of Sponsored Research
through the Visual Computing Center (VCC) funding.
Tong Zhang was supported by the Hong Kong University of Science and Technology (HKUST). 
}

\newpage
\onecolumn
\appendix

\section{Convergence Analysis}
\label{appendix sec: convergence}

To facilitate the convergence analysis, 
here we rewrite some equations and notations firstly defined in Sections \ref{sec: perturbed ADMM for LS-LP}. 
Problem (\ref{eq: MAP inference over L(G) and v over S}) can be simplified to the following general shape, as follows
\begin{flalign}
& \text{LS-LP}(\boldsymbol{\theta})
  =  
  \min_{\x, \y} f(\x) + h(\y), ~\text{s.t.} ~  \A \x = \B \y.
  \label{appendix eq: LS-LP(x,y)}
\end{flalign}
Our illustration for (\ref{eq: LS-LP(x,y)}) is separated into three parts, as follows:
\begin{enumerate}
    \item {\bf Variables}. $\x = [ \boldsymbol{\mu}_1; \ldots; \boldsymbol{\mu}_{|V|} ] \in \mathbb{R}^{\sum_{i}^{ V} |\mathcal{X}_i|}$, and it concatenates all variable nodes $\boldsymbol{\mu}_V$.  
    $\y = [\y_1; \ldots; \y_{|V|}]$ with $\y_i = [\boldsymbol{\upsilon}_i; \boldsymbol{\mu}_{\alpha_{i,1}}; \ldots; \boldsymbol{\mu}_{\alpha_{i,|\mathcal{N}_i|}}] \in \mathbb{R}^{|\mathcal{X}_i| + \sum_{\alpha}^{\mathcal{N}_i} |\mathcal{X}_{\alpha}|}$.
    $\y$ concatenates all factor nodes $\boldsymbol{\mu}_V$ and the extra variable nodes $\boldsymbol{\upsilon}$; $\y_i$ concatenates the factor nodes and the extra variable node  connected to the $i$-th variable node $\boldsymbol{\mu}_i$. 
    $\mathcal{N}_i$ indicates the set of neighborhood factor nodes connected to the $i$-th variable node; 
    the subscript $\alpha_{i,j}$ indicates the $j$-th factor connected to the $i$-th variable, with $i \in V$ and $j \in \mathcal{N}_i$. 
    \item {\bf Objective functions}. 
    $f(\x)$ $ = \w_{\x}^\top \x$ with $\w_{\x} = - [\boldsymbol{\theta}_1; \ldots; $ $\boldsymbol{\theta}_{|V|}]$. $h(\y) = g(\y) + \w_{\y}^\top \y$, with $\w_{\y} = [\w_1; \ldots;$ $\w_{|V|}]$ with $\w_{i} = -[\boldsymbol{0}; \frac{1}{|\mathcal{N}_{\alpha_{i,1}}|} \boldsymbol{\theta}_{\alpha_{i,1}};$ 
    $\ldots; \frac{1}{|\mathcal{N}_{\alpha_{i,|\mathcal{N}_i|}}|} \boldsymbol{\theta}_{\alpha_{i,|\mathcal{N}_i|}}]$, and $\mathcal{N}_{\alpha} = \{ i \mid (i, \alpha) \in E\}$ being the set of neighborhood variable nodes connected to the $\alpha$-th factor. 
    $g(\y) = \mathbb{I}(\boldsymbol{\upsilon} \in \mathcal{S}) + \sum_{\alpha \in F} \mathbb{I}(\boldsymbol{\mu}_{\alpha} \in \Delta^{|\mathcal{X}_{\alpha}|})$, with $\mathbb{I}(a)$ being the indicator function: $\mathbb{I}(a)=0$ if $a$ is true, otherwise $\mathbb{I}(a)=\infty$.
    \item {\bf Constraint matrices}. 
    The constraint matrix $\A = \text{diag}($ $\A_1, \ldots, \A_i, \ldots, \A_{|V|})$ with $\A_i = [\mathbf{I}_{|\mathcal{X}_i|}; \ldots; \mathbf{I}_{|\mathcal{X}_i|} ] \in \{0,1\}^{(|\mathcal{N}_i| +1)|\mathcal{X}_i| \times  |\mathcal{X}_i|}$. 
    $\B = \text{diag}(\B_1, \ldots,$ $ \B_i, \ldots, \B_{|V|})$, with $\B_i = \text{diag}(\mathbf{I}_{|\mathcal{X}_i|}, \M_{i, \alpha_{i,1}}, \ldots, \M_{i, \alpha_{i, |\mathcal{N}_i|}} )$. 
    $\A$ summarizes all constraints on $\boldsymbol{\mu}_V$, while $\B$ collects all constraints on $\boldsymbol{\mu}_F$ and $\boldsymbol{\upsilon}$. 
\end{enumerate}
Note that Problem (\ref{eq: LS-LP(x,y)}) has a clear structure with two groups of variables, corresponding the augmented factor graph (see Fig. \ref{fig: factor graph}(c)).

According to the analysis presented in \cite{wotao-yin-arxiv-2015}, a sufficient condition to ensure the global convergence of the ADMM algorithm for the problem  
$\text{LS-LP}(\boldsymbol{\theta})$ is that 
$\text{Im}(\B) \subseteq \text{Im}(\A)$, with $\text{Im}(\A)$ 
being the image of $\A$,
  \ie, the column space of $\A$.  However, $\A$ in (\ref{appendix eq: LS-LP(x,y)}) is full column rank, rather than full row rank, while $\B$ is full row rank. 
To satisfy this sufficient condition, we introduce a sufficiently small perturbation to both the objective function and the constraint in (\ref{appendix eq: LS-LP(x,y)}), as follows
\begin{flalign}
& \text{LS-LP}(\boldsymbol{\theta}; \epsilon)
  =  
  \min_{\hat{\x}, \y} \hat{f}(\hat{\x}) + h(\y), ~\text{s.t.} ~  \hat{\A} \hat{\x} = \B \y,
  \label{appendix eq: LS-LP(x,y) with perturbation}
\end{flalign}
where $\hat{\A} = [\A, \epsilon \mathbf{I}]$ with a sufficiently small constant $\epsilon > 0$, then $\hat{\A}$ is full row rank. 
$\hat{\x} = [\x; \bar{\x}]$, with $\bar{\x} = [\bar{\x}_1; \ldots; \bar{\x}_{|V|}] \in \mathbb{R}^{\sum_i^{V} (|\mathcal{N}_i|+1) |\mathcal{X}_i|}$ and  $\bar{\x}_i = [\boldsymbol{\mu}_i; \ldots; \boldsymbol{\mu}_i] \in \mathbb{R}^{(|\mathcal{N}_i|+1) |\mathcal{X}_i|}$. 
$\hat{f}(\hat{\x}) = f(\x) + \frac{1}{2}\epsilon \hat{\x}^\top \hat{\x}$. 
Consequently, $\text{Im}(\hat{\A}) \equiv \text{Im}(\B) \subseteq \mathbb{R}^{\text{rank of } ~ \hat{\A}}$, as both $\hat{\A}$ and $\B$ are full row rank. Then, the sufficient condition $\text{Im}(\B) \subseteq \text{Im}(\hat{\A})$ holds.

The augmented Lagrangian function of (\ref{appendix eq: LS-LP(x,y) with perturbation}) is formulated as
\begin{flalign}
\mathcal{L}_{\rho, \epsilon}(\hat{\x}, \y, \boldsymbol{\lambda}) = \hat{f}(\hat{\x}) + h(\y) + \boldsymbol{\lambda}^\top (\hat{\A} \hat{\x} - \B \y) + \frac{\rho}{2} \| \hat{\A} \hat{\x} - \B \y \|_2^2
\end{flalign}
The updates of the ADMM algorithm to optimize (\ref{appendix eq: LS-LP(x,y) with perturbation}) are as follows
\begin{flalign}
\begin{cases}
 \y^{k+1} & = \argmin_{\y} \mathcal{L}_{\rho, \epsilon}(\hat{\x}^k, \y, \boldsymbol{\lambda}^k),
 \\
 \hat{\x}^{k+1} & = \argmin_{\hat{\x}} \mathcal{L}_{\rho, \epsilon}(\hat{\x}, \y^{k+1}, \boldsymbol{\lambda}^k),
 \\
 \boldsymbol{\lambda}^{k+1} & = \boldsymbol{\lambda}^k + \rho (\hat{\A} \hat{\x}^{k+1} - \B \y^{k+1}).
 \end{cases}
\end{flalign}
The optimality conditions of the variable sequence $(\y^{k+1}, \hat{\x}^{k+1}, \boldsymbol{\lambda}^{k+1})$ generated above are 
\begin{flalign}
 & \B^\top \boldsymbol{\lambda}^k + \rho \B^\top (\hat{\A} \hat{\x}^k - \B \y^{k+1}) =  \B^\top \boldsymbol{\lambda}^{k+1} - \rho  \B^\top \hat{\A} (\hat{\x}^{k+1} - \hat{\x}^k) \in \partial h(\y^{k+1}), 
 \label{appendix eq: optimality 1}
 \\
 & \nabla \hat{f}(\hat{\x}^{k+1}) + \hat{\A}^\top \boldsymbol{\lambda}^k + \rho \hat{\A}^\top (\hat{\A} \hat{\x}^{k+1} - \B \y^{k+1}) = \nabla \hat{f}(\hat{\x}^{k+1}) + \hat{\A}^\top \boldsymbol{\lambda}^{k+1} = \boldsymbol{0},
 \label{appendix eq: optimality 2}
 \\
 & \frac{1}{\rho} (\boldsymbol{\lambda}^{k+1} - \boldsymbol{\lambda}^{k}) = \hat{\A} \hat{\x}^{k+1} - \B \y^{k+1}. 
 \label{appendix eq: optimality 3}
\end{flalign}

The convergence of this perturbed ADMM algorithm for the LS-LP problem is summarized in Theorem \ref{theorem: convergence}. 
The detailed proof is presented in the following sub-sections sequentially. 
Note that hereafter $\| \cdot \|$ indicates the $\ell_2$ norm for a vector, or the Frobenius norm for a matrix; $\mathcal{A}_1 \succeq \mathcal{A}_2$ represents that $\mathcal{A}_1 - \mathcal{A}_2$ is positive semi-definite, with $\mathcal{A}_1, \mathcal{A}_2$ being square matrices;
$\nabla$ denotes the gradient operator, $\nabla^2$ means the Hessian operator, and $\partial$ is the sub-gradient operator; 
$\I$ represents the identity matrix with compatible shape.

\subsection{Properties}
\label{sec: subsection property}
In this section, we present some important properties of the objective function and constraints in (\ref{appendix eq: LS-LP(x,y) with perturbation}), which will be used in the followed convergence analysis.

\vspace{3pt}
\noindent
{\bf Properties on objective functions (P1)}
\begin{itemize}
    \item (P1.1) $f, h$ and $\mathcal{L}_{\rho, \epsilon}$ are semi-algebraic, lower semi-continuous functions and satisfy Kurdyka-Lojasiewicz (KL) property, and $h$ is closed and proper
    \item (P1.2)  There exist $\mathcal{Q}_1, \mathcal{Q}_2$ such that $\mathcal{Q}_1 \succeq \nabla^2 \hat{f}(\hat{\x}) \succeq \mathcal{Q}_2$, $\forall \hat{\x}$
    \item (P1.3) $\lim \inf_{\| \hat{\x} \| \rightarrow \infty} \| \nabla \hat{f}(\hat{\x}) \| = \infty$
\end{itemize}

\vspace{3pt}
\noindent
{\bf Properties on constraint matrices (P2)} 
\begin{itemize}
    \item (P2.1) There exists $\sigma > 0$ such that $\hat{\A} \hat{\A}^\top \succeq \sigma \I$
    \item (P2.2) $\mathcal{Q}_2 + \rho \hat{\A}^\top \hat{\A} \succeq \delta \I$ for some $\rho, \delta > 0$, and $\rho > \frac{1}{\epsilon} $
    \item (P2.3) There exists $\mathcal{Q}_3 \succeq [\nabla^2 \hat{f}(\hat{\x})]^2, \forall \hat{\x}$, and $\delta \I \succ \frac{2}{\sigma \rho} \mathcal{Q}_3$ 
    \item (P2.4) Both $\hat{\A}$ and $\B$ are full row rank, and  $\text{Im}(\hat{\A}) \equiv \text{Im}(\B) \subseteq \mathbb{R}^{\text{rank of }~ \hat{\A}}$ 
\end{itemize}

\noindent
{\bf Remark}. 
{\bf (1)} 
Although the definition of KL property (see Definition \ref{definition: KL property}) is somewhat complex, but it holds for many widely used functions, according to \cite{kl-examples-2013}. Typical functions satisfying KL property includes: {\it a)} real analytic functions, and any polynomial function such as $\| \mathbf{H} \x - \mathbf{b} \|$ belongs to this type;  
{\it b)} locally strongly convex functions, such as 
the logistic loss function $\log(1+\exp(-\x))$;
{\it c)} semi-algebraic functions, such as $\| \x \|_1, \| \x \|_2,\| \x \|_{\infty}, \| \mathbf{H} \x - \mathbf{b} \|_1, \| \mathbf{H} \x - \mathbf{b} \|_2, \| \mathbf{H} \x - \mathbf{b} \|_{\infty}$ and the indicator function $\mathbb{I}(\cdot)$. 
It is easy to verify that P1.1 holds in our problem.
{\bf (2)} 
Here we provide an instantiation of above hyper-parameters satisfying above properties. 
Firstly, it is easy to obtain that $\nabla^2 \hat{f}(\hat{\x}) = \epsilon \I$, 
and
$\hat{\A} \hat{\A}^\top = [\A, \epsilon \I] [\A, \epsilon \I]^\top = \A \A^\top + \epsilon^2 \I  \succ \epsilon^2 \I$, 
as well as
$\rho \hat{\A}^\top \hat{\A} \succeq \epsilon \I$, when $\epsilon$ is small enough and $\rho > \frac{1}{\epsilon}$ (\eg, $\rho = \frac{2}{\epsilon}$). 
Then, the values $\mathcal{Q}_1 = \mathcal{Q}_2 = \epsilon \I, \mathcal{Q}_3 = \epsilon^2 \I, \delta = 2 \epsilon, \sigma = \epsilon^2$ satisfy P1.2, P2.1, P2.2 and P2.3. 
Without loss of generality, we will adopt these specific values for these hyper-parameters to simplify the following analysis, while only keeping $\rho$ and $\epsilon$.

\subsection{Decreasing of $\mathcal{L}_{\rho, \epsilon}(\y^k, \hat{\x}^k, \boldsymbol{\lambda}^{k})$}
In this section, we firstly prove the decreasing property of the augmented Lagrangian function, \ie, 
\begin{flalign}
 \mathcal{L}_{\rho, \epsilon}(\y^k, \hat{\x}^k, \boldsymbol{\lambda}^k) > \mathcal{L}_{\rho, \epsilon}(\y^{k+1}, \hat{\x}^{k+1}, \boldsymbol{\lambda}^{k+1}), \forall k.
\end{flalign}

Firstly, utilizing P2.1, P2.3 and (\ref{appendix eq: optimality 2}), we obtain that
\begin{flalign}
 \epsilon^2 \| \boldsymbol{\lambda}^{k+1} - \boldsymbol{\lambda}^{k} \|_2^2 
 \leq 
 \| \hat{\A}(\boldsymbol{\lambda}^{k+1} - \boldsymbol{\lambda}^{k}) \|_2^2
 =
 \| \nabla \hat{f}(\hat{\x}^{k+1}) - \nabla \hat{f}(\hat{\x}^k) \|_2^2 
 = 
 \epsilon^2 \| \hat{\x}^{k+1} - \hat{\x}^k \|_2^2.
 \label{appendix eq: dual_residual^2 <= x_residual^2}
\end{flalign}

Then, we have 
\begin{flalign}
 & \mathcal{L}_{\rho, \epsilon}(\y^{k+1}, \hat{\x}^{k+1}, \boldsymbol{\lambda}^{k+1}) - 
 \mathcal{L}_{\rho, \epsilon}(\y^{k+1}, \hat{\x}^{k+1}, \boldsymbol{\lambda}^{k})
 = 
 (\boldsymbol{\lambda}^{k+1} - \boldsymbol{\lambda}^{k})^\top (\hat{\A} \hat{\x}^{k+1} - \B \y^{k+1})
 \label{appendix eq: L(yk+1,xk+1,dualk+1)-L(yk+1,xk+1,dualk)}
 \\
 = & 
 \frac{1}{\rho} \| \boldsymbol{\lambda}^{k+1} - \boldsymbol{\lambda}^{k} \|_2^2
 \leq \frac{1}{\rho} \| \hat{\x}^{k+1} - \hat{\x}^k \|_2^2
 \nonumber
\end{flalign} 

According to P1.2 and P2.2, $\mathcal{L}_{\rho, \epsilon}(\y^{k+1}, \hat{\x}, \boldsymbol{\lambda}^{k})$ is strongly convex with respect to $\hat{\x}$, with the parameter of at least $2 \epsilon$. Then, we have 
\begin{flalign}
 & \mathcal{L}_{\rho, \epsilon}(\y^{k+1}, \hat{\x}^{k+1}, \boldsymbol{\lambda}^{k}) - 
 \mathcal{L}_{\rho, \epsilon}(\y^{k+1}, \hat{\x}^{k}, \boldsymbol{\lambda}^{k})
 \leq 
  - \epsilon \| \hat{\x}^{k+1} - \hat{\x}^k \|_2^2.
  \label{appendix eq: L(yk+1,xk+1,dualk)-L(yk+1,xk,dualk)}
\end{flalign} 

As $\y^{k+1}$ is the minimal solution of $\mathcal{L}_{\rho, \epsilon}(\y, \hat{\x}^{k}, \boldsymbol{\lambda}^{k})$, it is easy to know
\begin{flalign}
 & \mathcal{L}_{\rho, \epsilon}(\y^{k+1}, \hat{\x}^k, \boldsymbol{\lambda}^{k}) - 
 \mathcal{L}_{\rho, \epsilon}(\y^k, \hat{\x}^{k}, \boldsymbol{\lambda}^{k})
 \leq  0.
 \label{appendix eq: L(yk+1,xk,dualk)-L(yk,xk,dualk)}
\end{flalign} 

Combining (\ref{appendix eq: L(yk+1,xk+1,dualk+1)-L(yk+1,xk+1,dualk)}), (\ref{appendix eq: L(yk+1,xk+1,dualk)-L(yk+1,xk,dualk)}) and (\ref{appendix eq: L(yk+1,xk,dualk)-L(yk,xk,dualk)}), we have 
\begin{flalign}
 \mathcal{L}_{\rho, \epsilon}(\y^{k+1}, \hat{\x}^{k+1}, \boldsymbol{\lambda}^{k+1}) - 
 \mathcal{L}_{\rho, \epsilon}(\y^{k}, \hat{\x}^{k}, \boldsymbol{\lambda}^{k}) 
 \leq 
 (\frac{1}{\rho} - \epsilon ) \| \hat{\x}^{k+1} - \hat{\x}^k \|_2^2 
 < 0, 
 \label{appendix eq: L(yk+1,xk+1,dualk+1)-L(yk,xk,dualk) < 0}
\end{flalign}
where the last inequality utilizes P2.3 and $\rho > \frac{1}{\epsilon}$.

\subsection{Boundedness of $\{\y^k, \hat{\x}^k, \boldsymbol{\lambda}^k\}$}

Next, we prove the boundedness of $\{\y^k, \hat{\x}^k, \boldsymbol{\lambda}^k\}$. 
We suppose that $\rho$ is large enough such that there is $0<\gamma<\rho$ with 
\begin{flalign}
 \inf_{\hat{\x}} \big( \hat{f}(\hat{\x}) - \frac{1}{2 \epsilon^2 \gamma} \| \nabla \hat{f}(\hat{\x}) \|_2^2 \big) = f^* > - \infty. 
\end{flalign}
According to (\ref{appendix eq: L(yk+1,xk+1,dualk+1)-L(yk,xk,dualk) < 0}), for any $k \geq 1$, we have 
\begin{flalign}
&\mathcal{L}_{\rho, \epsilon}(\y^k, \hat{\x}^k, \boldsymbol{\lambda}^k) 
=
\hat{f}(\hat{\x}^k) + h(\y^k) + \frac{\rho}{2} \| \hat{\A} \hat{\x}^k - \B \y^k + \frac{\boldsymbol{\lambda}^k}{\rho} \|_2^2 - \frac{1}{2\rho} \| \boldsymbol{\lambda}^k \|_2^2 
\leq 
 \mathcal{L}_{\rho, \epsilon}(\y^1, \hat{\x}^1, \boldsymbol{\lambda}^1) < \infty .
\label{appendix eq: L(yk,xk,dualk) <= L(y1,x1,dual1)}
\end{flalign}
Besides, according to P2.1, we have 
\begin{flalign}
 \epsilon^2 \| \boldsymbol{\lambda}^k \|_2^2 \leq \| \hat{\A}^\top \boldsymbol{\lambda}^k \|_2^2 = \| \nabla \hat{f}(\hat{\x}^k) \|_2^2. 
 \label{appendix eq: sigma * dual^2 <= gradient f(x)^2}
\end{flalign}
Plugging (\ref{appendix eq: sigma * dual^2 <= gradient f(x)^2}) into (\ref{appendix eq: L(yk,xk,dualk) <= L(y1,x1,dual1)}), we obtain that
\begin{flalign}
\infty & > \hat{f}(\hat{\x}^k) + h(\y^k) + \frac{\rho}{2} \| \hat{\A} \hat{\x}^k - \B \y^k + \frac{\boldsymbol{\lambda}^k}{\rho} \|_2^2 - \frac{1}{2 \epsilon^2 \rho} \| \nabla \hat{f}(\hat{\x}^k)  \|_2^2 
 \geq f^* + \frac{\frac{1}{\gamma} - \frac{1}{\rho}}{2 \epsilon^2} \| \nabla \hat{f}(\hat{\x}^k) \|_2^2 + h(\y^k) + \frac{\rho}{2} \| \hat{\A} \hat{\x}^k - \B \y^k + \frac{\boldsymbol{\lambda}^k}{\rho} \|_2^2 .
\end{flalign}
According to the coerciveness of $\nabla \hat{f}(\hat{\x}^k)$ (\ie, P1.3), we obtain that $\hat{\x}^k < \infty, \forall k$, \ie, the boundedness of $\{\hat{\x}^k\}$. From (\ref{appendix eq: sigma * dual^2 <= gradient f(x)^2}), we know the boundedness of $\{\boldsymbol{\lambda}^k\}$. 
Besides, according to P2.4, $\{\hat{\A}\hat{\x}^k\}$ is also bounded. 
From (\ref{appendix eq: optimality 3}), we obtain the boundedness of $\{\B\y^k\}$. Considering the full row rank of $\B$ (\ie, P2.4), the boundedness of $\{\y^k\}$ is proved.

\subsection{Convergence of Residual}
\label{sec: subsec Convergence of residual}

According to the boundedness of $\{\y^k, \hat{\x}^k, \boldsymbol{\lambda}^k\}$, there is a sub-sequence $\{\y^{k_i}, \hat{\x}^{k_i}, \boldsymbol{\lambda}^{k_i}\}$ that converges to a cluster point $\{\y^*, \hat{\x}^*, \boldsymbol{\lambda}^*\}$. Considering the lower semi-continuity of $\mathcal{L}_{\rho, \epsilon}$ (\ie, P1.1), we have 
\begin{flalign}
& \underset{i \rightarrow \infty}{\lim \inf} ~ \mathcal{L}_{\rho, \epsilon}(\y^{k_i}, \hat{\x}^{k_i}, \boldsymbol{\lambda}^{k_i}) \geq \mathcal{L}_{\rho, \epsilon}(\y^*, \hat{\x}^*, \boldsymbol{\lambda}^*) > - \infty.
\label{appendix eq: lower semicontinuous of L_rho}
\end{flalign}

Summing (\ref{appendix eq: L(yk+1,xk+1,dualk+1)-L(yk,xk,dualk) < 0}) from $k = M, \ldots, N-1$ with $M \geq 1$, we have 
\begin{flalign}
& \mathcal{L}_{\rho, \epsilon}(\y^N, \hat{\x}^N, \boldsymbol{\lambda}^N) - \mathcal{L}_{\rho, \epsilon}(\y^M, \hat{\x}^M, \boldsymbol{\lambda}^M) 
\label{appendix eq: L^N - L^M}
\leq (\frac{1}{\rho} - \epsilon) \sum_{k=M}^{N-1} \| \hat{\x}^{k+1} - \hat{\x}^k \|_2^2 < 0. 
\end{flalign}
Then, by setting $N = k_i$ and $M=1$, we have 
\begin{flalign}
&  \mathcal{L}_{\rho, \epsilon}(\y^{k_i}, \hat{\x}^{k_i}, \boldsymbol{\lambda}^{k_i}) - \mathcal{L}_{\rho, \epsilon}(\y^1, \hat{\x}^1, \boldsymbol{\lambda}^1) 
\leq 
(\frac{1}{\rho} - \epsilon) \sum_{k=1}^{k_i-1} \| \hat{\x}^{k+1} - \hat{\x}^k \|_2^2. 
\label{eq: sum ||x^k+1 - x^k||_2^2 > Lk - L1}
\end{flalign}
Taking limit on both sides of the above inequality, we obtain
\begin{flalign}
& -\infty < \mathcal{L}_{\rho, \epsilon}(\y^*, \hat{\x}^*, \boldsymbol{\lambda}^*) - \mathcal{L}_{\rho, \epsilon}(\y^1, \hat{\x}^1, \boldsymbol{\lambda}^1) 
\leq 
(\frac{1}{\rho} - \epsilon) \sum_{k=1}^{\infty} \| \hat{\x}^{k+1} - \hat{\x}^k \|_2^2 < 0. 
\end{flalign}
It implies that 
\begin{flalign}
\lim_{k \rightarrow \infty }\| \hat{\x}^{k+1} - \hat{\x}^k \| = 0.
\label{appendix eq: x-residual converges to 0}
\end{flalign}
Besides, according to (\ref{appendix eq: dual_residual^2 <= x_residual^2}), it is easy to obtain that 
\begin{flalign}
\lim_{k \rightarrow \infty }\| \boldsymbol{\lambda}^{k+1} - \boldsymbol{\lambda}^k \| = 0.
\label{appendix eq: dual-residual converges to 0}
\end{flalign}
Moreover, utilizing $\B \y^{k+1} = \hat{\A} \hat{\x}^{k+1} - \frac{1}{\rho}(\boldsymbol{\lambda}^{k+1} - \boldsymbol{\lambda}^k)$ from (\ref{appendix eq: optimality 3}), we have 
\begin{flalign}
\| \B(\y^{k+1} - \y^{k}) \| \leq \| \hat{\A} (\hat{\x}^{k+1} - \hat{\x}^k) \| + \frac{1}{\rho} \| (\boldsymbol{\lambda}^{k+1} - \boldsymbol{\lambda}^k) \| + \frac{1}{\rho} \| (\boldsymbol{\lambda}^{k} - \boldsymbol{\lambda}^{k-1}) \|.
\label{appendix eq: || B(yk+1 - yk || <=}
\end{flalign}
Besides, as shown in Lemma 1 in \cite{wotao-yin-arxiv-2015}, the full row rank of $\B$ (\ie, P1.4) implies that
\begin{flalign}
\| \y^{k+1} - \y^k \| \leq \Bar{M} \| \B (\y^{k+1} - \y^k) \|,
\label{appendix eq: yk - y* < B(yk - y*)}
\end{flalign}
where $\Bar{M} > 0$ is a constant. 
Taking limit on both sides of (\ref{appendix eq: || B(yk+1 - yk || <=}) and utilizing (\ref{appendix eq: yk - y* < B(yk - y*)}), we obtain 
\begin{flalign}
\lim_{k \rightarrow \infty }\| \y^{k+1} - \y^k \| = 0.
\label{appendix eq: y-residual converges to 0}
\end{flalign}
Combining (\ref{appendix eq: x-residual converges to 0}), (\ref{appendix eq: dual-residual converges to 0}) and (\ref{appendix eq: y-residual converges to 0}), we obtain that 
\begin{flalign}
\lim_{k \rightarrow \infty } \| \y^{k+1} - \y^k \|_2^2 +\| \hat{\x}^{k+1} - \hat{\x}^k \|_2^2 + \| \boldsymbol{\lambda}^{k+1} - \boldsymbol{\lambda}^k \|_2^2 = 0.
\label{appendix eq: x_residual + y-residual + dual-residual converges to 0}
\end{flalign}
By setting $k+1 = k_i$, plugging (\ref{appendix eq: x-residual converges to 0}) into (\ref{appendix eq: optimality 1}) and (\ref{appendix eq: dual-residual converges to 0}) into (\ref{appendix eq: optimality 2}), and taking limit $k_i \rightarrow \infty$, we obtain the KKT conditions. It tells that the cluster point $(\y^*, \hat{\x}^*)$ is the KKT point of $\text{$\text{LS-LP}(\boldsymbol{\theta; \epsilon})$}$ (\ie, (\ref{appendix eq: LS-LP(x,y) with perturbation})).

\subsection{Global Convergence}
\label{appendix: global convergence}
Inspired by the analysis presented in \cite{admm-nonconvex-siam-2015}, 
in this section we will prove the following conclusions:
\begin{itemize}
    \item $\sum_{k=1}^{\infty} \| \hat{\x}^{k+1} - \hat{\x}^k \| < \infty$;
\item $\{\y^k, \hat{\x}^k, \boldsymbol{\lambda}^k\}$ converges to $(\y^*, \hat{\x}^*,  \boldsymbol{\lambda}^*)$;
\item $(\y^*, \hat{\x}^*)$ is the KKT point of (\ref{appendix eq: LS-LP(x,y) with perturbation}).
\end{itemize}

Firstly, utilizing the optimality conditions (\ref{appendix eq: optimality 1}, \ref{appendix eq: optimality 2}, \ref{appendix eq: optimality 3}),  we have that 
\begin{flalign}
 \partial_{\y} \mathcal{L}_{\rho, \epsilon}(\hat{\x}^{k+1}, \y^{k+1}, \boldsymbol{\lambda}^{k+1})
& = ~\partial h(\y^{k+1}) - \B^\top \boldsymbol{\lambda}^{k+1} - \rho \B^\top  (\hat{\A} \hat{\x}^{k+1} - \B \y^{k+1}) 
\ni  - \B^\top (\boldsymbol{\lambda}^{k+1} - \mathbf{\lambda}^k) - \rho \B^\top \hat{\A} (\hat{\x}^{k+1} - \hat{\x}^k),
\label{appendix eq: partial L_rho over y^k+1}
\\
\nabla_{\hat{\x}} \mathcal{L}_{\rho, \epsilon}(\hat{\x}^{k+1}, \y^{k+1}, \boldsymbol{\lambda}^{k+1})
& = \nabla_{\hat{\x}} \hat{f}(\hat{\x}^{k+1}) + \hat{\A}^\top \boldsymbol{\lambda}^{k+1} + \rho \hat{\A}^\top (\hat{\A} \hat{\x}^{k+1} - \B \y^{k+1}) = \hat{\A}^\top (\boldsymbol{\lambda}^{k+1} - \mathbf{\lambda}^k), 
\label{appendix eq: gradient L_rho over x^k+1}
\\
\nabla_{\boldsymbol{\lambda}} \mathcal{L}_{\rho, \epsilon}(\hat{\x}^{k+1}, \y^{k+1}, \boldsymbol{\lambda}^{k+1}) 
& = \hat{\A} \hat{\x}^{k+1} - \B \y^{k+1} 
= \frac{1}{\rho} (\boldsymbol{\lambda}^{k+1} - \mathbf{\lambda}^k).
\label{appendix eq: gradient L_rho over lambda^k+1}
\end{flalign}
Further, combining with (\ref{appendix eq: dual_residual^2 <= x_residual^2}), there exists a constant $C>0$ such that
\begin{flalign}
 & \text{dist}\big(0, \partial_{(\y, \hat{\x}, \boldsymbol{\lambda})} \mathcal{L}_{\rho, \epsilon}(\y^{k+1},\hat{\x}^{k+1},  \boldsymbol{\lambda}^{k+1})\big) \leq C \| \hat{\x}^{k+1} - \hat{\x}^k \|,
 \label{appendix eq: distance between 0 and partial L_rho over y^k+1 is lower than C x_residual}
\end{flalign}
where $\text{dist}(\cdot, \cdot)$ denotes the distance between a vector and a set of vectors. Hereafter we denote $\partial_{(\y, \hat{\x}, \boldsymbol{\lambda})} \mathcal{L}_{\rho, \epsilon}(\y^{k+1},\hat{\x}^{k+1},  \boldsymbol{\lambda}^{k+1})$ as $\partial \mathcal{L}_{\rho, \epsilon}(\y^{k+1},\hat{\x}^{k+1},  \boldsymbol{\lambda}^{k+1})$ for clarity. 
Besides, the relation (\ref{appendix eq: L(yk+1,xk+1,dualk+1)-L(yk,xk,dualk) < 0}) implies that there is a constant $D \in (0, \epsilon - \frac{1}{\rho})$ such that 
\begin{flalign}
\mathcal{L}_{\rho, \epsilon}(\y^{k}, \hat{\x}^{k}, \boldsymbol{\lambda}^{k}) - \mathcal{L}_{\rho, \epsilon}(\y^{k+1}, \hat{\x}^{k+1}, \boldsymbol{\lambda}^{k+1}) \geq 
  D \|  \hat{\x}^{k+1} - \hat{\x}^k \|_2^2.
  \label{appendix eq: L(yk, xk, dualk) - L(yk+1, xk+1, dualk+1) <= D(xk+1 - xk)}
\end{flalign}
Moreover, the relation (\ref{appendix eq: lower semicontinuous of L_rho}) implies that $\{ \mathcal{L}_{\rho, \epsilon}(\y^{k}, \hat{\x}^{k}, \boldsymbol{\lambda}^{k}) \}$ is lower bounded along the convergent sub-sequence $\{ (\y^{k_i}, \hat{\x}^{k_i}, \boldsymbol{\lambda}^{k_i}) \}$. 
Combining with the its decreasing property, the limit of $\{ \mathcal{L}_{\rho, \epsilon}(\y^{k}, \hat{\x}^{k}, \boldsymbol{\lambda}^{k}) \}$ exists. 
Thus, we will show that
\begin{flalign}
\underset{k \rightarrow \infty}{\lim}  \mathcal{L}_{\rho, \epsilon}(\y^{k}, \hat{\x}^{k}, \boldsymbol{\lambda}^{k}) = l^* := \mathcal{L}_{\rho, \epsilon}(\y^{*}, \hat{\x}^*, \boldsymbol{\lambda}^{*}). 
\label{appendix eq: limit of L_rho}
\end{flalign}
To prove it, we utilize the fact that $\y^{k+1}$ is the minimizer of $ \mathcal{L}_{\rho, \epsilon}(\y, \hat{\x}^{k}, \boldsymbol{\lambda}^{k}) $, such that
\begin{flalign}
\mathcal{L}_{\rho, \epsilon}(\y^{k+1}, \hat{\x}^{k}, \boldsymbol{\lambda}^{k}) 
\leq 
\mathcal{L}_{\rho, \epsilon}(\y^{*}, \hat{\x}^{k}, \boldsymbol{\lambda}^{k}).
\end{flalign}
Combining the above relation, (\ref{appendix eq: x_residual + y-residual + dual-residual converges to 0}) and the continuity of $\mathcal{L}_{\rho, \epsilon}$ w.r.t. $\hat{\x}$ and $\boldsymbol{\lambda}$, the following relation holds along the sub-sequence $\{ (\y^{k_i}, \hat{\x}^{k_i}, \boldsymbol{\lambda}^{k_i}) \}$ that converges to $ (\y^{*}, \hat{\x}^{*}, \boldsymbol{\lambda}^{*}) $, 
\begin{flalign}
\underset{i \rightarrow \infty}{\lim \sup} ~ \mathcal{L}_{\rho, \epsilon}(\y^{k_i+1}, \hat{\x}^{k_i+1}, \boldsymbol{\lambda}^{k_i+1}) \leq 
\mathcal{L}_{\rho, \epsilon}(\y^{*}, \hat{\x}^{*}, \boldsymbol{\lambda}^{*}). 
\label{appendix eq: sup L(x^ki+1, y^ki+1, z^ki+1) <= L(x^*, y^*, z^*)}
\end{flalign}
According to (\ref{appendix eq: x_residual + y-residual + dual-residual converges to 0}), the sub-sequence 
$\{ (\y^{k_i+1}, \hat{\x}^{k_i+1}, \boldsymbol{\lambda}^{k_i+1}) \}$ also converges to 
$ (\y^{*}, \hat{\x}^{*}, \boldsymbol{\lambda}^{*}) $. Then, utilizing the lower semi-continuity of $\mathcal{L}_{\rho, \epsilon}$, we have 
\begin{flalign}
\underset{i \rightarrow \infty}{\lim \inf} ~ \mathcal{L}_{\rho, \epsilon}(\y^{k_i+1}, \hat{\x}^{k_i+1}, \boldsymbol{\lambda}^{k_i+1}) \geq 
\mathcal{L}_{\rho, \epsilon}(\y^{*}, \hat{\x}^{*}, \boldsymbol{\lambda}^{*}).
\label{appendix eq: inf L(x^ki+1, y^ki+1, z^ki+1) <= L(x^*, y^*, z^*)}
\end{flalign}
Combining (\ref{appendix eq: sup L(x^ki+1, y^ki+1, z^ki+1) <= L(x^*, y^*, z^*)}) with (\ref{appendix eq: inf L(x^ki+1, y^ki+1, z^ki+1) <= L(x^*, y^*, z^*)}), we know the existence of the limit of the sequence $\{ \mathcal{L}_{\rho, \epsilon}(\y^k, \hat{\x}^k, \boldsymbol{\lambda}^k) \}$, which proves the relation (\ref{appendix eq: limit of L_rho}).

As $\mathcal{L}_{\rho, \epsilon}$ is KL function, according to Definition \ref{definition: KL property}, it has the following properties:
\begin{itemize}
    \item There exist a constant $\eta \in (0, \infty]$, a continuous concave function $\varphi : [0, \eta) \rightarrow \mathbb{R}_{+}$, as well as a neighbourhood $\mathcal{V}$ of $(\y^*, \hat{\x}^*, \boldsymbol{\lambda}^*)$.
 $\varphi$ is differentiable on $(0, \eta)$ with positive derivatives. 
    \item For all $(\y, \hat{\x}, \boldsymbol{\lambda}) \in \mathcal{V}$ satisfying $l^* < \mathcal{L}_{\rho, \epsilon}(\y, \hat{\x}, \boldsymbol{\lambda}) < l^* + \eta$, we have
\begin{flalign}
\varphi'( \mathcal{L}_{\rho, \epsilon}(\y, \hat{\x}, \boldsymbol{\lambda}) - l^* ) \text{dist}(0, \partial \mathcal{L}_{\rho, \epsilon}(\y, \hat{\x}, \boldsymbol{\lambda}) ) \geq 1.
\label{appendix eq: KL property}
\end{flalign}
\end{itemize}
Then, we define the following neighborhood sets:
\begin{flalign}
 & \hspace{-0em} \mathcal{V}_{\zeta} := \bigg\{  (\y, \hat{\x}, \boldsymbol{\lambda}) ~ \bigg\vert ~ \| \hat{\x} - \hat{\x}^* \| < \zeta, 
 \|\y - \y^*\| < \Bar{M}(\|\hat{\A}\|+1) \zeta , 
 \| \boldsymbol{\lambda} - \boldsymbol{\lambda}^* \| < \zeta \bigg\}  \subseteq \mathcal{V}
\label{appendix eq: neighborhood set B_mu}
\\
& \mathcal{V}_{\zeta, \hat{\x}} := \big\{  \hat{\x} ~ \big\vert ~ \| \hat{\x} - \hat{\x}^* \| < \zeta \big\},
\label{appendix eq: neighborhood set B_mu,x}
\end{flalign}
where $\zeta > 0$ is a small constant.

Utilizing the relations (\ref{appendix eq: optimality 2}) and (\ref{appendix eq: optimality 3}), as well as P2.1, we obtain that for any $k\geq 1$, the following relation holds:
\begin{flalign}
&\epsilon^2 \| \boldsymbol{\lambda}^k - \boldsymbol{\lambda}^* \|_2^2
\leq 
\| \hat{\A}^\top (\boldsymbol{\lambda}^k - \boldsymbol{\lambda}^*) \|_2^2 = \| \triangledown \hat{f}(\hat{\x}^k) - \triangledown \hat{f}(\hat{\x}^*) \|_2^2 
=
\epsilon^2 \| \hat{\x}^k - \hat{\x}^* \|_2^2. 
\label{appendix eq: (zk - z*)^2 is lower than lambda_max(Q3) (xk - x*)^2}
\end{flalign}
Also, the relations (\ref{appendix eq: optimality 2}) and (\ref{appendix eq: optimality 3}) imply that for any $k\geq 1$, we have 
\begin{flalign}
 \| \B (\y^k - \y^*) \| = \| \hat{\A} (\hat{\x}^k - \hat{\x}^*) - \frac{1}{\rho}(\boldsymbol{\lambda}^k - \boldsymbol{\lambda}^{k-1}) \| 
 \leq 
 \| \hat{\A} \| \| (\hat{\x}^k - \hat{\x}^*) \| + \frac{1}{\rho} \| \boldsymbol{\lambda}^k - \boldsymbol{\lambda}^{k-1} \|.
 \label{appendix eq: B(yk - y*) < A(xk -x*) + 1/rho (dual^k - dual^k-1)}
\end{flalign}
Moreover, the relation (\ref{appendix eq: x_residual + y-residual + dual-residual converges to 0}) implies that 
$\exists N_0 \geq 1$ such that $\forall k \geq N_0$, we have 
\begin{flalign}
\| \boldsymbol{\lambda}^k - \boldsymbol{\lambda}^{k-1} \| \leq \rho \zeta.
\label{appendix eq: dual^k - dual^k-1 < rho tau}
\end{flalign}

Similar to (\ref{appendix eq: yk - y* < B(yk - y*)}), the full row rank of $\B$ implies $\| \y^k - \y^* \| \leq \Bar{M} \| \B(\y^k - \y^*) \|$. Then, plugging (\ref{appendix eq: dual^k - dual^k-1 < rho tau}) into (\ref{appendix eq: B(yk - y*) < A(xk -x*) + 1/rho (dual^k - dual^k-1)}),
we obtain that 
\begin{flalign}
 \| \y^k - \y^* \| \leq \Bar{M}(\|\hat{\A}\|+1) \zeta, 
 \label{appendix eq: (yk - y*) < M (A+1) tau}
\end{flalign}
for any $\hat{\x}^k \in \mathcal{V}_{\zeta, \hat{\x}}$ and $k \geq N_0$.
Combining (\ref{appendix eq: (zk - z*)^2 is lower than lambda_max(Q3) (xk - x*)^2}) and (\ref{appendix eq: (yk - y*) < M (A+1) tau}), we know that if $\hat{\x}^k \in \mathcal{V}_{\zeta, \hat{\x}}$ and $k \geq N_0$, then 
$(\y^k, \hat{\x}^k, \boldsymbol{\lambda}^k) \in \mathcal{V}_{\zeta} \subseteq \mathcal{V}$. 

Moreover, (\ref{appendix eq: L(yk+1,xk+1,dualk+1)-L(yk,xk,dualk) < 0}) and (\ref{appendix eq: limit of L_rho}) implies that $\mathcal{L}_{\rho, \epsilon}(\y^k, \hat{\x}^k, \boldsymbol{\lambda}^k) \geq l^*, \forall k \geq 1$. 
Besides, as $(\y^*, \hat{\x}^*, \boldsymbol{\lambda}^*)$ is a cluster point, we will obtain that $\exists N \geq N_0$, the following relations hold:
\begin{flalign}
  \begin{cases}
     \hat{\x}^N \in \mathcal{V}_{\zeta, \hat{\x}}
     \\
     l^* < \mathcal{L}_{\rho, \epsilon}(\y^N, \hat{\x}^N, \boldsymbol{\lambda}^N) < l^* + \eta
     \\
 \| \hat{\x}^N - \hat{\x}^* \| + 2 \sqrt{(\mathcal{L}_{\rho, \epsilon}(\y^N, \hat{\x}^N, \boldsymbol{\lambda}^N) - l^* ) / D} + \frac{C}{D} (\mathcal{L}_{\rho, \epsilon}(\y^N, \hat{\x}^N, \boldsymbol{\lambda}^N) - l^* ) < \zeta 
 \end{cases}
\label{item: relations of x^N}
\end{flalign}

Next, We will show that if $\hat{\x}^N \in \mathcal{V}_{\zeta, \hat{\x}}$ and $l^* < \mathcal{L}_{\rho, \epsilon}(\y^N, \hat{\x}^N, \boldsymbol{\lambda}^N) < l^* + \eta$ hold for some fixed $k \geq N_0$, then the following relation holds
\begin{flalign}
& \| \hat{\x}^{k+1} - \hat{\x}^k \| + \big(\| \hat{\x}^{k+1} - \hat{\x}^k \| - \| \hat{\x}^{k} - \hat{\x}^{k-1} \|\big) 
\leq 
\frac{C}{D} \bigg[ \varphi\big(\mathcal{L}_{\rho, \epsilon}(\y^k, \hat{\x}^k, \boldsymbol{\lambda}^k) - l^* \big) - \varphi\big(\mathcal{L}_{\rho, \epsilon}(\y^{k+1}, \hat{\x}^{k+1}, \boldsymbol{\lambda}^{k+1}) - l^* \big) \bigg].
\label{appendix eq: 2|x^k+1 - x^k| - |x^k - x^k-1| <= C/D phi(L^k) - phi(L^k+1)}
\end{flalign}
To prove (\ref{appendix eq: 2|x^k+1 - x^k| - |x^k - x^k-1| <= C/D phi(L^k) - phi(L^k+1)}), we utilize the fact that $\hat{\x}^k \in \mathcal{V}_{\zeta, \hat{\x}}, k \geq N_0$ implies that $(\y^k, \hat{\x}^k, \boldsymbol{\lambda}^k) \in \mathcal{V}_{\zeta, \hat{\x}} \subseteq \mathcal{V}$. And, combining with 
$l^* < \mathcal{L}_{\rho, \epsilon}(\y^k, \hat{\x}^k, \boldsymbol{\lambda}^k) < l^* + \eta$, we obtain that 
\begin{flalign}
\varphi'( \mathcal{L}_{\rho, \epsilon}(\y^k, \hat{\x}^k, \boldsymbol{\lambda}^k) - l^* ) \text{dist}(0, \partial \mathcal{L}_{\rho, \epsilon}(\y^k, \hat{\x}^k, \boldsymbol{\lambda}^k) ) \geq 1.
\label{appendix eq: KL property for (xk, yk, zk)}
\end{flalign}
Combining the relations (\ref{appendix eq: distance between 0 and partial L_rho over y^k+1 is lower than C x_residual}), (\ref{appendix eq: L(yk, xk, dualk) - L(yk+1, xk+1, dualk+1) <= D(xk+1 - xk)}) and (\ref{appendix eq: KL property for (xk, yk, zk)}), as well as the concavity of $\varphi$, we obtain that 
\begin{flalign}
& C \| \hat{\x}^k - \hat{\x}^{k-1} \| \cdot \big[ \varphi\big(\mathcal{L}_{\rho, \epsilon}(\y^k, \hat{\x}^k, \boldsymbol{\lambda}^k) - l^* \big) - \varphi\big(\mathcal{L}_{\rho, \epsilon}(\y^{k+1}, \hat{\x}^{k+1}, \boldsymbol{\lambda}^{k+1}) - l^* \big) \big] 
\label{appendix eq: C |x^k_residual| varphi_residual >= D x^k+1_residual}
\\
\geq & \text{dist}(0, \partial \mathcal{L}_{\rho, \epsilon}(\y^k, \hat{\x}^k, \boldsymbol{\lambda}^k) ) \cdot \big[ \varphi\big(\mathcal{L}_{\rho, \epsilon}(\y^k, \hat{\x}^k, \boldsymbol{\lambda}^k) - l^* \big) - \varphi\big(\mathcal{L}_{\rho, \epsilon}(\y^{k+1}, \hat{\x}^{k+1}, \boldsymbol{\lambda}^{k+1}) - l^* \big) \big] 
\nonumber 
\\
\geq &
\text{dist}(0, \partial \mathcal{L}_{\rho, \epsilon}(\y^k, \hat{\x}^k, \boldsymbol{\lambda}^k) ) \cdot  
\varphi'\big(\mathcal{L}_{\rho, \epsilon}(\y^k, \hat{\x}^k, \boldsymbol{\lambda}^k) - l^* \big) 
\cdot \big[ \mathcal{L}_{\rho, \epsilon}(\y^k, \hat{\x}^k, \boldsymbol{\lambda}^k) - \mathcal{L}_{\rho, \epsilon}(\y^{k+1}, \hat{\x}^{k+1}, \boldsymbol{\lambda}^{k+1}) \big]
\nonumber 
\\
\geq &
D \| \hat{\x}^{k+1} - \hat{\x}^{k} \|^2,
\nonumber 
\end{flalign}
for all such $k$. 
Taking square root on both sides of (\ref{appendix eq: C |x^k_residual| varphi_residual >= D x^k+1_residual}), and utilizing the fact that $a+b \geq 2 \sqrt{a b}$, then (\ref{appendix eq: 2|x^k+1 - x^k| - |x^k - x^k-1| <= C/D phi(L^k) - phi(L^k+1)}) is proved. 

We then prove $\forall k \geq N, \hat{\x}^k \in \mathcal{V}_{\zeta, \hat{\x}}$ holds. 
This claim can be proved through induction. Obviously it is true for $k=N$ by construction, as shown in (\ref{item: relations of x^N}). 
For $k=N+1$, we have 
\begin{flalign}
& \| \hat{\x}^{N+1} - \hat{\x}^{*} \| \leq \| \hat{\x}^{N+1} - \hat{\x}^{N} \| + 
\| \hat{\x}^{N} - \hat{\x}^{*} \|
\label{appendix eq: x^N+1 in V_mu} 
\\
\leq &
\sqrt{\big( \mathcal{L}_{\rho, \epsilon}(\y^N, \hat{\x}^N, \boldsymbol{\lambda}^N) - \mathcal{L}_{\rho, \epsilon}(\y^{N+1}, \hat{\x}^{N+1}, \boldsymbol{\lambda}^{N+1}) \big) / D} + \| \hat{\x}^{N} - \hat{\x}^{*} \|
\nonumber
\\
\leq & \sqrt{\big( \mathcal{L}_{\rho, \epsilon}(\y^N, \hat{\x}^N, \boldsymbol{\lambda}^N) - l^* \big) / D} + \| \hat{\x}^{N} - \hat{\x}^{*} \| < \zeta,
\nonumber
\end{flalign}
where the first inequality utilizes (\ref{appendix eq: L(yk, xk, dualk) - L(yk+1, xk+1, dualk+1) <= D(xk+1 - xk)}), and the last inequality follows the last relation in (\ref{item: relations of x^N}). 
Thus, $\forall k \geq N, \hat{\x}^k \in \mathcal{V}_{\zeta, \hat{\x}}$ holds. 

Next, we suppose that $\hat{\x}^N, \ldots, \hat{\x}^{N+t-1} \in \mathcal{V}_{\zeta, \hat{\x}}$ for some $t>1$, and we need to prove that $\hat{\x}^{N+t} \in \mathcal{V}_{\zeta, \hat{\x}}$ also holds, \ie, 
\begin{flalign}
&\| \hat{\x}^{N+t} - \hat{\x}^{*} \| 
\leq 
\| \hat{\x}^{N} - \hat{\x}^{*} \| + 
\| \hat{\x}^{N+1} - \hat{\x}^{N} \| + 
\sum_{i=1}^{t-1} \| \hat{\x}^{N+i+1} - \hat{\x}^{N+i} \|
\label{appendix eq: x^N+t in V_mu}
\\
= &
\| \hat{\x}^{N} - \hat{\x}^{*} \| + 2 \| \hat{\x}^{N+1} - \hat{\x}^{N} \| - \| \hat{\x}^{N+t} - \hat{\x}^{N+t-1} \| + 
\sum_{i=1}^{t-1} \bigg[ \| \hat{\x}^{N+i+1} - \hat{\x}^{N+i} \| + 
\big( \| \hat{\x}^{N+i+1} - \hat{\x}^{N+i} \| - \| \hat{\x}^{N+i} - \hat{\x}^{N+i-1} \| \big)  \bigg]
\nonumber 
\\
\leq &
 \| \hat{\x}^{N} - \hat{\x}^{*} \| + 2 \| \hat{\x}^{N+1} - \hat{\x}^{N} \| + \frac{C}{D} \sum_{i=1}^{t-1} \big[ \varphi^{N+i} - \varphi^{N+i+1} \big]
\nonumber 
\\
\leq &
\| \hat{\x}^{N} - \hat{\x}^{*} \| + 2 \| \hat{\x}^{N+1} - \hat{\x}^{N} \| + 
\frac{C}{D} \sum_{i=1}^{t-1} \varphi^{N+1}
\nonumber 
\\
\leq &
\| \hat{\x}^{N} - \hat{\x}^{*} \| + 2 \sqrt{ \frac{\mathcal{L}_{\rho, \epsilon}^N - \mathcal{L}_{\rho, \epsilon}^{N+1}}{D}} + \frac{C}{D} \sum_{i=1}^{t-1} \varphi^{N+1}
\nonumber 
\\
\leq &
\| \hat{\x}^{N} - \hat{\x}^{*} \| + 2 \sqrt{ \frac{\mathcal{L}_{\rho, \epsilon}^N - l^* }{D}} + \frac{C}{D} \sum_{i=1}^{t-1} \varphi^{N+1} < \zeta
\nonumber
\end{flalign}
where $\varphi^{N+i} = \varphi(\mathcal{L}_{\rho, \epsilon}(\y^{N+i}, \hat{\x}^{N+i}, \boldsymbol{\lambda}^{N+i})-l^*) $ and $\mathcal{L}_{\rho, \epsilon}^N = \mathcal{L}_{\rho, \epsilon}(\y^{N}, \hat{\x}^{N}, \boldsymbol{\lambda}^{N})$.
The second inequality follows from (\ref{appendix eq: 2|x^k+1 - x^k| - |x^k - x^k-1| <= C/D phi(L^k) - phi(L^k+1)}).
The fourth inequality follows from (\ref{appendix eq: L(yk, xk, dualk) - L(yk+1, xk+1, dualk+1) <= D(xk+1 - xk)}). 
The fifth inequality utilizes the fact that $\mathcal{L}_{\rho, \epsilon}^{N+1} > l^*$, and the last inequality follows from the last relation in (\ref{item: relations of x^N}). 
Thus, $\hat{\x}^{N+k} \in \mathcal{V}_{\zeta, \hat{\x}}$ holds. We have proved that $\forall k \geq N, \hat{\x}^k \in \mathcal{V}_{\zeta, \hat{\x}}$ holds by induction.

Then, according to $\forall k \geq N, \hat{\x}^k \in \mathcal{V}_{\zeta, \hat{\x}}$, we can sum both sides of
(\ref{appendix eq: 2|x^k+1 - x^k| - |x^k - x^k-1| <= C/D phi(L^k) - phi(L^k+1)}) from 
$k=N$ to $\infty$, to obtain that
\begin{flalign}
\sum_{k=N}^{\infty} \| \hat{\x}^{k+1} - \hat{\x}^{k} \| \leq \frac{C}{D} \varphi^N + \| \hat{\x}^{N} - \hat{\x}^{N-1} \| < \infty, 
\label{appendix eq: sum |x^k+1 - x^k| <= C/D varphi^N + | x^N - x^N-1 |}
\end{flalign}
which implies that $\sum_{k=1}^{\infty} \| \hat{\x}^{k+1} - \hat{\x}^k \| < \infty$ holds. Thus $\{ \hat{\x}^k \}$ converges. 
The convergence of $\{ \y^k \}$ follows from $\B \y^{k+1} = \hat{\A} \hat{\x}^{k+1} + \frac{1}{\rho}(\boldsymbol{\lambda}^{k+1} -\boldsymbol{\lambda}^k)$ in (\ref{appendix eq: optimality 3}) and (\ref{appendix eq: x_residual + y-residual + dual-residual converges to 0}), as well as the surjectivity of $\B$ (\ie, full row rank). 
The convergence of $\{ \boldsymbol{\lambda}^k \}$ follows from $\nabla \hat{f}(\hat{\x}^{k+1}) = - \hat{\A}^\top \boldsymbol{\lambda}^{k+1}$ in (\ref{appendix eq: optimality 2}) and the surjectivity of $\hat{\A}$  (\ie, full row rank). 
Consequently, $\{ \y^k, \hat{\x}^k, \boldsymbol{\lambda}^k \}$ converges to the cluster point $(\y^*, \hat{\x}^*, \boldsymbol{\lambda}^*)$. 
The conclusion that $(\y^*, \hat{\x}^*)$ is the KKT point of Problem (\ref{appendix eq: LS-LP(x,y) with perturbation}) has been proved in Section \ref{sec: subsec Convergence of residual}. 


\subsection{$\epsilon$-KKT Point of the Original LS-LP Problem}
\label{sec: Stationary and epsilon-Feasible Point of the Original LS-LP Problem}

\begin{proposition}
The globally converged solution $(\y^*, \x^*, \boldsymbol{\lambda}^*)$ produced by the ADMM algorithm for the perturbed LS-LP problem (\ref{appendix eq: LS-LP(x,y) with perturbation}) is the $\epsilon$-KKT solution to the original LS-LP problem (\ref{appendix eq: LS-LP(x,y)}). 
\end{proposition}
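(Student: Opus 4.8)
The plan is to take the exact KKT characterization of the \emph{perturbed} problem (\ref{appendix eq: LS-LP(x,y) with perturbation}) --- which the preceding subsections already established for the limit point $(\y^*, \hat{\x}^*, \boldsymbol{\lambda}^*)$ --- and to ``peel off'' the $O(\epsilon)$ perturbation terms so that the residuals of the $\epsilon$-KKT system of the \emph{original} problem (see Definition \ref{definition: epsilon kkt solution}) are bounded by $O(\epsilon)$. Concretely, from the global-convergence analysis we already know that the limit satisfies
\[
\B^\top \boldsymbol{\lambda}^* \in \partial h(\y^*), \quad \nabla \hat{f}(\hat{\x}^*) + \hat{\A}^\top \boldsymbol{\lambda}^* = \boldsymbol{0}, \quad \hat{\A} \hat{\x}^* = \B \y^*.
\]
I would then verify each of the three $\epsilon$-KKT conditions in turn, using the explicit forms $\hat{\A} = [\A, \epsilon\I]$ and $\hat{f}(\hat{\x}) = f(\x) + \frac{1}{2}\epsilon \hat{\x}^\top \hat{\x}$ together with the block partition $\hat{\x} = [\x; \bar{\x}]$.

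The dual-feasibility condition is immediate: since $h$ only involves $\y$, the subgradient inclusion $\B^\top \boldsymbol{\lambda}^* \in \partial h(\y^*)$ is literally identical for both problems, so $\text{dist}(\B^\top \boldsymbol{\lambda}^*, \partial h(\y^*)) = 0 \leq O(\epsilon)$. For the stationarity condition, I would expand the gradient as $\nabla \hat{f}(\hat{\x}^*) = [\nabla f(\x^*) + \epsilon \x^*; \epsilon \bar{\x}^*]$ and the multiplier term as $\hat{\A}^\top \boldsymbol{\lambda}^* = [\A^\top \boldsymbol{\lambda}^*; \epsilon \boldsymbol{\lambda}^*]$. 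Reading off the block of $\nabla \hat{f}(\hat{\x}^*) + \hat{\A}^\top \boldsymbol{\lambda}^* = \boldsymbol{0}$ that corresponds to $\x$ yields $\nabla f(\x^*) + \A^\top \boldsymbol{\lambda}^* = -\epsilon \x^*$, so that the stationarity residual is exactly $\| \nabla f(\x^*) + \A^\top \boldsymbol{\lambda}^* \| = \epsilon \| \x^* \|$.

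Finally, for primal feasibility I would use $\hat{\A} \hat{\x}^* = \A \x^* + \epsilon \bar{\x}^*$, so that $\hat{\A} \hat{\x}^* = \B \y^*$ becomes $\A \x^* - \B \y^* = -\epsilon \bar{\x}^*$, and hence $\| \A \x^* - \B \y^* \| = \epsilon \| \bar{\x}^* \|$. To close both estimates I would invoke the boundedness of $\{\hat{\x}^k\}$ proved in the boundedness subsection: since $\x^*$ and $\bar{\x}^*$ are sub-vectors of the bounded limit $\hat{\x}^*$, there is a constant $M$ with $\| \x^* \|, \| \bar{\x}^* \| \leq M$, giving both residuals $\leq M\epsilon = O(\epsilon)$. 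Together with the zero dual residual, this establishes all three conditions of Definition \ref{definition: epsilon kkt solution} (see (\ref{eq: epsilon feasible and stationary condition of original LS-LP})).

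The only point that needs genuine care --- and the part I would treat as the main obstacle --- is ensuring that the bound $M$ does not itself secretly hide a factor of $1/\epsilon$, which would collapse the $O(\epsilon)$ estimates. This is in fact guaranteed, because the relevant entries of $\x^*$ lie in the local marginal polytope (probability simplices) and on the $\ell_2$-sphere, so their norms are bounded by absolute constants independent of $\epsilon$; the same holds for $\bar{\x}^*$, which only duplicates the $\boldsymbol{\mu}_i$ blocks. Hence the translation from the exact KKT point of the perturbed problem to an $\epsilon$-KKT point of the original LS-LP problem is genuinely $O(\epsilon)$, completing the proof.
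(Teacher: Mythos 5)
Your proposal is correct and follows essentially the same route as the paper's proof: both start from the exact KKT conditions of the perturbed problem at the limit point, expand $\hat{\A}=[\A,\epsilon\I]$, $\hat{\x}=[\x;\bar{\x}]$ and $\nabla\hat{f}$ blockwise to read off residuals $\epsilon\|\x^*\|$ and $\epsilon\|\bar{\x}^*\|$, and invoke boundedness of $\hat{\x}^*$ to conclude $O(\epsilon)$. Your added remark that the bound is an absolute constant (since the iterates lie in simplices and on the $\ell_2$-sphere), not hiding a $1/\epsilon$ factor, is a useful clarification the paper leaves implicit.
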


\begin{proof}
The globally converged solution $(\y^*, \hat{\x}^*, \boldsymbol{\lambda}^*)$ to the perturbed LS-LP problem (\ref{appendix eq: LS-LP(x,y) with perturbation}) satisfies the following relations:
\begin{flalign}
 \B^\top \boldsymbol{\lambda}^* \in \partial h(\y^*), ~
 \nabla \hat{f}(\hat{\x}^{*}) = - \hat{\A}^\top \boldsymbol{\lambda}^*, ~
 \hat{\A} \hat{\x}^* = \B \y^*.
\end{flalign}
Recalling the definitions $\hat{\A} = [\A, \epsilon \I]$, $\hat{\x} = [\x; \bar{\x}]$ and $\hat{f}(\hat{\x}) = f(\x) + \frac{\epsilon}{2} \hat{\x}^\top \hat{\x}$, the above relations imply that 
\begin{flalign}
 & \nabla \hat{f}(\hat{\x}^{*}) + \hat{\A}^\top \boldsymbol{\lambda}^* 
 = \nabla f(\x^{*}) + \A^\top \boldsymbol{\lambda}^* + \epsilon \x^* 
 = \boldsymbol{0} 
 ~ \Rightarrow ~
 \| \nabla f(\x^{*}) + \A^\top \boldsymbol{\lambda}^* \| = \epsilon \| \x^* \| = O(\epsilon), 
 \\
 & \hat{\A} \hat{\x}^* + \B \y^* = \A \x^* + \epsilon \bar{\x}^* + \B \y^* = 0 
 ~ \Rightarrow ~
 \| \A \x^* + \B \y^* \| = \| \epsilon \bar{\x}^* \| = O(\epsilon), 
\end{flalign}
where we utilize the boundedness of $\{\hat{\x}^*\}$. 
Thus, according to Definition \ref{definition: epsilon kkt solution}, the globally converged point $(\y^*, \x^*)$ is the $\epsilon$-KKT solution to the original LS-LP problem (\ref{appendix eq: LS-LP(x,y)}). 
\end{proof}

\subsection{Convergence Rate}
\label{sec: convergence rate}

\begin{lemma}
 Firstly, without loss of generality, we can assume that $l^* = \mathcal{L}_{\rho, \epsilon}(\y^*, \hat{\x}^*, \boldsymbol{\lambda}^*) = 0$ (\eg, one can replace $l_k = \mathcal{L}_{\rho, \epsilon}(\y^k, \hat{\x}^k, \boldsymbol{\lambda}^k)$ by $l_k - l^*$). 
We further assume that $\mathcal{L}_{\rho, \epsilon}$ has the KL property at $(\y^*, \hat{\x}^*, \boldsymbol{\lambda}^*)$ with the concave function $\varphi(s) = c s^{1-p}$, where $p \in [0, 1), c>0$. 
Consequently, we can obtain the following inequalities:
\begin{enumerate}
    \item[(i)] if $p=0$, then $\{(\y^k, \hat{\x}^k, \boldsymbol{\lambda}^k)\}_{k = 1, \ldots, \infty}$ can converge in finite steps;
    \item[(ii)] If $p \in (0,\frac{1}{2}]$, then there exist $c>0$ and $\tau \in (0,1)$ such that $\| \hat{\x}^{k+1} - \hat{\x}^k \| \leq c \tau^k$;
    \item[(iii)] $p \in (\frac{1}{2},1)$, then there exist $c>0$ such that $\| \hat{\x}^{k+1} - \hat{\x}^k \| \leq c k^{-\frac{1-p}{2p -1}} $.
\end{enumerate}
\label{appendix lemma: xk - x* < }
\end{lemma}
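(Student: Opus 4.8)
The plan is to reduce the entire statement to a single scalar difference inequality for the tail sum of the residuals and then read off the three regimes from standard Kurdyka--Lojasiewicz rate lemmas. Writing $\mathcal{L}^k := \mathcal{L}_{\rho,\epsilon}(\y^k,\hat{\x}^k,\boldsymbol{\lambda}^k)$, $r_k := \|\hat{\x}^{k+1}-\hat{\x}^k\|$, and (using the normalization $l^*=0$ granted in the statement) $\Delta_k := \varphi(\mathcal{L}^k) = c(\mathcal{L}^k)^{1-p}$, I would introduce the tail sum $S_k := \sum_{i=k}^{\infty} r_i$, which is finite by the summability already established in Section~\ref{appendix: global convergence}. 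Since $r_k \le S_k$, it suffices to prove the claimed rates for $S_k$ and then transfer them to $r_k$.

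First I would record the two scalar inequalities that drive everything. Summing the per-step recursion $2r_k - r_{k-1} \le \frac{C}{D}(\Delta_k-\Delta_{k+1})$ (already derived above) from $k$ to $\infty$ telescopes, using $\Delta_\infty = 0$, to the tail-sum bound $S_k \le r_{k-1} + \frac{C}{D}\Delta_k$. Next, combining the subgradient bound $\mathrm{dist}(0,\partial\mathcal{L}^k)\le C r_{k-1}$ with the KL inequality $\varphi'(\mathcal{L}^k)\,\mathrm{dist}(0,\partial\mathcal{L}^k)\ge 1$ gives $\varphi'(\mathcal{L}^k)\ge (C r_{k-1})^{-1}$; substituting $\varphi'(s)=c(1-p)s^{-p}$ yields $(\mathcal{L}^k)^{p}\le c(1-p)C\,r_{k-1}$, hence $\Delta_k \le \kappa\, r_{k-1}^{(1-p)/p}$ for a constant $\kappa>0$. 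Feeding this into the tail-sum bound and using $r_{k-1}=S_{k-1}-S_k$ produces the self-referential inequality $S_k \le (S_{k-1}-S_k) + \kappa(S_{k-1}-S_k)^{\theta}$ with $\theta := (1-p)/p$.

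The three regimes then follow from the size of $\theta$. The case $p=0$ is handled separately and is the easiest: here $\varphi'$ is a positive constant, so the KL inequality forces $\mathrm{dist}(0,\partial\mathcal{L}^k)\ge 1/c$ whenever $\mathcal{L}^k>l^*$, contradicting $\mathrm{dist}(0,\partial\mathcal{L}^k)\le Cr_{k-1}\to 0$; thus $\mathcal{L}^k=l^*$ for all large $k$, and strict descent ($\mathcal{L}^k-\mathcal{L}^{k+1}\ge D r_k^2$) forces $r_k=0$, i.e.\ finite termination, proving (i). For $p\in(0,\tfrac12]$ we have $\theta\ge 1$, so for large $k$ (where $S_{k-1}-S_k<1$) the nonlinear term is dominated, $(S_{k-1}-S_k)^{\theta}\le(S_{k-1}-S_k)$, and the inequality collapses to $S_k \le C_0(S_{k-1}-S_k)$; rearranging gives $S_k \le \frac{C_0}{1+C_0}S_{k-1}$, i.e.\ geometric decay $S_k \le c\tau^k$ with $\tau=\frac{C_0}{1+C_0}\in(0,1)$, whence $r_k\le S_k\le c\tau^k$, proving (ii). For $p\in(\tfrac12,1)$ we have $\theta\in(0,1)$, the nonlinear term now dominates, and the inequality becomes $S_k\le \kappa'(S_{k-1}-S_k)^{\theta}$, equivalently $S_k^{\alpha}\le \kappa''(S_{k-1}-S_k)$ with $\alpha := 1/\theta = p/(1-p)>1$.

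The main obstacle is this last case, namely extracting a polynomial rate from $S_k^{\alpha}\le \kappa''(S_{k-1}-S_k)$. I would invoke the standard sequence lemma used in \cite{admm-nonconvex-siam-2015,attouch2010proximal}: a nonnegative nonincreasing sequence satisfying such an inequality with $\alpha>1$ obeys $S_k = O(k^{-1/(\alpha-1)})$. Since $\alpha-1=\frac{2p-1}{1-p}$, this yields $S_k = O(k^{-\frac{1-p}{2p-1}})$ and therefore $r_k\le S_k\le c\,k^{-\frac{1-p}{2p-1}}$, proving (iii). The only remaining care is to verify the hypotheses of that lemma and of the KL inequality along the tail, namely that $S_k$ is nonincreasing and that $\hat{\x}^k$ stays in the KL neighborhood for $k\ge N$, both of which are already secured in Section~\ref{appendix: global convergence}, and to absorb cleanly the sub-dominant term in the $p\le\tfrac12$ estimate.
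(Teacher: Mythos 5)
Your proposal is correct and follows essentially the same route as the paper's proof: the identical tail-sum recursion $\bigtriangleup_k \le (\bigtriangleup_{k-1}-\bigtriangleup_k) + \kappa(\bigtriangleup_{k-1}-\bigtriangleup_k)^{(1-p)/p}$ obtained by combining the telescoped per-step estimate with the KL inequality and the subgradient bound, the same case split on $(1-p)/p$ versus $1$, and the same appeal to the standard sequence lemma of Attouch et al.\ for the sublinear regime $p\in(\frac12,1)$. The only (harmless) cosmetic difference is that you transfer the rate to $\|\hat{\x}^{k+1}-\hat{\x}^k\|$ directly via $r_k\le S_k$, whereas the paper goes through $\|\hat{\x}^k-\hat{\x}^*\|\le\bigtriangleup_k$ and a triangle inequality.
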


\begin{proof}
\noindent
(\textit{i}) If $p=0$, we define a subset $H = \{k \in \mathbb{N}: \hat{\x}_{k} \neq \hat{\x}_{k+1} \}$. If $k \in H$ is sufficiently large, then these exists $C_3>0$ such that
\begin{flalign}
 \| \hat{\x}^{k+1} - \hat{\x}^k \|^2 \geq C_3 > 0.
\end{flalign}
Combining with (\ref{appendix eq: L(yk, xk, dualk) - L(yk+1, xk+1, dualk+1) <= D(xk+1 - xk)}), we have
\begin{flalign}
 l_k - l_{k+1} \geq D \| \hat{\x}^{k+1} - \hat{\x}^k \|^2 \geq C_3 D > 0.
\end{flalign}
If the subset $H$ is infinite, then it will contradict to the fact that $l_k - l_{k+1} \rightarrow 0$ as $k\rightarrow \infty$. Thus, $H$ is a finite subset, leading to the conclusion that $\{\hat{\x}^k\}_{k \in \mathbb{N}}$ will converge in finite steps. Recalling the relationships between $\hat{\x}_k$ and $\y_k, \boldsymbol{\lambda}_k$ (see the descriptions under (\ref{appendix eq: sum |x^k+1 - x^k| <= C/D varphi^N + | x^N - x^N-1 |})), we also obtain that $\{ \y_k, \boldsymbol{\lambda}_k \}_{k \in \mathbb{N}}$ converges in finite steps. 

By defining $\bigtriangleup_k = \sum_k^{\infty} \| x^k+1 - x^k \|$, 
the inequality (\ref{appendix eq: sum |x^k+1 - x^k| <= C/D varphi^N + | x^N - x^N-1 |}) can be rewritten as follows
\begin{flalign}
 \bigtriangleup_k \leq \frac{C}{D} \varphi(l_k) + (\bigtriangleup_{k-1} - \bigtriangleup_k) < \infty.
 \label{appendix: eq uptriangle_k < C/D varphi(lk) + (uptriange_(k-1) - uptriangle_k)}
\end{flalign}
Besides, the KL property and $l^* = 0$ give that 
\begin{flalign}
  \varphi'(l_k) \text{dist}(0, \partial (l_k)) = c (1-p) l_k^{1-p} \text{dist}(0, \partial (l_k)) \geq 1 ~
 \Rightarrow ~
 l_k^p \leq c(1-p) \text{dist}(0, \partial (l_k)).
\end{flalign}
Combining with (\ref{appendix eq: distance between 0 and partial L_rho over y^k+1 is lower than C x_residual}), we obtain 
\begin{flalign}
  l_k^p \leq c(1-p) C (\bigtriangleup_{k-1} - \bigtriangleup_k) 
  ~ \Rightarrow ~ 
  \varphi(l_k) = c l_k^{1-p} \leq c (c(1-p) C)^{\frac{1-p}{p}} (\bigtriangleup_{k-1} - \bigtriangleup_k)^{\frac{1-p}{p}} = C_1 (\bigtriangleup_{k-1} - \bigtriangleup_k)^{\frac{1-p}{p}}.
  \label{appendix: eq varphi < C (uptriange_(k-1) - uptriangle_k)^((1-p)/p)}
\end{flalign}
Then, inserting (\ref{appendix: eq varphi < C (uptriange_(k-1) - uptriangle_k)^((1-p)/p)}) into (\ref{appendix: eq uptriangle_k < C/D varphi(lk) + (uptriange_(k-1) - uptriangle_k)}), we obtain 
\begin{flalign}
 \bigtriangleup_k \leq C_2 (\bigtriangleup_{k-1} - \bigtriangleup_k)^{\frac{1-p}{p}} + (\bigtriangleup_{k-1} - \bigtriangleup_k) < \infty.
 \label{appendix: eq uptriangle_k < C_hat (uptriange_(k-1) - uptriangle_k)^((1-p)/p) + (uptriange_(k-1) - uptriangle_k)}
\end{flalign}

\noindent
(\textit{ii}) 
If $p \in (0, \frac{1}{2}]$, then $\frac{1-p}{p}\geq 1$. Besides, since $(\bigtriangleup_{k-1} - \bigtriangleup_k) \rightarrow 0$ when $k \rightarrow \infty$, there exists an integer $K_0$ such that $(\bigtriangleup_{k-1} - \bigtriangleup_k) < 1$, leading to that $(\bigtriangleup_{k-1} - \bigtriangleup_k)^{\frac{1-p}{p}} \leq  (\bigtriangleup_{k-1} - \bigtriangleup_k)$. Inserting it into (\ref{appendix: eq uptriangle_k < C_hat (uptriange_(k-1) - uptriangle_k)^((1-p)/p) + (uptriange_(k-1) - uptriangle_k)}), we obtain that 
\begin{flalign}
 \bigtriangleup_k \leq (C_2+1) (\bigtriangleup_{k-1} - \bigtriangleup_k)
 ~ \Rightarrow ~ 
 \bigtriangleup_k \leq C_3 (\bigtriangleup_{k-1} - \bigtriangleup_k)
 ~ \Rightarrow ~ 
 \bigtriangleup_k \leq \frac{C_3}{1+C_3} \bigtriangleup_{k-1} = \tau \bigtriangleup_{k-1},
 ~ \text{with}~ \tau \in (0,1), \forall k > K_0.
 \label{appendix: eq uptriangle_k < C_prime (uptriange_(k-1) - uptriangle_k)}
\end{flalign}
It is easy to deduce that $\bigtriangleup_k \leq (\bigtriangleup_{K_0} \tau^{-K_0}) \tau^k = \frac{c}{2} \tau^k$, with $c$ being a positive constant. Note that $k$ in $\tau^k$ indicates $k$ power of $\tau$, rather than the iteration index.
Combining with $\| \hat{\x}^k - \hat{\x}^* \| \leq \bigtriangleup_k$, it is easy to obtain that $\| \hat{\x}^k - \hat{\x}^* \| \leq \frac{c}{2} \tau^k$ with $\tau \in (0,1)$ and $c$ being a positive constant. 
Then, we have 
\begin{flalign}
 \| \hat{\x}^{k+1} - \hat{\x}^k \| \leq \| \hat{\x}^k - \hat{\x}^* \| + \| \hat{\x}^{k+1} - \hat{\x}^* \| \leq \frac{c}{2} (\tau^{k+1} + \tau^k) \leq c \tau^k. 
\end{flalign}

\noindent
(\textit{iii}) 
If $p \in (\frac{1}{2}, 1)$, then $\frac{1-p}{p}<1$. 
Then, it is easy to obtain that $(\bigtriangleup_{k-1} - \bigtriangleup_k)^{\frac{1-p}{p}} > (\bigtriangleup_{k-1} - \bigtriangleup_k)$. Inserting it into (\ref{appendix: eq uptriangle_k < C_hat (uptriange_(k-1) - uptriangle_k)^((1-p)/p) + (uptriange_(k-1) - uptriangle_k)}), we obtain that \begin{flalign}
 \bigtriangleup_k \leq (C_2+1) (\bigtriangleup_{k-1} - \bigtriangleup_k)^{\frac{1-p}{p}}
 ~ \Rightarrow ~ 
 \bigtriangleup_k \leq C_3 (\bigtriangleup_{k-1} - \bigtriangleup_k)^{\frac{1-p}{p}}
 ~ \Rightarrow ~ 
 \bigtriangleup_k^{\frac{p}{1-p}} \leq C_4 (\bigtriangleup_{k-1} - \bigtriangleup_k), 
 ~ \forall k > K_0.
 \label{appendix: eq uptriangle_k < C_prime (uptriange_(k-1) - uptriangle_k)^((1-p)/p)}
\end{flalign}
It has been studied in Theorem 2 of \cite{attouch2009convergence} that the above inequality can deduce 
$\bigtriangleup_k \leq \frac{c}{2} k^{- \frac{1-p}{2p-1}}$, with $c$ being a positive constant. 
Since $\| \hat{\x}^k - \hat{\x}^* \| \leq \bigtriangleup_k$, we have that $\| \hat{\x}^k - \hat{\x}^* \| \leq \frac{c}{2} k^{- \frac{1-p}{2p-1}}$. 
Then, we have 
\begin{flalign}
 \| \hat{\x}^{k+1} - \hat{\x}^k \| \leq \| \hat{\x}^k - \hat{\x}^* \| + \| \hat{\x}^{k+1} - \hat{\x}^* \| \leq \frac{c}{2} \big(k^{- \frac{1-p}{2p-1}} + (k+1)^{- \frac{1-p}{2p-1}} \big) \leq c k^{- \frac{1-p}{2p-1}}. 
\end{flalign}
\end{proof}

\begin{proposition}
We adopt the same assumptions in Lemma \ref{appendix lemma: xk - x* < }. Then, 
\begin{enumerate}
     \item[(i)] If $p  =0$, then 
     we will obtain the $\epsilon$-KKT solution to the LS-LP problem in finite steps. 
    \item[(ii)] If $p \in (0, \frac{1}{2}]$, then 
     we will obtain the $\epsilon$-KKT solution to the LS-LP problem in at least $O\big(\log_{\frac{1}{\tau}}(\frac{1}{\epsilon})^2\big)$ steps. 
    \item[(iii)] If $p \in ( \frac{1}{2}, 1)$, then 
    we will obtain the $\epsilon$-KKT solution to the LS-LP problem in at least $ O\big( (\frac{1}{\epsilon})^{\frac{4p-2}{1-p}}\big)$ steps.
\end{enumerate}
\end{proposition}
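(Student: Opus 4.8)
The plan is to convert the per-iteration residual bounds of Lemma~\ref{appendix lemma: xk - x* < } into the number of iterations needed for the iterate $(\y^{k+1},\x^{k+1})$ to satisfy the three $\epsilon$-KKT inequalities of Definition~\ref{definition: epsilon kkt solution}. First I would evaluate the optimality conditions (\ref{appendix eq: optimality 1}), (\ref{appendix eq: optimality 2}) and (\ref{appendix eq: optimality 3}) at a finite iterate rather than at the limit. Reading off (\ref{appendix eq: optimality 1}), the subgradient residual obeys $\text{dist}(\B^\top\boldsymbol{\lambda}^{k+1},\partial h(\y^{k+1})) \le \rho\,\|\B^\top\hat{\A}\|\,\|\hat{\x}^{k+1}-\hat{\x}^k\|$; from (\ref{appendix eq: optimality 2}) together with $\nabla\hat{f}(\hat{\x}) = \nabla f(\x)+\epsilon\hat{\x}$, the stationarity residual in $\x$ equals $\epsilon\|\x^{k+1}\|$; and from (\ref{appendix eq: optimality 3}) with $\hat{\A}\hat{\x}=\A\x+\epsilon\bar{\x}$, the feasibility residual is bounded by $\tfrac1\rho\|\boldsymbol{\lambda}^{k+1}-\boldsymbol{\lambda}^k\|+\epsilon\|\bar{\x}^{k+1}\|$. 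Each right-hand side is therefore controlled by $\|\hat{\x}^{k+1}-\hat{\x}^k\|$ together with perturbation terms of size $O(\epsilon)$.

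The crucial bookkeeping step is to identify which residual is amplified by the penalty parameter. Since $\rho$ must exceed $1/\epsilon$ to meet P2.2 (and we fix $\rho=2/\epsilon$), the feasibility and $\x$-stationarity residuals are already $O(\epsilon)$ as soon as the sequences are bounded: the dual-residual bound (\ref{appendix eq: dual_residual^2 <= x_residual^2}) gives $\|\boldsymbol{\lambda}^{k+1}-\boldsymbol{\lambda}^k\|\le\|\hat{\x}^{k+1}-\hat{\x}^k\|$, and $\tfrac1\rho<\epsilon$ absorbs the extra factor. The binding condition is instead the $h$-subgradient distance, which carries a factor $\rho\approx 1/\epsilon$ in front of $\|\hat{\x}^{k+1}-\hat{\x}^k\|$. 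Hence the sufficient condition for the $\epsilon$-KKT property is $\|\hat{\x}^{k+1}-\hat{\x}^k\|\le O(\epsilon^2)$, and this $\epsilon^2$ threshold is precisely the origin of the squared $1/\epsilon$ in the stated rates.

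Finally I would substitute the residual decay rates from Lemma~\ref{appendix lemma: xk - x* < }. When $p=0$ the residual vanishes after finitely many steps, so a finite iteration count suffices, proving (i). When $p\in(0,\tfrac12]$ we have $\|\hat{\x}^{k+1}-\hat{\x}^k\|\le c\tau^k$; imposing $c\tau^k\le O(\epsilon^2)$ and solving for $k$ yields $k=O\big(\log_{\frac{1}{\tau}}(\frac{1}{\epsilon})^2\big)$, giving (ii). When $p\in(\tfrac12,1)$ we have $\|\hat{\x}^{k+1}-\hat{\x}^k\|\le c\,k^{-\frac{1-p}{2p-1}}$; imposing $c\,k^{-\frac{1-p}{2p-1}}\le O(\epsilon^2)$ and inverting gives $k=O\big((\tfrac{1}{\epsilon})^{\frac{2(2p-1)}{1-p}}\big)=O\big((\tfrac{1}{\epsilon})^{\frac{4p-2}{1-p}}\big)$, which is (iii).

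The main obstacle I anticipate is the correct propagation of the $\rho\sim 1/\epsilon$ scaling through the three KKT residuals: a naive estimate that treats $\rho$ as a fixed constant would demand only $\|\hat{\x}^{k+1}-\hat{\x}^k\|\le O(\epsilon)$ and would halve every exponent, producing $\log_{\frac{1}{\tau}}(\frac{1}{\epsilon})$ and $(\tfrac{1}{\epsilon})^{\frac{2p-1}{1-p}}$ instead. Keeping the $\epsilon$-versus-$\epsilon^2$ distinction straight, and confirming that only the $h$-subgradient term (and not the feasibility or $\x$-stationarity terms) is inflated by $\rho$, is the delicate part; the remaining work is the routine inversion of an exponential and of a polynomial decay law.
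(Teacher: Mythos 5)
Your proposal is correct and follows essentially the same route as the paper: bound the three KKT residuals at a finite iterate via the optimality conditions, observe that the $h$-subgradient distance carries the $\rho = O(1/\epsilon)$ amplification so that the binding requirement is $\|\hat{\x}^{k+1}-\hat{\x}^k\| \le O(\epsilon^2)$, and then invert the decay laws of Lemma \ref{appendix lemma: xk - x* < } to get the finite, logarithmic, and polynomial iteration counts. Your explicit tracking of the $\epsilon\|\x^{k+1}\|$ and $\epsilon\|\bar{\x}^{k+1}\|$ perturbation terms in the unhatted residuals is slightly more careful than the paper's presentation, but the argument and the resulting rates are identical.
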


\begin{proof}
The conclusion (i) directly holds from Lemma \ref{appendix lemma: xk - x* < }(i).

According to the optimality condition (\ref{appendix eq: optimality 1}), we have 
\begin{flalign}
 & \text{dist}\big( \B^\top \boldsymbol{\lambda}^{k+1}, \partial h(\y^{k+1}) \big) = \| \rho \B^\top \hat{\A} ( \hat{\x}^{k+1} - \hat{\x}^k) \|_2
 \nonumber
 \\
 \Rightarrow & ~
 \text{dist}^2\big( \B^\top \boldsymbol{\lambda}^{k+1}, \partial h(\y^{k+1})\big) = \| \hat{\x}^{k+1} - \hat{\x}^{k} \|_{\rho^2 \hat{\A}^\top \B \B^\top \hat{\A}}^2 \leq \xi_{\text{max}}(\rho^2 \hat{\A}^\top \B \B^\top \hat{\A}) \| \hat{\x}^{k+1} - \hat{\x}^{k} \|_2^2 
 = O(\frac{1}{\epsilon^2}) \| \hat{\x}^{k+1} - \hat{\x}^{k} \|_2^2
 \nonumber
 \\
 \Rightarrow & ~
 \text{dist}\big( \B^\top \boldsymbol{\lambda}^{k+1}, \partial h(\y^{k+1}) \big) 
 \leq O(\frac{1}{\epsilon})  \cdot \| \hat{\x}^{k+1} - \hat{\x}^{k} \|_2
 \label{eq: | dist(By, partial h(y)) |^2 < O(epsilon)}
\end{flalign}

According to the optimality condition (\ref{appendix eq: optimality 3}) and the relation (\ref{appendix eq: dual_residual^2 <= x_residual^2}), we obtain that 
\begin{flalign}
   \| \hat{\A} \hat{\x}^{k+1} - \B \y^{k+1} \|_2 
   = 
    \frac{1}{\rho}\| \boldsymbol{\lambda}^{k +1} - \boldsymbol{\lambda}^{k} \|_2 
    \leq
    \frac{1}{\rho} \| \hat{\x}^{k+1} - \hat{\x}^k \|_2
    \leq
    \epsilon \cdot \| \hat{\x}^{k+1} - \hat{\x}^k \|_2 \leq O(\frac{1}{\epsilon})  \cdot \| \hat{\x}^{k+1} - \hat{\x}^{k} \|_2.
    \label{eq: | Ax - By | < O(epsilon)}
\end{flalign}

According to Lemma \ref{appendix lemma: xk - x* < }, we have 
\begin{enumerate}
    \item[(ii)] If $p \in (0, \frac{1}{2}]$, then 
    \[
    O(\frac{1}{\epsilon})  \cdot \| \hat{\x}^{k+1} - \hat{\x}^{k} \|_2 \leq O(\frac{1}{\epsilon}) \tau^k \leq O(\epsilon) 
    ~ \Rightarrow ~
    k \geq O\big(\log_{\frac{1}{\tau}}(\frac{1}{\epsilon})^2\big),
    \]
    which means that when $k \geq O\big(\log_{\frac{1}{\tau}}(\frac{1}{\epsilon})^2\big)$, we will obtain the $\epsilon$-KKT solution to the perturbed LS-LP problem, \ie, the $\epsilon$-KKT solution to the original LS-LP problem. 
    \item[(iii)] If $p \in ( \frac{1}{2}, 1)$, then 
    \[
    O(\frac{1}{\epsilon})  \cdot \| \hat{\x}^{k+1} - \hat{\x}^{k} \|_2 \leq O(\frac{1}{\epsilon}) k^{- \frac{1-p}{2p-1}} \leq O(\epsilon) 
    ~ \Rightarrow ~
    k \geq O\big( (\frac{1}{\epsilon})^{\frac{4p-2}{1-p}}\big),
    \]
    which means that when $k \geq O\big( (\frac{1}{\epsilon})^{\frac{4p-2}{1-p}}\big)$, we will obtain the $\epsilon$-KKT solution to the perturbed LS-LP problem, \ie, the $\epsilon$-KKT solution to the original LS-LP problem.
\end{enumerate}
\end{proof}

\twocolumn

\bibliographystyle{spmpsci}
\bibliography{BYWU_bib_IJCV}

\begin{thebibliography}{10}
\providecommand{\url}[1]{{#1}}
\providecommand{\urlprefix}{URL }
\expandafter\ifx\csname urlstyle\endcsname\relax
  \providecommand{\doi}[1]{DOI~\discretionary{}{}{}#1}\else
  \providecommand{\doi}{DOI~\discretionary{}{}{}\begingroup
  \urlstyle{rm}\Url}\fi

\bibitem{attouch2009convergence}
Attouch, H., Bolte, J.: On the convergence of the proximal algorithm for
  nonsmooth functions involving analytic features.
\newblock Mathematical Programming \textbf{116}(1-2), 5--16 (2009)

\bibitem{attouch2010proximal}
Attouch, H., Bolte, J., Redont, P., Soubeyran, A.: Proximal alternating
  minimization and projection methods for nonconvex problems: An approach based
  on the kurdyka-lojasiewicz inequality.
\newblock Mathematics of Operations Research \textbf{35}(2), 438--457 (2010)

\bibitem{ICM-1986}
Besag, J.: On the statistical analysis of dirty pictures.
\newblock Journal of the Royal Statistical Society. Series B (Methodological)
  pp. 259--302 (1986)

\bibitem{bolte2007lojasiewicz}
Bolte, J., Daniilidis, A., Lewis, A.: The {\l}ojasiewicz inequality for
  nonsmooth subanalytic functions with applications to subgradient dynamical
  systems.
\newblock SIAM Journal on Optimization \textbf{17}(4), 1205--1223 (2007)

\bibitem{bolte2007clarke}
Bolte, J., Daniilidis, A., Lewis, A., Shiota, M.: Clarke subgradients of
  stratifiable functions.
\newblock SIAM Journal on Optimization \textbf{18}(2), 556--572 (2007)

\bibitem{ADMM-boyd-2011}
Boyd, S., Parikh, N., Chu, E., Peleato, B., Eckstein, J.: Distributed
  optimization and statistical learning via the alternating direction method of
  multipliers.
\newblock Foundations and Trends{\textregistered} in Machine Learning
  \textbf{3}(1), 1--122 (2011)

\bibitem{PIC-2011}
Elidan, G., Globerson, A., Heinemann, U.: Pascal 2011 probabilistic inference
  challenge.
\newblock http://www.cs.huji.ac.il/project/PASCAL/index.php  (2012)

\bibitem{Bethe-ADMM-UAI-2013}
Fu, Q., Banerjee, H.W.A.: Bethe-admm for tree decomposition based parallel map
  inference.
\newblock In: Uncertainty in Artificial Intelligence, p. 222. Citeseer (2013)

\bibitem{MPLP-NIPS-2007}
Globerson, A., Jaakkola, T.S.: Fixing max-product: Convergent message passing
  algorithms for map lp-relaxations.
\newblock In: NIPS, pp. 553--560 (2008)

\bibitem{scene-data-iccv-2009}
Gould, S., Fulton, R., Koller, D.: Decomposing a scene into geometric and
  semantically consistent regions.
\newblock In: ICCV, pp. 1--8. IEEE (2009)

\bibitem{protein-data-2006}
Jaimovich, A., Elidan, G., Margalit, H., Friedman, N.: Towards an integrated
  protein--protein interaction network: A relational markov network approach.
\newblock Journal of Computational Biology \textbf{13}(2), 145--164 (2006)

\bibitem{Lagrangian-relaxation-MAP-2007}
Johnson, J.K., Malioutov, D.M., Willsky, A.S.: Lagrangian relaxation for map
  estimation in graphical models.
\newblock arXiv preprint arXiv:0710.0013  (2007)

\bibitem{proximal-regularization-ICML-2010}
Jojic, V., Gould, S., Koller, D.: Accelerated dual decomposition for map
  inference.
\newblock In: ICML, pp. 503--510 (2010)

\bibitem{opengm2-ijcv-2015}
Kappes, J.H., Andres, B., Hamprecht, F.A., Schn{\"o}rr, C., Nowozin, S., Batra,
  D., Kim, S., Kausler, B.X., Kr{\"o}ger, T., Lellmann, J., et~al.: A
  comparative study of modern inference techniques for structured discrete
  energy minimization problems.
\newblock International Journal of Computer Vision \textbf{115}, 155--184
  (2015)

\bibitem{DD-SG-2012}
Kappes, J.H., Savchynskyy, B., Schn{\"o}rr, C.: A bundle approach to efficient
  map-inference by lagrangian relaxation.
\newblock In: CVPR, pp. 1688--1695. IEEE (2012)

\bibitem{KKT-1939}
Karush, W.: Minima of functions of several variables with inequalities as side
  constraints.
\newblock M. Sc. Dissertation. Dept. of Mathematics, Univ. of Chicago  (1939)

\bibitem{cutting-plane-1960}
Kelley, J.: The cutting-plane method for solving convex programs.
\newblock Journal of the Society for Industrial and Applied Mathematics pp.
  703--712 (1960)

\bibitem{koller-pgm-2009}
Koller, D., Nir, F. (eds.): Probabilistic Graphical Models: Principles and
  Techniques.
\newblock MIT Press, Cambridge, MA (2009)

\bibitem{TRWS-PAMI-2006}
Kolmogorov, V.: Convergent tree-reweighted message passing for energy
  minimization.
\newblock IEEE transactions on pattern analysis and machine intelligence
  \textbf{28}(10), 1568--1583 (2006)

\bibitem{PSDD-ICCV-2007}
Komodakis, N., Paragios, N., Tziritas, G.: Mrf optimization via dual
  decomposition: Message-passing revisited.
\newblock In: ICCV, pp. 1--8. IEEE (2007)

\bibitem{BP-2001}
Kschischang, F.R., Frey, B.J., Loeliger, H.A.: Factor graphs and the
  sum-product algorithm.
\newblock IEEE Transactions on information theory \textbf{47}(2), 498--519
  (2001)

\bibitem{KKT-2014}
Kuhn, H.W., Tucker, A.W.: Nonlinear programming.
\newblock In: Traces and emergence of nonlinear programming, pp. 247--258.
  Springer (2014)

\bibitem{branch-and-bound-1960}
Land, A.H., Doig, A.G.: An automatic method of solving discrete programming
  problems.
\newblock Econometrica: Journal of the Econometric Society pp. 497--520 (1960)

\bibitem{SDP-relaxation-2002}
Laurent, M., Rendl, F.: Semidefinite programming and integer programming.
\newblock Centrum voor Wiskunde en Informatica (2002)

\bibitem{admm-nonconvex-siam-2015}
Li, G., Pong, T.K.: Global convergence of splitting methods for nonconvex
  composite optimization.
\newblock SIAM Journal on Optimization \textbf{25}(4), 2434--2460 (2015)

\bibitem{lojasiewicz1963propriete}
Lojasiewicz, S.: Une propri{\'e}t{\'e} topologique des sous-ensembles
  analytiques r{\'e}els.
\newblock Les {\'e}quations aux d{\'e}riv{\'e}es partielles \textbf{117},
  87--89 (1963)

\bibitem{AD3-ICML-2011}
Martins, A.F., Figeuiredo, M.A., Aguiar, P.M., Smith, N.A., Xing, E.P.: An
  augmented lagrangian approach to constrained map inference.
\newblock In: ICML (2011)

\bibitem{AD3-JMLR-2015}
Martins, A.F., Figueiredo, M.A., Aguiar, P.M., Smith, N.A., Xing, E.P.: Ad3:
  Alternating directions dual decomposition for map inference in graphical
  models.
\newblock Journal of Machine Learning Research \textbf{16}, 495--545 (2015)

\bibitem{dual-ADMM-ECML-2011}
Meshi, O., Globerson, A.: An alternating direction method for dual map lp
  relaxation.
\newblock In: Joint European Conference on Machine Learning and Knowledge
  Discovery in Databases, pp. 470--483. Springer (2011)

\bibitem{smooth-strong-MAP-2015}
Meshi, O., Mahdavi, M., Schwing, A.: Smooth and strong: Map inference with
  linear convergence.
\newblock In: NIPS, pp. 298--306 (2015)

\bibitem{daoopt-2012}
Otten, L., Dechter, R.: Anytime and/or depth-first search for combinatorial
  optimization.
\newblock AI Communications \textbf{25}(3), 211--227 (2012)

\bibitem{DAOOPT-2012-details}
Otten, L., Ihler, A., Kask, K., Dechter, R.: Winning the pascal 2011 map
  challenge with enhanced and/or branch-and-bound.
\newblock In: IN NIPS WORKSHOP DISCML. Citeseer (2012)

\bibitem{ADSal-2012}
Savchynskyy, B., Schmidt, S., Kappes, J., Schn{\"o}rr, C.: Efficient mrf energy
  minimization via adaptive diminishing smoothing.
\newblock arXiv preprint arXiv:1210.4906  (2012)

\bibitem{epsilon-descent-nips-2012}
Schwing, A.G., Hazan, T., Pollefeys, M., Urtasun, R.: Globally convergent dual
  map lp relaxation solvers using fenchel-young margins.
\newblock In: NIPS, pp. 2384--2392 (2012)

\bibitem{FW-epsilon-descent-icml-2014}
Schwing, A.G., Hazan, T., Pollefeys, M., Urtasun, R.: Globally convergent
  parallel map lp relaxation solver using the frank-wolfe algorithm.
\newblock In: ICML, pp. 487--495 (2014)

\bibitem{MAP-LP-thesis-2010}
Sontag, D.A.: Approximate inference in graphical models using lp relaxations.
\newblock Ph.D. thesis, Massachusetts Institute of Technology (2010)

\bibitem{mplp-c-uai-2012}
Sontag, D.A., Li, Y., et~al.: Efficiently searching for frustrated cycles in
  map inference.
\newblock In: UAI (2012)

\bibitem{TRBP-2005}
Wainwright, M.J., Jaakkola, T.S., Willsky, A.S.: Map estimation via agreement
  on trees: message-passing and linear programming.
\newblock IEEE transactions on information theory \textbf{51}(11), 3697--3717
  (2005)

\bibitem{variational-inference-book-2008}
Wainwright, M.J., Jordan, M.I.: Graphical models, exponential families, and
  variational inference.
\newblock Foundations and Trends in Machine Learning \textbf{1}(1-2), 1--305
  (2008)

\bibitem{wotao-yin-arxiv-2015}
Wang, Y., Yin, W., Zeng, J.: Global convergence of admm in nonconvex nonsmooth
  optimization.
\newblock Journal of scientific programming  (2017)

\bibitem{WU-lpbox-ADMM-PAMI-2018}
Wu, B., Ghanem, B.: $\ell_p$-box admm: A versatile framework for integer
  programming.
\newblock IEEE transactions on pattern analysis and machine intelligence
  \textbf{41}(7), 1695--1708 (2019)

\bibitem{kl-examples-2013}
Xu, Y., Yin, W.: A block coordinate descent method for regularized multiconvex
  optimization with applications to nonnegative tensor factorization and
  completion.
\newblock SIAM Journal on imaging sciences \textbf{6}(3), 1758--1789 (2013)

\end{thebibliography}

\end{document}